\title{\LARGE \bf  Finite-Sample Analysis of Off-Policy TD-Learning via Generalized Bellman Operators}
\author{
{\normalsize Zaiwei Chen}\thanks{Georgia Institute of Technology, {\color{blue}zchen458@gatech.edu}}
\and
{\normalsize Siva Theja Maguluri}\thanks{Georgia Institute of Technology, {\color{blue}siva.theja@gatech.edu }}
\and
{\normalsize Sanjay Shakkottai\thanks{The University of Texas at Austin, {\color{blue}sanjay.shakkottai@utexas.edu}}}
\and
{\normalsize Karthikeyan Shanmugam}\thanks{IBM Research NY, {\color{blue}Karthikeyan.Shanmugam2@ibm.com}}
}
\date{}
\begin{document}
\maketitle

\begin{abstract}
In temporal difference (TD) learning, off-policy sampling is known to be more practical than on-policy sampling, and by decoupling learning from data collection, it enables data reuse. It is known that policy evaluation (including multi-step off-policy importance sampling) has the interpretation of solving a generalized Bellman equation. In this paper, we derive finite-sample bounds for any general off-policy TD-like stochastic approximation algorithm that solves for the fixed-point of this generalized Bellman operator. Our key step is to show that the generalized Bellman operator is simultaneously a contraction mapping with respect to a weighted $\ell_p$-norm for each $p$ in $[1,\infty)$, with a common contraction factor.

Off-policy TD-learning is known to suffer from  high variance due to the product of importance sampling ratios. A number of algorithms (e.g. $Q^\pi(\lambda)$, Tree-Backup$(\lambda)$, Retrace$(\lambda)$, and $Q$-trace) have been proposed in the literature to address this issue. Our results immediately imply finite-sample bounds of these algorithms. In particular, we provide first-known finite-sample guarantees for $Q^\pi(\lambda)$, Tree-Backup$(\lambda)$, and Retrace$(\lambda)$, and improve the best known bounds of $Q$-trace in \citep{chen2021finite}. Moreover, we show the bias-variance trade-offs in each of these algorithms.
\end{abstract}

\section{Introduction}\label{sec:intro}
Reinforcement learning (RL) demonstrated its success in learning  effective policies for a variety of decision making problems such as autonomous driving \citep{shalev2016safe,sallab2017deep}, recommender systems \citep{aggarwal2016recommender,zou2019reinforcement}, and game-related problems \citep{silver2017mastering,zhang2020learning,nowe2012game}. In RL, there is an important sub-problem -- called the policy evaluation problem -- of estimating the expected long term reward of a given policy.

The policy evaluation problem is usually solved with the temporal difference (TD) learning method \citep{sutton1988learning}. A key ingredient in TD-learning is the policy used to collect samples (called the behavior policy). Ideally, we want to generate samples from the target policy whose value function we want to estimate, and this is called on-policy sampling. However, in many cases such on-policy sampling is not possible due to practical reasons, and hence we need to work with historical data that is generated by a possibly different policy (i.e., off-policy sampling). For example, in high stake applications such as clinic trials \citep{zhao2011reinforcement} and health care \citep{gottesman2020interpretable}, it is not practically possible to re-collect data every time we need to evaluate the performance of a chosen policy. Although off-policy sampling is more practical than on-policy sampling, it is more challenging to analyze and is known to have high variance \citep{glynn1989importance}, which is due to the presence of the product of the importance sampling ratios, and is a fundamental difficulty in off-policy learning. 

To overcome this difficulty, many variants of off-policy TD-learning algorithms have been proposed in the literature, such as the $Q^\pi(\lambda)$ algorithm \citep{harutyunyan2016q}, the Tree-Backup$(\lambda)$ (henceforth denoted by TB$(\lambda)$) algorithm \citep{precup2000eligibility}, the Retrace$(\lambda)$ algorithm \citep{munos2016safe}, and the $Q$-trace algorithm \citep{chen2021finite}, etc.

\subsection{Main Contributions}
In this work, we establish finite-sample bounds of a general $n$-step off-policy TD-learning algorithm that also subsumes several algorithms presented in the literature. The key step is to show that such algorithm can be modeled as a Markovian stochastic approximation (SA) algorithm for solving a generalized Bellman equation.
We present sufficient conditions under which the generalized Bellman operator is contractive with respect to a weighted $\ell_p$-norm for every $p\in [1,\infty)$, with a uniform contraction factor for all $p$. Our result shows that the sample complexity scales as $\Tilde{\mathcal{O}}(\epsilon^{-2})$, where $\epsilon$ is the required accuracy \footnote{In the $\Tilde{\mathcal{O}}(\cdot)$ notation, we ignore all the polylogarithmic terms.}. It also involves a factor that depends on the problem parameters, in particular, the generalized importance sampling ratios, and explicitly demonstrates the bias-variance trade-off. 

Our result immediately gives finite-sample guarantees for variants of multi-step off-policy TD-learning algorithms including $Q^\pi(\lambda)$, TB$(\lambda)$, Retrace$(\lambda)$, and $Q$-trace. For $Q^\pi(\lambda)$, TB$(\lambda)$, and Retrace$(\lambda)$, we establish the first-known results in the literature, while for $Q$-trace, we improve the best known results in \citep{chen2021finite} in terms of the dependency on the size of the state-action space. The weighted $\ell_p$-norm contraction property with a uniform contraction factor for all $p\in [1,\infty)$ is crucial for us to establish the improved sample complexity. Based on the finite-sample bounds, we show that all four algorithms overcome the high variance issue in Vanilla off-policy TD-learning, but their convergence rates are all affected to varying degrees.

\subsection{Generalized Bellman Operator and Stochastic Approximation}\label{subsec:intuition}
In this section, we illustrate the interpretation of off-policy multi-step TD-learning as an SA algorithm for solving a generalized Bellman equation. Consider the policy evaluation problem where the goal is to estimate the state-action value function $Q^\pi$ of a given policy $\pi$. See Section \ref{subsec:background} for more details. In the simplest setting where TD$(0)$ with on-policy sampling is employed, it is well known that the algorithm is an SA algorithm for solving the Bellman equation $Q=\mathcal{H}_\pi(Q)$, where $\mathcal{H}_\pi(\cdot)$ is the Bellman operator. The generalized Bellman operator $\mathcal{B}(\cdot)$ we consider in this paper is defined by:
\begin{align}\label{eq:Bellman_general}
    \mathcal{B}(Q)=\mathcal{T}(\mathcal{H}(Q)-Q)+Q,
\end{align}
where $\mathcal{T}(\cdot)$ and $\mathcal{H}(\cdot)$ are two auxiliary operators. In the special case where $\mathcal{T}(\cdot)$ is the identity operator and $\mathcal{H}(\cdot)$ is the Bellman operator $\mathcal{H}_\pi(\cdot)$, the generalized Bellman operator $\mathcal{B}(\cdot)$ reduces to the regular Bellman operator $\mathcal{H}_\pi(\cdot)$. Note that any fixed point of $\mathcal{H}(\cdot)$ is also a fixed point of $\mathcal{B}(\cdot)$, as long as $\mathcal{T}(\cdot)$ is such that $\mathcal{T}(\bm{0})=\bm{0}$. Thus, the operator $\mathcal{H}(\cdot)$ controls the fixed-point of the generalized Bellman operator $\mathcal{B}(\cdot)$, and as we will see later, the operator $\mathcal{T}(\cdot)$ can be used to control its contraction properties. 

To further understand the operator $\mathcal{B}(\cdot)$, we demonstrate in the following that both on-policy $n$-step TD and TD$(\lambda)$ can be viewed as SA algorithms for solving the generalized Bellman equation $\mathcal{B}(Q)=Q$, with different auxiliary operators $\mathcal{T}(\cdot)$ and $\mathcal{H}(\cdot)$.

On-policy $n$-step TD is designed to solve the $n$-step Bellman equation $Q=\mathcal{H}_\pi^n(Q)$. Since the Bellman operator $\mathcal{H}_\pi(\cdot)$ is explicitly given by $\mathcal{H}_\pi(\cdot)=R+\gamma P_\pi(\cdot)$, where $R$ is the reward vector, $\gamma$ is the discount factor, and $P_\pi$ is the transition probability matrix under policy $\pi$, the equation $Q=\mathcal{H}_\pi^n(Q)$ can be written as $Q=\sum_{i=0}^{n-1}(\gamma P_\pi)^iR+(\gamma P_\pi)^nQ$, which by reverse telescoping is equivalent to
\begin{align*}
    Q=\sum_{i=0}^{n-1}(\gamma P_\pi)^i(R+\gamma P_\pi Q-Q)+Q=\mathcal{T}(\mathcal{H}_\pi(Q)-Q)+Q,
\end{align*}
where $\mathcal{T}(Q)=\sum_{i=0}^{n-1}(\gamma P_\pi)^i Q$. Similarly, one can formulate the TD$(\lambda)$ Bellman equation in the form of $\mathcal{B}(Q)=Q$, where $\mathcal{T}(Q)=(1-\lambda)\sum_{i=0}^\infty\lambda^i\sum_{j=0}^{i-1}(\gamma P_\pi)^i Q$ and $\mathcal{H}(\cdot)=\mathcal{H}_\pi(\cdot)$.

In these examples, the operator $\mathcal{T}(\cdot)$ determines the contraction factor of $\mathcal{B}(\cdot)$ by controlling the degree of bootstrapping. In this work, we show that in addition to on-policy TD-learning, variants of off-policy TD-learning with multi-step bootstrapping and generalized importance sampling ratios can also be interpreted as SA algorithms for solving the generalized Bellman equation. Moreover, under some mild conditions, we show that the generalized Bellman operator $\mathcal{B}(\cdot)$ is a contraction mapping with respect to some weighted $\ell_p$-norm for any $p\in [1,\infty)$, with a common contraction factor. This enables us to establish finite-sample bounds of general multi-step off-policy TD-like algorithms. 

\subsection{Related Literature}\label{subsec:literature}
The TD-learning method was first proposed in \citep{sutton1988learning} for solving the policy evaluation problem. Since then, there is an increasing interest in theoretically understanding TD-learning and its variants.

\textit{On-Policy TD-Learning.} The most basic TD-learning method is the TD$(0)$ algorithm \citep{sutton1988learning}. Later it was extended to using multi-step bootstrapping (i.e., the $n$-step TD-learning algorithm \citep{watkins1989learning,cichosz1995fast,van2016true}), and using eligibility trace (i.e., the TD$(\lambda)$ algorithm \citep{sutton1988learning,singh1994learning}). The asymptotic convergence of TD-learning was established in \citep{tsitsiklis1994asynchronous,jaakkola1994convergence,dayan1992convergence}. As for finite-sample analysis, a unified Lyapunov approach is presented in \citep{chen2021lyapunov}. To overcome the curse of dimensionality in RL, TD-learning is usually incorporated with function approximation in practice. In the basic setting where a linear parametric architecture is used, the asymptotic convergence of TD-learning was established in \citep{tsitsiklis1997analysis}, and finite-sample bounds in \citep{bhandari2018finite,srikant2019finite,thoppe2015concentration,dalal2018finite}. Very recently, the convergence and finite-sample guarantee of TD-learning with neural network approximation were studied in \citep{cayci2021sample,cai2019neural}.

\textit{Off-Policy TD-Learning.} In the off-policy setting, since the samples are not necessarily generated by the target policy, usually importance sampling ratios (or ``generalized'' importance sampling ratios) are introduced in the TD-learning algorithm. The resulting algorithms are $Q^\pi(\lambda)$ \citep{precup2000eligibility}, TB$(\lambda)$ \citep{harutyunyan2016q}, Retrace$(\lambda)$ \citep{munos2016safe}, and $Q$-trace \citep{chen2021finite} (which is an extension of $V$-trace \citep{espeholt2018impala}), etc. The asymptotic convergence of these algorithms has been established in the papers in which they were proposed. To the best of our knowledge, finite-sample guarantees are established only for $Q$-trace and $V$-trace \citep{chen2021finite,chen2020finite,chen2021lyapunov}. In the function approximation setting, TD-learning with off-policy sampling and function approximation is a typical example of the deadly triad \citep{sutton2018reinforcement}.  In the presence of the deadly triad, the algorithm can be unstable \citep{sutton2018reinforcement,baird1995residual}. To achieve convergence, one needs to significantly modify the original TD-learning algorithm, resulting in two time-scale algorithms such as GTD \citep{maei2011gradient}, TDC \citep{sutton2009fast}, and emphatic TD \citep{sutton2016emphatic}, etc.

\subsection{Preliminaries}\label{subsec:background}
In this section, we cover the background of RL and the TD-learning method for solving the policy evaluation problem. The RL problem is usually modeled as a Markov decision process (MDP). In this work, we consider an MDP with a finite set of states $\mathcal{S}$, a finite set of actions $\mathcal{A}$, a set of unknown action dependent transition probability matrices $\mathcal{P}=\{P_a\in\mathbb{R}^{|\mathcal{S}|\times|\mathcal{S}|}\mid a\in\mathcal{A} \}$, an unknown reward function $\mathcal{R}:\mathcal{S}\times\mathcal{A}\mapsto[0,1]$, and a discount factor $\gamma\in (0,1)$. 

In order for an MDP to progress, we must specify the policy of selecting actions based on the state of the environment. Specifically, a policy $\pi$ is a mapping from the state-space to probability distributions supported on the action space, i.e., $\pi:\mathcal{S}\mapsto\Delta^{|\mathcal{A}|}$. The state-action value function $Q^\pi$ associated with a policy $\pi$ is defined by $Q^\pi(s,a)=\mathbb{E}_{\pi}[\sum_{k=0}^\infty\gamma^k\mathcal{R}(S_k,A_k)\mid S_0=s,A_0=a]$ for all $(s,a)$. The goal we consider in this paper is to estimate the state-action value function $Q^\pi$ of a given policy $\pi$.

Since the transition probabilities as well as the reward function are unknown, such state-action value function cannot be directly computed. The TD-learning algorithm is designed to estimate $Q^\pi$ using the SA method. Specifically, in TD-learning, we first collect a sequence of samples $\{(S_k,A_k)\}$ from the model using some behavior policy $\pi_b$. Then the value function $Q^\pi$ is iteratively estimated using the samples $\{(S_k,A_k)\}$. When $\pi_b=\pi$, the algorithm is called on-policy TD-learning, otherwise the algorithm is referred to as  off-policy TD-learning.

\section{Finite-Sample Analysis of General Off-Policy TD-Learning}\label{sec:RL}
In this section, we present our unified framework for finite-sample analysis of off-policy TD-learning algorithms using generalized importance sampling ratios and multi-step bootstrapping. The proofs of all technical results presented in this paper are provided in the Appendix.

\subsection{A Generic Model for Multi-Step Off-Policy TD-Learning}\label{subsec:algorithm}
Algorithm \ref{algorithm} presents our generic algorithm model. Due to off-policy sampling, the two functions $c,\rho:\mathcal{S}\times\mathcal{A}\mapsto\mathbb{R}_+$ are introduced in Algorithm \ref{algorithm} to serve as generalized importance sampling ratios in order to account for the discrepancy between the target policy $\pi$ and the behavior policy $\pi_b$. We denote $c_{\max}=\max_{s,a}c(s,a)$ and $\rho_{\max}=\max_{s,a}\rho(s,a)$. We next show how Algorithm \ref{algorithm} captures variants of off-policy TD-learning algorithms in the literature by using different generalized importance sampling ratios $c(\cdot,\cdot)$ and $\rho(\cdot,\cdot)$.

\begin{algorithm}[h]\caption{A Generic Algorithm for Multi-Step Off-Policy TD-Learning}\label{algorithm}
\begin{algorithmic}[1] 
	\STATE {\bfseries Input:} $K$, $\{\alpha_k\}$, $Q_0$, $\pi$, $\pi_b$, generalized importance sampling ratios $c,\rho:\mathcal{S}\times\mathcal{A}\mapsto\mathbb{R}_+$, and sample trajectory $\{(S_k,A_k)\}_{0\leq k\leq K+n}$ collected under the behavior policy $\pi_b$.
	\FOR{$k=0,1,\cdots,K-1$}
	\STATE $\alpha_k(s,a)=\alpha_k\mathbb{I}{{\{(s,a)=(S_k,A_k)\}}}$ for all $(s,a)$ 
	\STATE $\Delta(S_i,A_i,S_{i+1},A_{i+1},Q_k)=\mathcal{R}(S_i,A_i)+\gamma\rho(S_{i+1},A_{i+1})Q_k(S_{i+1},A_{i+1})-Q_k(S_i,A_i)$ for all $i\in \{k,k+1,...,k+n-1\}$. 
	\STATE $Q_{k+1}(s,a)=
	Q_k(s,a)+\alpha_k(s,a)\sum_{i=k}^{k+n-1}\gamma^{i-k}\prod_{j=k+1}^ic(S_j,A_j)\Delta(S_i,A_i,S_{i+1},A_{i+1},Q_k)$ for all $(s,a)$ 
	\ENDFOR
	\STATE\textbf{Output:} $Q_K$
\end{algorithmic}
\end{algorithm}

\begin{itemize}
    \item \textit{Vanilla IS.} When $c(s,a)=\rho(s,a)=\pi(a|s)/\pi_b(a|s)$ for all $(s,a)$, Algorithm \ref{algorithm} is the standard off-policy TD-learning with importance sampling  \citep{precup2000eligibility}. We will refer to this algorithm as Vanilla IS. Although Vanilla IS was shown to converge to $Q^\pi$ \citep{precup2000eligibility}, since the product of importance sampling ratios $\prod_{j=k+1}^i\pi(A_j|S_j)/\pi_b(A_j|S_j)$ is not controlled in any way, it suffers the most from high variance.
    \item \textit{The $Q^\pi(\lambda)$ Algorithm.} When $c(s,a)=\lambda $ and $\rho(s,a)=\pi(a|s)/\pi_b(a|s)$, Algorithm \ref{algorithm} is the $Q^\pi(\lambda)$ algorithm \citep{harutyunyan2016q}. The $Q^\pi(\lambda)$ algorithm overcomes the high variance issue in Vanilla IS by introducing the parameter $\lambda$. However, the algorithm converges only when $\lambda$ is sufficiently small \citep{munos2016safe}.
\item
\textit{The TB$(\lambda)$ Algorithm.} When $c(s,a)=\lambda \pi(a|s)$ and $\rho(s,a)=\pi(a|s)/\pi_b(a|s)$, we have the TB$(\lambda)$ algorithm \citep{precup2000eligibility}. 
The TB$(\lambda)$ algorithm also overcomes the high variance issue in Vanilla IS and is guaranteed to converge to $Q^\pi$ without needing any strong assumptions. However, as discussed in \citep{munos2016safe}, the TB$(\lambda)$ algorithm lacks sample efficiency as it does not effectively use the multi-step return.
\item
\textit{The Retrace$(\lambda)$ Algorithm.} When $c(s,a)=\lambda\min(1,\pi(a|s)/\pi_b(a|s))$ and $\rho(s,a)=\pi(a|s)/\pi_b(a|s)$, we have the Retrace$(\lambda)$ algorithm, which overcomes the high variance and converges to $Q^\pi$. The convergence rate of Retrace$(\lambda)$ is empirically observed to be better than TB$(\lambda)$ in \citep{munos2016safe}.
\item
\textit{The $Q$-trace Algorithm.} When $c(s,a)=\min(\bar{c},\pi(a|s)/\pi_b(a|s))$ and $\rho(s,a)=\min(\bar{\rho},\pi(a|s)/\pi_b(a|s))$, where $\bar{\rho}\geq \bar{c}$, Algorithm \ref{algorithm} is the $Q$-trace algorithm \citep{chen2021finite}. 
The $Q$-trace algorithm is an analog of the $V$-trace algorithm \citep{espeholt2018impala} in that $Q$-trace estimates the $Q$-function instead of the $V$-function.
The two truncation levels $\bar{c}$ and $\bar{\rho}$ in these algorithms separately control the variance and the asymptotic bias in the algorithm respectively.
Note that due to the truncation level $\bar{\rho}$, the algorithm no longer converges to $Q^\pi$, but to a biased limit point, denoted by $Q^{\pi,\rho}$. This will be elaborated in detail in Section \ref{sec:variants_off_policy}.
\end{itemize}

From now on, we focus on studying Algorithm \ref{algorithm}. We make the following assumption about the behavior policy $\pi_b$, which is fairly standard in off-policy TD-learning.

\begin{assumption}\label{as:MC}
The behavior policy $\pi_b$ satisfies $\pi_b(a|s)>0$ for all $(s,a)$. In addition, the Markov chain $\{S_k\}$ induced by the behavior policy $\pi_b$ is irreducible and aperiodic.
\end{assumption}

Irreducibility and aperiodicity together imply that the Markov chain $\{S_k\}$ has a unique stationary distribution, which we denote by $\kappa_{S}\in\Delta^{|\mathcal{S}|}$. Moreover, the Markov chain $\{S_k\}$ mixes geometrically fast in that there exist $C>0$ and $\sigma\in (0,1)$ such that $\max_{s\in\mathcal{S}}\|P^k(s,\cdot)-\kappa_S(\cdot)\|_{\text{TV}}\leq C\sigma^k$ for all $k\geq 0$, where $\|\cdot\|_{\text{TV}}$ is the total variation distance \citep{levin2017markov}. Let $\kappa_{SA}\in\Delta^{|\mathcal{S}||\mathcal{A}|}$ be such that $\kappa_{SA}(s,a)=\kappa_S(s)\pi_b(a|s)$ for all $(s,a)$. Note that $\kappa_{SA}\in\Delta^{|\mathcal{S}||\mathcal{A}|}$ is the stationary distribution of the Markov chain $\{(S_k,A_k)\}$ under the behavior policy $\pi_b$. Let $\mathcal{K}_S=\text{diag}(\kappa_S)\in\mathbb{R}^{|\mathcal{S}|\times|\mathcal{S}|}$, and let $\mathcal{K}_{SA}=\text{diag}(\kappa_{SA})\in\mathbb{R}^{|\mathcal{S}||\mathcal{A}|\times|\mathcal{S}||\mathcal{A}|}$. Denote the minimal (maximal) diagonal entries of $\mathcal{K}_S$ and $\mathcal{K}_{SA}$ by $\mathcal{K}_{S,\min}$ ($\mathcal{K}_{S,\max}$) and $\mathcal{K}_{SA,\min}$ ($\mathcal{K}_{S,\max}$) respectively.

\subsection{Identifying the Generalized Bellman Operator}\label{subsec:Bellman}
In this section, we identify the generalized Bellman equation which Algorithm \ref{algorithm} is trying to solve, and also the corresponding generalized Bellman operator and its asynchronous variant. Let $\mathcal{T}_c,\mathcal{H}_\rho:\mathbb{R}^{|\mathcal{S}||\mathcal{A}|}\mapsto\mathbb{R}^{|\mathcal{S}||\mathcal{A}|}$ be two operators defined by
\begin{align*}
    [\mathcal{T}_c(Q)](s,a)&=\sum_{i=0}^{n-1}\gamma^i\mathbb{E}_{\pi_b}\left[\prod_{j=1}^ic(S_j,A_j)Q(S_i,A_i)\;\middle|\; S_0=s,A_0=a\right],\quad \text{and}\\
    [\mathcal{H}_\rho(Q)](s,a)&=\mathcal{R}(s,a)+\gamma\mathbb{E}_{\pi_b}[\rho(S_{k+1},A_{k+1})Q(S_{k+1},A_{k+1})\mid S_k=s,A_k=a]
\end{align*}
for all $(s,a)$. Note that the operator $\mathcal{T}_c(\cdot)$ depends on the generalized importance sampling ratio $c(\cdot,\cdot)$, while the operator $\mathcal{H}_\rho(\cdot)$ depends on the generalized importance sampling ratio $\rho(\cdot,\cdot)$.

With $\mathcal{T}_c(\cdot)$ and $\mathcal{H}_\rho(\cdot)$ defined above, Algorithm \ref{algorithm} can be viewed as an asynchronous SA algorithm for solving the generalized Bellman equation $\mathcal{B}_{c,\rho}(Q)=Q$, where the generalized Bellman operator $\mathcal{B}_{c,\rho}(\cdot)$ is defined by
\begin{align*}
    \mathcal{B}_{c,\rho}(Q)=\mathcal{T}_c(\mathcal{H}_\rho(Q)-Q)+Q.
\end{align*} 
Since Algorithm \ref{algorithm} performs asynchronous update, using the terminology in \citep{chen2021lyapunov}, we further define the asynchronous variant $\Tilde{\mathcal{B}}_{c,\rho}(\cdot)$ of the generalized Bellman operator $\mathcal{B}_{c,\rho}(\cdot)$ by 
\begin{align}\label{def:ABO}
    \Tilde{\mathcal{B}}_{c,\rho}(Q):=\mathcal{K}_{SA}\mathcal{B}_{c,\rho}(Q)+(I-\mathcal{K}_{SA})Q=\mathcal{K}_{SA}\mathcal{T}_c(\mathcal{H}_\rho(Q)-Q)+Q.
\end{align}
Each component of the asynchronous generalized Bellman operator $\Tilde{\mathcal{B}}_{c,\rho}(\cdot)$ can be thought of as a convex combination with identity, where the weights are the stationary probabilities of visiting state-action pairs. This captures the fact that when performing asynchronous update, the corresponding component is updated only when the state-action pair is visited. It is clear from its definition that $\Tilde{\mathcal{B}}_{c,\rho}(\cdot)$ has the same fixed-points as $\mathcal{B}_{c,\rho}(\cdot)$ (provided that they exist). See \citep{chen2021lyapunov} for a more detailed explanation about asynchronous Bellman operators.

Under some mild conditions on the generalized importance sampling ratios $c(\cdot,\cdot)$ and $\rho(\cdot,\cdot)$, we will show in the next section that both the asynchronous generalized Bellman operator $\Tilde{\mathcal{B}}_{c,\rho}(\cdot)$ and the operator $\mathcal{H}_\rho(\cdot)$ are contraction mappings. Therefore, since $\mathcal{T}_c(\bm{0})=\bm{0}$, the operators $\mathcal{H}_\rho(\cdot)$, $\mathcal{B}_{c,\rho}(\cdot)$, $\Tilde{\mathcal{B}}_{c,\rho}(\cdot)$ all share the same unique fixed-point. Since the fixed-point of the operator $\mathcal{H}_\rho(\cdot)$ depends only on the generalized importance sampling ratio $\rho(\cdot,\cdot)$, but not on $c(\cdot,\cdot)$, we can flexibly choose $c(\cdot,\cdot)$ to control the variance while maintaining the fixed-point of the operator $\Tilde{\mathcal{B}}_{c,\rho}(\cdot)$. As we will see later, this is the key property used in designing variants of variance reduced $n$-step off-policy TD-learning algorithms such as $Q^\pi(\lambda)$, TB$(\lambda)$, and Retrace$(\lambda)$.

\subsection{Establishing the Contraction Property}\label{subsec:contraction}
In this section, we study the fixed-point and the contraction property of the asynchronous generalized Bellman operator $\Tilde{\mathcal{B}}_{c,\rho}(\cdot)$. We begin by introducing some notation. Let $D_c,D_\rho\in\mathbb{R}^{|\mathcal{S}||\mathcal{A}|\times|\mathcal{S}||\mathcal{A}|}$ be two diagonal matrices such that $D_c((s,a),(s,a))=\sum_{a\in\mathcal{A}}\pi_b(a|s)c(s,a)$ and $D_\rho((s,a),(s,a))=\sum_{a\in\mathcal{A}}\pi_b(a|s)\rho(s,a)$ for all $(s,a)$. We denote $D_{c,\min}$ ($D_{c,\max}$) and $D_{\rho,\min}$ ($D_{\rho,\max}$) as the minimal (maximal) diagonal entries of the matrices $D_c$ and $D_\rho$ respectively. 

In view of the definition of $\Tilde{\mathcal{B}}_{c,\rho}(\cdot)$ in Eq. (\ref{def:ABO}), any fixed-point of $\mathcal{H}_\rho(\cdot)$ must also be a fixed-point of $\Tilde{\mathcal{B}}_{c,\rho}(\cdot)$. We first study the fixed point of $\mathcal{H}_\rho(\cdot)$ by establishing its contraction property. 

\begin{proposition}\label{prop:H_rho}
Suppose that $D_{\rho,\max}<1/\gamma$. Then the operator $\mathcal{H}_\rho(\cdot)$ is a contraction mapping with respect to the $\ell_\infty$-norm, with contraction factor $\gamma D_{\rho,\max}$. In this case, the fixed-point $Q^{\pi,\rho}$ of $\mathcal{H}_\rho(\cdot)$ satisfies the following two inequalities:
\begin{enumerate}[(1)]
    \item $\|Q^\pi-Q^{\pi,\rho}\|_\infty\leq \frac{\gamma \max_{s\in\mathcal{S}}\sum_{a\in\mathcal{A}}|\pi(a|s)-\pi_b(a|s)\rho(s,a)|}{(1-\gamma)(1-\gamma D_{\rho,\max})}$,
    \item $\|Q^{\pi,\rho}\|_\infty\leq \frac{1}{1-\gamma D_{\rho,\max}}$.
\end{enumerate}
\end{proposition}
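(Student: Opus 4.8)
The plan is to prove the contraction of $\mathcal{H}_\rho(\cdot)$ directly from its definition and then read off the two fixed-point bounds. First I would write the conditional expectation explicitly: since the next state-action pair is drawn as $S_{k+1}\sim P_a(s,\cdot)$ and $A_{k+1}\sim\pi_b(\cdot\mid S_{k+1})$, for any $Q_1,Q_2$ the reward cancels and
\[
[\mathcal{H}_\rho(Q_1)-\mathcal{H}_\rho(Q_2)](s,a)=\gamma\sum_{s'}P_a(s,s')\sum_{a'}\pi_b(a'|s')\rho(s',a')\,(Q_1-Q_2)(s',a').
\]
The key observation is that the inner action-sum $\sum_{a'}\pi_b(a'|s')\rho(s',a')$ is precisely the diagonal entry $D_\rho((s',a'),(s',a'))$ and is therefore at most $D_{\rho,\max}$. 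Bounding $|(Q_1-Q_2)(s',a')|\le\|Q_1-Q_2\|_\infty$, factoring out this constant, and using $\sum_{s'}P_a(s,s')=1$ gives $\|\mathcal{H}_\rho(Q_1)-\mathcal{H}_\rho(Q_2)\|_\infty\le\gamma D_{\rho,\max}\|Q_1-Q_2\|_\infty$; the hypothesis $D_{\rho,\max}<1/\gamma$ makes the factor strictly less than $1$, so the Banach fixed-point theorem yields a unique fixed-point $Q^{\pi,\rho}$.

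For inequality (2) I would specialize the fixed-point relation $Q^{\pi,\rho}=\mathcal{H}_\rho(Q^{\pi,\rho})$, take absolute values, and bound the reward by $\mathcal{R}(s,a)\le 1$. The same row-sum estimate applied to the linear part yields $\|Q^{\pi,\rho}\|_\infty\le 1+\gamma D_{\rho,\max}\|Q^{\pi,\rho}\|_\infty$, and rearranging gives the claimed $1/(1-\gamma D_{\rho,\max})$.

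Inequality (1) is where the genuine work lies, and I expect it to be the main obstacle. Here I would compare the two fixed-point equations side by side: $Q^\pi$ satisfies the standard Bellman equation with the term $\sum_{a'}\pi(a'|s')Q^\pi(s',a')$ inside the expectation, whereas $Q^{\pi,\rho}$ satisfies the same equation with $\sum_{a'}\pi_b(a'|s')\rho(s',a')Q^{\pi,\rho}(s',a')$. Subtracting and inserting $\pm\sum_{a'}\pi_b(a'|s')\rho(s',a')Q^\pi(s',a')$ decomposes the difference into a policy-mismatch term $\sum_{a'}\big(\pi(a'|s')-\pi_b(a'|s')\rho(s',a')\big)Q^\pi(s',a')$ and a value-difference term $\sum_{a'}\pi_b(a'|s')\rho(s',a')\big(Q^\pi-Q^{\pi,\rho}\big)(s',a')$. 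The mismatch term is controlled using the standard bound $\|Q^\pi\|_\infty\le 1/(1-\gamma)$ and factoring out $\sum_{a'}|\pi(a'|s')-\pi_b(a'|s')\rho(s',a')|\le M$, where $M:=\max_s\sum_a|\pi(a|s)-\pi_b(a|s)\rho(s,a)|$; the value-difference term is again absorbed by the $D_{\rho,\max}$ row-sum bound. Taking the supremum over $(s,a)$ produces the self-referential inequality $\|Q^\pi-Q^{\pi,\rho}\|_\infty\le\tfrac{\gamma M}{1-\gamma}+\gamma D_{\rho,\max}\|Q^\pi-Q^{\pi,\rho}\|_\infty$, and solving for the norm yields exactly the stated bound.

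The delicate point throughout is the choice of the add-subtract term in inequality (1): it must isolate the $\pi$-versus-$\pi_b\rho$ discrepancy into a single norm-free coefficient $M$ while leaving a residual that feeds back into the same $\gamma D_{\rho,\max}$ contraction recursion. Once that split is set up correctly, every estimate reduces to the one elementary fact that the $\rho$-weighted action sums coincide with the diagonal of $D_\rho$ and are uniformly bounded by $D_{\rho,\max}$, which is precisely what drives all three conclusions.
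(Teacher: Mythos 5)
Your proposal is correct and takes essentially the same route as the paper: the identical $D_{\rho,\max}$ row-sum estimate gives the $\gamma D_{\rho,\max}$ contraction, and part (1) uses the same add--subtract of $[\mathcal{H}_\rho(Q^\pi)](s,a)$ (your insertion of $\pm\sum_{a'}\pi_b(a'|s')\rho(s',a')Q^\pi(s',a')$) together with $\|Q^\pi\|_\infty\leq 1/(1-\gamma)$ to obtain and then solve the self-referential inequality. The only cosmetic difference is part (2), where the paper writes the fixed-point equation in matrix form $Q^{\pi,\rho}=(I-\gamma P_{\pi_\rho}D_\rho)^{-1}R$ and bounds the induced $\ell_\infty$-norm of the inverse, while you rearrange the pointwise bound $\|Q^{\pi,\rho}\|_\infty\leq 1+\gamma D_{\rho,\max}\|Q^{\pi,\rho}\|_\infty$ --- the same estimate in different clothing.
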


Observe from Proposition \ref{prop:H_rho} (1) that when $\rho(s,a)=\pi(a|s)/\pi_b(a|s)$, which is the case for $Q^\pi(\lambda)$, TB$(\lambda)$, and Retrace$(\lambda)$, the unique fixed-point $Q^{\pi,\rho}$ is exactly the target value function $Q^\pi$. This agrees with the definition of the operator $\mathcal{H}_\rho(\cdot)$ in that it reduces to the regular Bellman operator $\mathcal{H}_\pi(\cdot)$ when $\rho(s,a)=\pi(a|s)/\pi_b(a|s)$ for all $(s,a)$. If $\rho(s,a)\neq \pi(a|s)/\pi_b(a|s)$ for some $(s,a)$, then in general the fixed-point of $\mathcal{H}_\rho(\cdot)$ is different from $Q^\pi$. See Appendix \ref{pf:prop:H_fixed-point} for more details. In that case, Proposition \ref{prop:H_rho} provides an error bound on the difference between the potentially biased limit $Q^{\pi,\rho}$ and $Q^\pi$. Such error bound will be useful for us to study the $Q$-trace algorithm in Section \ref{sec:variants_off_policy}. Proposition \ref{prop:H_rho} (2) can be viewed as an analog to the inequality that $\|Q^\pi\|_\infty\leq 1/(1-\gamma)$ for any policy $\pi$. Since $\mathcal{H}_\rho(\cdot)$ is no longer the Bellman operator $\mathcal{H}_\pi(\cdot)$, the corresponding upper bound on the size of its fixed-point $Q^{\pi,\rho}$ also changes. 

Note that Proposition \ref{prop:H_rho} guarantees the existence and uniqueness of the fixed-point of the operator $\mathcal{H}_\rho(\cdot)$, hence also ensures the existence of fixed-points of the asynchronous generalized Bellman operator $\Tilde{\mathcal{B}}_{c,\rho}(\cdot)$.
To further guarantee the uniqueness of the fixed-point of $\Tilde{\mathcal{B}}_{c,\rho}(\cdot)$, we establish its contraction property. We begin with the following definition.
\begin{definition}
Let $\{\mu_i\}_{1\leq i\leq d}$ be such that $\mu_i>0$ for all $i$. Then for any $x\in\mathbb{R}^d$, the weighted $\ell_p$-norm ($p\in [1,\infty)$) of $x$ with weights $\{\mu_i\}$ is defined by $\|x\|_{\mu,p}=(\sum_i\mu_i|x_i|^p)^{1/p}$. 
\end{definition}

We next establish the contraction property of the
operator $\Tilde{\mathcal{B}}_{c,\rho}(\cdot)$ in the following theorem. Let $\omega=\mathcal{K}_{SA,\min}f(\gamma D_{c,\min})(1-\gamma D_{\rho,\max})$, where the function $f:\mathbb{R}\mapsto\mathbb{R}$ is defined by $f(x)=n$ when $x=1$, and  $f(x)=\frac{1-x^n}{1-x}$ when $x\neq 1$. 

\begin{theorem}\label{thm:contraction}
Suppose $c(s,a)\leq \rho(s,a)$ for all $(s,a)$ and $D_{\rho,\max}<1/\gamma$. Then we have the following results.
\begin{enumerate}[(1)]
    \item For any $\theta\in (0,1)$, there exists a weight vector $\mu\in\Delta^{|\mathcal{S}||\mathcal{A}|}$ satisfying $\mu(s,a)\geq \frac{\omega(1-\theta)}{(1-\theta\omega)|\mathcal{S}||\mathcal{A}|}$ for all $(s,a)$ such that the operator $\Tilde{\mathcal{B}}_{c,\rho}(\cdot)$ is a contraction mapping with respect to $\|\cdot\|_{\mu,p}$ for any $p\in [1,\infty)$, with contraction factor $\gamma_c=(1-\omega)^{1-1/p}(1-\theta \omega)^{1/p}$.
    \item The operator $\Tilde{\mathcal{B}}_{c,\rho}(\cdot)$ is a contraction mapping with respect to $\|\cdot\|_\infty$, with contraction factor $\gamma_c=1-\omega$.
\end{enumerate}
\end{theorem}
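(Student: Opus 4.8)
The plan is to exploit the affine structure of $\Tilde{\mathcal{B}}_{c,\rho}(\cdot)$. Since the reward term cancels upon differencing, for any $Q_1,Q_2$ with $\delta=Q_1-Q_2$ we have $\Tilde{\mathcal{B}}_{c,\rho}(Q_1)-\Tilde{\mathcal{B}}_{c,\rho}(Q_2)=M\delta$, where $M=I+\mathcal{K}_{SA}\mathcal{T}_c(\gamma P_\rho-I)$ and $P_\rho$ is the nonnegative matrix representing the linear part of $\mathcal{H}_\rho(\cdot)$, whose row sums $\sum_{s'}P_a(s,s')D_\rho((s',\cdot))$ are at most $D_{\rho,\max}$. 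The whole theorem reduces to showing $M$ is a contraction in the relevant norms, so it suffices to understand the entries and row sums of $M$. First I would establish the crucial structural fact that $M$ is a \emph{nonnegative} matrix. Writing $[\mathcal{T}_c((\gamma P_\rho-I)\delta)](s,a)$ as $\sum_{i=0}^{n-1}\gamma^i\mathbb{E}_{\pi_b}[\prod_{j=1}^ic(S_j,A_j)(\gamma\rho(S_{i+1},A_{i+1})\delta(S_{i+1},A_{i+1})-\delta(S_i,A_i))]$ (pushing the $\mathcal{H}_\rho$ expectation inside by the tower property), I set $W_i=\gamma^i\prod_{j=1}^ic(S_j,A_j)$ and use $W_{i+1}=\gamma c(S_{i+1},A_{i+1})W_i$ to split $W_i\gamma\rho(S_{i+1},A_{i+1})=W_{i+1}+\gamma W_i(\rho-c)(S_{i+1},A_{i+1})$. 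The $W_{i+1}-W_i$ pieces telescope, leaving $[\mathcal{T}_c((\gamma P_\rho-I)\delta)](s,a)=\mathbb{E}_{\pi_b}[W_n\delta(S_n,A_n)]-\delta(s,a)+\sum_{i=0}^{n-1}\gamma\mathbb{E}_{\pi_b}[W_i(\rho-c)(S_{i+1},A_{i+1})\delta(S_{i+1},A_{i+1})]$, and substituting into $M$ gives $[M\delta](s,a)=(1-\kappa_{SA}(s,a))\delta(s,a)+\kappa_{SA}(s,a)(\mathbb{E}_{\pi_b}[W_n\delta(S_n,A_n)]+\sum_i\gamma\mathbb{E}_{\pi_b}[W_i(\rho-c)(S_{i+1},A_{i+1})\delta(S_{i+1},A_{i+1})])$. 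This is a nonnegative combination of the entries of $\delta$ precisely because $\rho\ge c$ and $\kappa_{SA}(s,a)\le1$; this is the only place the hypothesis $c\le\rho$ is used.

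Next I would bound the row sums of $M$. The row sum of $(\gamma P_\rho-I)$ at each $(s',a')$ is at most $\gamma D_{\rho,\max}-1<0$, and since $\mathcal{T}_c$ is a nonnegative operator whose row sums $\sum_{i=0}^{n-1}\gamma^i\mathbb{E}_{\pi_b}[\prod_{j=1}^ic(S_j,A_j)]$ are bounded below by $\sum_{i=0}^{n-1}(\gamma D_{c,\min})^i=f(\gamma D_{c,\min})$ (provable by induction on $i$ using $\mathbb{E}_{\pi_b}[\prod_{j=1}^ic(S_j,A_j)]\ge(D_{c,\min})^i$), the row sum of $\mathcal{T}_c(\gamma P_\rho-I)$ at $(s,a)$ is at most $-(1-\gamma D_{\rho,\max})f(\gamma D_{c,\min})$. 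Multiplying by $\kappa_{SA}(s,a)\ge\mathcal{K}_{SA,\min}$ and adding $1$ shows every row sum of $M$ is at most $1-\omega$. Since $M\ge0$, its $\ell_\infty$ operator norm equals its maximal row sum, which is exactly part (2).

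For part (1) I would construct the weight vector by a Neumann series. Because the row sums of $M$ are at most $1-\omega<1-\theta\omega$, the matrix $M/(1-\theta\omega)$ has spectral radius below $1$, so $\nu^\top=\frac{1}{|\mathcal{S}||\mathcal{A}|}\mathbf{1}^\top(I-M/(1-\theta\omega))^{-1}=\frac{1}{|\mathcal{S}||\mathcal{A}|}\mathbf{1}^\top\sum_{k\ge0}(M/(1-\theta\omega))^k$ is a well-defined nonnegative vector with $\nu^\top M\le(1-\theta\omega)\nu^\top$. Keeping only the $k=0$ term gives $\nu\ge\mathbf{1}/(|\mathcal{S}||\mathcal{A}|)$ entrywise, while $M\mathbf{1}\le(1-\omega)\mathbf{1}$ gives $\|\nu\|_1\le\frac{1}{|\mathcal{S}||\mathcal{A}|}\sum_{k\ge0}((1-\omega)/(1-\theta\omega))^k=\frac{1-\theta\omega}{\omega(1-\theta)}$; normalizing $\mu=\nu/\|\nu\|_1\in\Delta^{|\mathcal{S}||\mathcal{A}|}$ then yields both the claimed entrywise lower bound and the weighted column-sum inequality $\sum_i\mu_iM_{ij}\le(1-\theta\omega)\mu_j$. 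For the contraction itself I would apply Jensen's inequality to $t\mapsto t^p$ with weights $M_{ij}/r_i$ (where $r_i\le1-\omega$ is the $i$-th row sum) to get $|[M\delta]_i|^p\le r_i^{p-1}\sum_jM_{ij}|\delta_j|^p$; summing against $\mu_i$, swapping the order of summation, and inserting $r_i\le1-\omega$ together with the column-sum bound gives $\|M\delta\|_{\mu,p}^p\le(1-\omega)^{p-1}(1-\theta\omega)\|\delta\|_{\mu,p}^p$, i.e. the factor $\gamma_c=(1-\omega)^{1-1/p}(1-\theta\omega)^{1/p}$.

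The main obstacle is part (1): nonnegativity of $M$ must be coupled with two separate quantitative controls -- a row-sum bound feeding the Jensen factor $r_i^{p-1}$, and a weighted column-sum bound controlling the redistribution of mass across coordinates -- calibrated so that a single weight vector $\mu$ works simultaneously for every $p\in[1,\infty)$ with the stated uniform factor. Forcing the Neumann-series normalization to reproduce the exact entrywise lower bound $\frac{\omega(1-\theta)}{(1-\theta\omega)|\mathcal{S}||\mathcal{A}|}$, which prevents the weights from degenerating and is what is ultimately needed in the finite-sample analysis, is the delicate part; the $\ell_\infty$ claim in part (2) is comparatively routine once nonnegativity and the row-sum bound are in hand.
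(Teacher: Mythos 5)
Your proof is correct. The first half mirrors the paper exactly: your trajectory-level telescoping with $W_i=\gamma^i\prod_{j=1}^ic(S_j,A_j)$, isolating the nonnegative $(\rho-c)$ terms, is precisely the expectation form of the paper's matrix identity $A=(I-\mathcal{K}_{SA})+\mathcal{K}_{SA}\sum_{i=0}^{n-2}(\gamma P_{\pi_c}D_c)^i\gamma(P_{\pi_\rho}D_\rho-P_{\pi_c}D_c)+\mathcal{K}_{SA}(\gamma P_{\pi_c}D_c)^{n-1}\gamma P_{\pi_\rho}D_\rho$ used in Proposition \ref{prop:substochastic}, and your row-sum bound (via $\mathcal{T}_c\bm{1}\geq f(\gamma D_{c,\min})\bm{1}$ and $(\gamma P_\rho-I)\bm{1}\leq -(1-\gamma D_{\rho,\max})\bm{1}$) reproduces the paper's proof that $A$ is substochastic with modulus $\omega$, from which part (2) follows identically. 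Where you genuinely diverge is the analogue of Proposition \ref{prop:matrix_contraction}: the paper forms $M'=\frac{M}{1-\theta\omega}+\frac{\omega(1-\theta)}{1-\theta\omega}\frac{E}{d}$, dominates it by a stochastic matrix $M''$ with strictly positive entries, and takes $\mu$ to be the stationary distribution of $M''$, extracting the entrywise lower bound from $\mu^\top=\mu^\top M''\geq\mu^\top M'$; you instead define $\mu$ explicitly as the normalized resolvent $\frac{1}{d}\bm{1}^\top(I-M/(1-\theta\omega))^{-1}$, whose Neumann series converges because the row sums of $M$ are at most $1-\omega<1-\theta\omega$, reading off the lower bound from the $k=0$ term and the $\ell_1$ bound $\|\nu\|_1\leq\frac{1-\theta\omega}{\omega(1-\theta)}$ from $M^k\bm{1}\leq(1-\omega)^k\bm{1}$. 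Your construction is more elementary and constructive: it avoids both the existence of a dominating stochastic matrix and any appeal to stationary distributions of irreducible aperiodic chains, and it yields the sub-invariance $\mu^\top M\leq(1-\theta\omega)\mu^\top$ directly, which is all the concluding Jensen computation (identical in both proofs) actually uses -- the paper only exploits $\mu^\top M''=\mu^\top$ through the chain $\mu^\top M\leq(1-\theta\omega)\mu^\top M''=(1-\theta\omega)\mu^\top$. What the paper's stationary-distribution route buys in exchange is the endpoint case recorded in Proposition \ref{prop:matrix_contraction}: when $M$ is irreducible one may take $\theta=1$, since the positive stationary distribution of $M''$ still exists there, whereas your Neumann series for $M/(1-\omega)$ can diverge when the spectral radius equals $1-\omega$. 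Since Theorem \ref{thm:contraction} as stated only requires $\theta\in(0,1)$, this limitation costs you nothing for the statement at hand.
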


Consider Theorem \ref{thm:contraction} (1). Observe that we can further upper bound $\gamma_c=(1-\omega)^{1-1/p}(1-\theta \omega)^{1/p}$ by $1-\theta \omega$, which is independent of $p$ and is the uniform contraction factor we are going to use. Theorem \ref{thm:contraction} (2) can be viewed as an extension of Theorem \ref{thm:contraction} because $\lim_{p\rightarrow \infty}\|x\|_{\mu,p}=\|x\|_\infty$ for any $x\in\mathbb{R}^d$ and weight vector $\mu$, and $\lim_{p\rightarrow\infty}(1-\omega)^{1-1/p}(1-\theta \omega)^{1/p}=1-\omega$.

Theorem \ref{thm:contraction} is the key result for our finite-sample analysis, and we present its proof in the next section.
The weighted $\ell_p$-norm (especially the weighted $\ell_2$-norm) contraction property we established for the operator $\Tilde{\mathcal{B}}_{c,\rho}(\cdot)$ has a far-reaching impact even beyond the finite-sample analysis of tabular RL in this paper. Specifically, recall that the  key property used for establishing the convergence and finite-sample bound of on-policy TD-learning with \textit{linear function approximation} in the seminal work \citep{tsitsiklis1997analysis}
is that the corresponding Bellman operator is a contraction mapping not only with respect to the $\ell_\infty$-norm, but also with respect to a weighted $\ell_2$-norm. We establish the same property in the off-policy setting, and hence lay down the foundation for extending our results to the function approximation setting. This is an immediate future research direction.

\subsection{Proof of Theorem \ref{thm:contraction}}
We begin by explicitly computing the asynchronous generalized Bellman operator $\Tilde{\mathcal{B}}_{c,\rho}(\cdot)$. Let $\pi_c$ and $\pi_\rho$ be two policies defined by $\pi_c(a|s)=\frac{\pi_b(a|s)c(s,a)}{D_c((s,a),(s,a))}$ and $\pi_\rho(a|s)=\frac{\pi_b(a|s)\rho(s,a)}{D_\rho((s,a),(s,a))}$ for all $(s,a)$. Let $R\in\mathbb{R}^{|\mathcal{S}||\mathcal{A}|}$ be the reward vector defined by $R(s,a)=\mathcal{R}(s,a)$ for all $(s,a)$. For any policy $\pi'$, let $P_{\pi'}$ be the transition probability matrix of the Markov chain $\{(S_k,A_k)\}$ under $\pi'$, i.e., $P_{\pi'}((s,a), (s',a'))=P_a(s,s')\pi'(a'| s')$ for all state-action pairs $(s,a)$ and $(s',a')$.

\begin{proposition}\label{prop:compute_A}
The operator $\Tilde{\mathcal{B}}_{c,\rho}(\cdot)$ is explicitly given by $\Tilde{\mathcal{B}}_{c,\rho}(Q)=AQ+b$, where
\begin{align*}
    A=I-\mathcal{K}_{SA}\sum_{i=0}^{n-1}(\gamma P_{\pi_c}D_c)^i(I-\gamma P_{\pi_\rho}D_\rho ),\quad \text{and}\quad b=\mathcal{K}_{SA}\sum_{i=0}^{n-1}(\gamma P_{\pi_c}D_c)^iR.
\end{align*}
\end{proposition}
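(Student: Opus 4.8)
The plan is to reduce everything to two linear-algebraic rewritings, one for $\mathcal{H}_\rho(\cdot)$ and one for $\mathcal{T}_c(\cdot)$, after which the claimed formula falls out by substituting into the definition of $\Tilde{\mathcal{B}}_{c,\rho}(\cdot)$ in Eq.~(\ref{def:ABO}) and collecting terms. The workhorse I would record first is that the change-of-measure factors collapse cleanly: by the definitions of $\pi_c$ and $D_c$ we have $\pi_b(a|s)c(s,a)=\pi_c(a|s)D_c((s,a),(s,a))$ for every $(s,a)$, and likewise $\pi_b(a|s)\rho(s,a)=\pi_\rho(a|s)D_\rho((s,a),(s,a))$. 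Crucially, the diagonal entries of $D_c$ and $D_\rho$ depend on $s$ only, so multiplying the diagonal matrix $D_c$ (resp. $D_\rho$) on the right of $P_{\pi_c}$ (resp. $P_{\pi_\rho}$) reproduces exactly the $\pi_b$-expectation weighted by $c$ (resp. by $\rho$) against one transition.

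I would handle $\mathcal{H}_\rho(\cdot)$ first, since it is immediate. Expanding the conditional expectation in its definition as $\sum_{s'}P_a(s,s')\sum_{a'}\pi_b(a'|s')\rho(s',a')Q(s',a')$ and applying the collapse identity gives, in vector form, $\mathcal{H}_\rho(Q)=R+\gamma P_{\pi_\rho}D_\rho Q$, whence $\mathcal{H}_\rho(Q)-Q=R-(I-\gamma P_{\pi_\rho}D_\rho)Q$.

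For $\mathcal{T}_c(\cdot)$, the main step, I would prove by induction on $i\geq 0$ the identity
\[
\mathbb{E}_{\pi_b}\!\left[\prod_{j=1}^i c(S_j,A_j)\,Q(S_i,A_i)\;\middle|\;S_0=s,A_0=a\right]=[(P_{\pi_c}D_c)^iQ](s,a).
\]
The base case $i=0$ is the empty product equal to $Q(s,a)$. For the inductive step, condition on $(S_1,A_1)$: the first transition contributes a factor $P_a(s,s_1)\pi_b(a_1|s_1)c(s_1,a_1)$, which by the collapse identity equals $P_{\pi_c}((s,a),(s_1,a_1))\,D_c((s_1,a_1),(s_1,a_1))$, and the Markov property replaces the remaining product by the inductive hypothesis evaluated at $(s_1,a_1)$; summing over $(s_1,a_1)$ is exactly one application of $P_{\pi_c}D_c$ to $(P_{\pi_c}D_c)^{i-1}Q$. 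Folding in the $\gamma^i$ factors yields the linear representation $\mathcal{T}_c(Q)=\sum_{i=0}^{n-1}(\gamma P_{\pi_c}D_c)^iQ$.

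Finally I would substitute both formulas into $\Tilde{\mathcal{B}}_{c,\rho}(Q)=\mathcal{K}_{SA}\mathcal{T}_c(\mathcal{H}_\rho(Q)-Q)+Q$, obtaining
\[
\Tilde{\mathcal{B}}_{c,\rho}(Q)=\mathcal{K}_{SA}\sum_{i=0}^{n-1}(\gamma P_{\pi_c}D_c)^i\big(R-(I-\gamma P_{\pi_\rho}D_\rho)Q\big)+Q,
\]
and separate the $Q$-free and $Q$-linear parts to read off $b=\mathcal{K}_{SA}\sum_{i=0}^{n-1}(\gamma P_{\pi_c}D_c)^iR$ and $A=I-\mathcal{K}_{SA}\sum_{i=0}^{n-1}(\gamma P_{\pi_c}D_c)^i(I-\gamma P_{\pi_\rho}D_\rho)$. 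The one place demanding care is the inductive step for $\mathcal{T}_c(\cdot)$: I must track that the trajectory $\{(S_k,A_k)\}$ evolves under $\pi_b$ (so each $A_j$ is drawn from $\pi_b$, not from $\pi_c$), and it is precisely the reweighting from $\pi_b$ to $\pi_c$ together with the state-only dependence of $D_c$ that converts the behavior-policy expectation with $c$-weights into powers of $P_{\pi_c}D_c$. Everything else is bookkeeping.
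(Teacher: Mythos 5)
Your proof is correct and takes essentially the same route as the paper's: compute $\mathcal{H}_\rho(Q)=R+\gamma P_{\pi_\rho}D_\rho Q$ and $\mathcal{T}_c(Q)=\sum_{i=0}^{n-1}(\gamma P_{\pi_c}D_c)^i Q$ via the reweighting identity $\pi_b(a|s)c(s,a)=\pi_c(a|s)D_c((s,a),(s,a))$, then substitute into Eq.~(\ref{def:ABO}) and separate the $Q$-linear and $Q$-free parts. The only difference is one of detail: where the paper compresses the $\mathcal{T}_c$ computation into a single sentence invoking the Markov and tower properties, you spell out the corresponding induction on $i$, which is precisely the argument the paper leaves implicit.
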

In light of Proposition \ref{prop:compute_A}, to prove Theorem \ref{thm:contraction}, it is enough to study the matrix $A$. To proceed, we require the following definition.
\begin{definition}
Given $\beta\in [0,1]$, a matrix $M\in\mathbb{R}^{d\times d}$ is called a substochastic matrix with modulus $\beta$ if and only if $M_{ij}\geq 0$ for all $i,j$ and $\sum_{j}M_{ij}\leq 1-\beta$ for all $i$.
\end{definition}
\begin{remark}
Note that for any non-negative matrix $M$, we have $\|M\|_\infty=\max_{i}\sum_jM_{ij}$. Therefore, a matrix $M$ being a substochastic matrix with modulus $\beta$ automatically implies that $\|M\|_\infty\leq 1-\beta$.
\end{remark}
We next show in the following two propositions that (1) the matrix $A$ given in Proposition \ref{prop:compute_A} is a substochastic matrix with modulus $\omega$, and (2) for any substochastic matrix $M$ with a positive modulus, there exist weights $\{\mu_i\}$ such that the induced matrix norm $\|M\|_{\mu,p}$ is strictly less than $1$. These two results together immediately imply Theorem \ref{thm:contraction}.
\begin{proposition}\label{prop:substochastic}
Suppose that $c(s,a)\leq \rho(s,a)$ for all $(s,a)$ and $D_{\rho,\max}<1/\gamma$. Then the matrix $A$ given in Proposition \ref{prop:compute_A} is a substochastic matrix with modulus $\omega$, where $\omega=\mathcal{K}_{SA,\min}f(\gamma D_{c,\min})(1-\gamma D_{\rho,\max})$.
\end{proposition}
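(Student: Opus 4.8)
The plan is to verify the two defining properties of a substochastic matrix with modulus $\omega$ separately: entrywise nonnegativity $A_{ij}\ge 0$, and the row-sum bound $\sum_j A_{ij}\le 1-\omega$. Throughout I abbreviate $M_c:=\gamma P_{\pi_c}D_c$ and $M_\rho:=\gamma P_{\pi_\rho}D_\rho$, both of which are entrywise nonnegative. The single computation that drives everything is that, by the definitions of $\pi_c,\pi_\rho$, one has $\pi_c(a'|s')D_c((s',a'),(s',a'))=\pi_b(a'|s')c(s',a')$ and likewise $\pi_\rho(a'|s')D_\rho((s',a'),(s',a'))=\pi_b(a'|s')\rho(s',a')$; hence the off-diagonal transition blocks simplify and
\begin{align*}
(M_\rho-M_c)((s,a),(s',a'))=\gamma P_a(s,s')\pi_b(a'|s')\big(\rho(s',a')-c(s',a')\big).
\end{align*}
Because $P_a(s,s')\ge 0$, $\pi_b(a'|s')\ge 0$, and $c(s',a')\le\rho(s',a')$ by hypothesis, the matrix $M_\rho-M_c$ is entrywise nonnegative. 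This is the only place the assumption $c\le\rho$ enters, and it is the crux of the nonnegativity claim.

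For nonnegativity, the key trick is to rewrite $I-M_\rho=(I-M_c)+(M_c-M_\rho)$ inside the sum defining $A$ and telescope. Using $\sum_{i=0}^{n-1}M_c^i(I-M_c)=I-M_c^n$, Proposition \ref{prop:compute_A} gives
\begin{align*}
A=(I-\mathcal{K}_{SA})+\mathcal{K}_{SA}M_c^n+\mathcal{K}_{SA}\sum_{i=0}^{n-1}M_c^i(M_\rho-M_c).
\end{align*}
I would then argue each of the three summands is entrywise nonnegative: $I-\mathcal{K}_{SA}$ is a nonnegative diagonal matrix since $\kappa_{SA}$ is a probability distribution; $\mathcal{K}_{SA}M_c^n$ is a product of nonnegative matrices; and the last term is a sum of products of the nonnegative matrices $\mathcal{K}_{SA}$, $M_c^i$, and $M_\rho-M_c$. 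Summing nonnegative matrices yields $A\ge 0$ entrywise.

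For the row-sum bound I would evaluate $A\bm{1}$, where $\bm{1}$ is the all-ones vector, writing $A\bm{1}=\bm{1}-\mathcal{K}_{SA}\sum_{i=0}^{n-1}M_c^i(I-M_\rho)\bm{1}$. The vector $v:=(I-M_\rho)\bm{1}$ has entries equal to $1$ minus $\gamma$ times a convex combination (over next states, weighted by $P_a(s,\cdot)$) of the diagonal entries of $D_\rho$, so $v\ge(1-\gamma D_{\rho,\max})\bm{1}>\bm{0}$ by $D_{\rho,\max}<1/\gamma$. Next, since the row sums of $M_c$ are a similar convex combination of the diagonal entries of $\gamma D_c$, we have $M_c\bm{1}\ge\gamma D_{c,\min}\bm{1}$, and a short induction using $M_c\ge 0$ gives $M_c^i\bm{1}\ge(\gamma D_{c,\min})^i\bm{1}$; monotonicity of multiplication by the nonnegative matrix $M_c^i$ then yields $M_c^i v\ge(1-\gamma D_{\rho,\max})(\gamma D_{c,\min})^i\bm{1}$. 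Summing over $i$ and recognizing $f(\gamma D_{c,\min})=\sum_{i=0}^{n-1}(\gamma D_{c,\min})^i$, I obtain $\sum_{i=0}^{n-1}M_c^i(I-M_\rho)\bm{1}\ge(1-\gamma D_{\rho,\max})f(\gamma D_{c,\min})\bm{1}$. Multiplying by the diagonal matrix $\mathcal{K}_{SA}$ (whose entries are at least $\mathcal{K}_{SA,\min}$) and subtracting from $\bm{1}$ gives $A\bm{1}\le(1-\omega)\bm{1}$ with exactly $\omega=\mathcal{K}_{SA,\min}f(\gamma D_{c,\min})(1-\gamma D_{\rho,\max})$.

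I expect the main obstacle to be discovering the telescoping decomposition of $A$ into three manifestly nonnegative pieces; once $I-M_\rho$ is split as $(I-M_c)+(M_c-M_\rho)$, everything falls into place and the role of $c\le\rho$ becomes transparent. By contrast, the row-sum estimate is essentially a routine nonnegative-matrix monotonicity argument, with the only care needed being the inductive lower bound $M_c^i\bm{1}\ge(\gamma D_{c,\min})^i\bm{1}$ and the bookkeeping that produces the factor $f(\gamma D_{c,\min})$.
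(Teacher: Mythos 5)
Your proof is correct and takes essentially the same approach as the paper's: the same entrywise computation showing $P_{\pi_\rho}D_\rho-P_{\pi_c}D_c\geq 0$ (the only place $c\leq\rho$ enters), an algebraically equivalent decomposition of $A$ into manifestly nonnegative pieces (your $\mathcal{K}_{SA}M_c^n+\mathcal{K}_{SA}\sum_{i=0}^{n-1}M_c^i(M_\rho-M_c)$ regroups into the paper's $\mathcal{K}_{SA}\sum_{i=0}^{n-2}M_c^i(M_\rho-M_c)+\mathcal{K}_{SA}M_c^{n-1}M_\rho$), and the same row-sum estimate $A\bm{1}\leq(1-\omega)\bm{1}$ via $(I-\gamma P_{\pi_\rho}D_\rho)\bm{1}\geq(1-\gamma D_{\rho,\max})\bm{1}$ and $(\gamma P_{\pi_c}D_c)^i\bm{1}\geq(\gamma D_{c,\min})^i\bm{1}$. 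If anything, you make explicit the induction behind the latter bound, which the paper leaves implicit.
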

The condition $c(s,a)\leq \rho(s,a)$ ensures that the matrix $A$ is non-negative, and the condition $D_{\rho,\max}<1/\gamma$ ensures that the each row of the matrix $A$ sums up to at most $1-\omega$. Together they imply the substochasticity of $A$. The modulus $\omega$ is an important parameter for our finite-sample analysis. In view of Theorem \ref{thm:contraction}, we see that large modulus gives smaller (or better) contraction factor of $\Tilde{\mathcal{B}}_{c,\rho}(\cdot)$.
\begin{proposition}\label{prop:matrix_contraction}
For any substochastic matrix $M\in\mathbb{R}^{d\times d}$ with a positive modulus $\beta\in (0,1)$, for any $\theta\in (0,1)$, there exists a weight vector $\mu\in\Delta^{d}$ satisfying $\mu_i\geq \frac{\beta(1-\theta)}{(1-\theta\beta)d}$ for all $i$ such that $\|M\|_{\mu,p}\leq(1-\beta)^{1-1/p}(1-\theta\beta)^{1/p}$ for any $p\in [1,\infty)$. Furthermore, if $M$ is irreducible \footnote{A non-negative matrix is irreducible if and only if its associated graph is strongly connected \citep{berman1994nonnegative}.}, then we can choose $\theta=1$.
\end{proposition}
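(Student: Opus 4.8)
The plan is to collapse the entire family of weighted $\ell_p$-bounds onto a single linear inequality for the weight vector, and then to construct that vector explicitly by a resolvent.

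First I would record the elementary Jensen step that disposes of all $p$ at once. Fix any $\mu\in\Delta^d$ and let $r_i=\sum_j M_{ij}\le 1-\beta$ be the $i$-th row sum. For each $i$ with $r_i>0$ the weights $\{M_{ij}/r_i\}_j$ form a probability distribution, so convexity of $t\mapsto|t|^p$ gives $\big|\sum_j M_{ij}x_j\big|^p\le r_i^{p-1}\sum_j M_{ij}|x_j|^p\le(1-\beta)^{p-1}\sum_j M_{ij}|x_j|^p$ (the case $r_i=0$ being trivial). Summing against $\mu_i$ and interchanging the order of summation yields
\[
\|Mx\|_{\mu,p}^p\le(1-\beta)^{p-1}\sum_j(\mu^\top M)_j\,|x_j|^p .
\]
Hence if I can produce a $\mu$ obeying the componentwise inequality $\mu^\top M\le(1-\theta\beta)\mu^\top$, the right side is at most $(1-\beta)^{p-1}(1-\theta\beta)\|x\|_{\mu,p}^p$, and taking $p$-th roots gives exactly $\|M\|_{\mu,p}\le(1-\beta)^{1-1/p}(1-\theta\beta)^{1/p}$ for every $p\in[1,\infty)$ simultaneously. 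So the whole proposition reduces to exhibiting such a $\mu$ together with the stated lower bound on its entries.

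Second, I would build $\mu$ from a Neumann series. Set $\lambda=1-\theta\beta$. Since $\|M\|_\infty\le 1-\beta<\lambda$ we have $\|M/\lambda\|_\infty<1$, so the series converges and $\nu^\top:=\bm{1}^\top(I-M/\lambda)^{-1}=\sum_{k\ge 0}\lambda^{-k}\bm{1}^\top M^k$ is well defined with $\nu\ge\bm{1}>0$. The defining identity $\nu^\top(I-M/\lambda)=\bm{1}^\top$ rearranges to $\nu^\top M=\lambda\nu^\top-\lambda\bm{1}^\top\le\lambda\nu^\top$, which is precisely the inequality I need (the slack $\lambda\bm{1}^\top$ is nonnegative). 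Normalizing $\mu=\nu/\|\nu\|_1\in\Delta^d$ preserves it. For the entrywise lower bound, $\nu\ge\bm{1}$ gives $\nu_j\ge1$, while iterating $M\bm{1}\le(1-\beta)\bm{1}$ (using $M\ge0$) gives $\bm{1}^\top M^k\bm{1}\le(1-\beta)^k d$, so
\[
\|\nu\|_1=\sum_{k\ge0}\lambda^{-k}\bm{1}^\top M^k\bm{1}\le d\sum_{k\ge0}\Big(\tfrac{1-\beta}{\lambda}\Big)^k=d\,\frac{1-\theta\beta}{\beta(1-\theta)},
\]
whence $\mu_j\ge1/\|\nu\|_1\ge\frac{\beta(1-\theta)}{(1-\theta\beta)d}$, matching the claim.

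Third, for the irreducible case with $\theta=1$ the resolvent may diverge (all row sums can equal $1-\beta$, making the spectral radius equal $\lambda=1-\beta$), so there I would instead invoke Perron--Frobenius: an irreducible nonnegative matrix has a strictly positive left eigenvector $\mu$ whose eigenvalue is its spectral radius, and the spectral radius is at most $\|M\|_\infty\le1-\beta$; hence $\mu^\top M\le(1-\beta)\mu^\top$, which feeds the same Jensen step with $\lambda=1-\beta$ to give the uniform factor $1-\beta$, while strict positivity makes the (now vacuous) lower bound hold.

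The Jensen estimate is routine; the real content is recognizing that the entire $\ell_p$-family is governed by the one linear inequality $\mu^\top M\le(1-\theta\beta)\mu^\top$ and then guessing the right $\mu$. The main obstacle I anticipate is pinning down the constant in the entrywise lower bound: this forces the specific choice $\lambda=1-\theta\beta$ and a careful summation of $\sum_k((1-\beta)/\lambda)^k$, where the denominator $\lambda-(1-\beta)=\beta(1-\theta)$ is exactly what generates the claimed bound. Reconciling the $\theta\to1$ boundary behavior of the resolvent against the separate Perron--Frobenius construction in the irreducible case is the other point requiring care.
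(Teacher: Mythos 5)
Your proposal is correct, and it reaches the result by a genuinely different construction of the weight vector than the paper. Both arguments share the same reduction: the Jensen step shows that the entire family of bounds for all $p\in[1,\infty)$ follows from a single left-subinvariance inequality $\mu^\top M\leq(1-\theta\beta)\mu^\top$ (the paper runs exactly this Jensen computation, routing the inequality through $\mu^\top M\leq(1-\theta\beta)\mu^\top M''=(1-\theta\beta)\mu^\top$). Where you differ is in producing $\mu$: the paper smooths $M/(1-\theta\beta)$ with the uniform matrix $E/d$ to get a strictly positive substochastic $M'$, dominates it by a stochastic matrix $M''$ with positive entries, and takes $\mu$ to be the stationary distribution of $M''$, extracting the entrywise lower bound from $\mu^\top\geq\mu^\top\cdot\frac{\beta(1-\theta)}{1-\theta\beta}\frac{E}{d}$; you instead solve the subinvariance inequality directly by the resolvent $\nu^\top=\bm{1}^\top(I-M/\lambda)^{-1}$ with $\lambda=1-\theta\beta$, getting $\nu^\top M=\lambda\nu^\top-\lambda\bm{1}^\top\leq\lambda\nu^\top$ for free and the lower bound $\mu_j\geq\beta(1-\theta)/((1-\theta\beta)d)$ from $\nu\geq\bm{1}$ together with the geometric-series estimate $\|\nu\|_1\leq d(1-\theta\beta)/(\beta(1-\theta))$, which matches the paper's constant exactly. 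Your route is more explicit and self-contained — it avoids both the existence step for a dominating stochastic matrix and any appeal to stationary distributions, and it makes transparent why $\theta=1$ fails in general (the resolvent diverges as $\lambda$ approaches the spectral radius, consistent with the paper's counterexample $M=(1-\omega)[\bm{0},\dots,\bm{1}]$). The trade-off is in the irreducible case: the paper's construction handles $\theta=1$ uniformly by simply dropping the smoothing term, whereas you must invoke Perron--Frobenius separately to get a strictly positive left eigenvector with $\mu^\top M=\rho(M)\mu^\top\leq(1-\beta)\mu^\top$ (after normalizing it into $\Delta^d$); that invocation is sound, so both cases of the proposition are fully established.
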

The result of Proposition \ref{prop:matrix_contraction} further implies $\|M\|_{\mu,p}\leq 1-\theta \beta$, which is independent of the choice of $p$. This implies that $\Tilde{\mathcal{B}}_{c,\rho}(\cdot)$ is a uniform contraction mapping with respect to $\|\cdot\|_{\mu,p}$ for all $p\geq 1$. In general, for different $p$ and $p'$, an operator being a $\|\cdot\|_p$-norm contraction does not imply being a $\|\cdot\|_{p'}$-norm contraction. The reason that we have such a strong uniform contractive result is that the operator $\Tilde{\mathcal{B}}_{c,\rho}(\cdot)$ has a linear structure, and involves a substochastic matrix.

Note that Proposition \ref{prop:matrix_contraction} introduces the tunable parameter $\theta$. It is clear that large $\theta$ gives better contraction factor of $\Tilde{\mathcal{B}}_{c,\rho}(\cdot)$ but worse lower bound on the entries of the weight vector $\mu$. In general, when $M$ is not irreducible, we cannot hope to choose a weight vector $\mu\in\Delta^d$ with positive components and obtain $\|M\|_{\mu,p}\leq 1-\omega$. To see this, consider the example where $M=(1-\omega)[\bm{0},\bm{0},\cdots,\bm{1}]$, which is clearly a substochastic matrix with modulus $\omega$, but is not an irreducible matrix. For any weight vector $\mu\in\Delta^d$, we have $\|M\|_{\mu,p}=(1-\omega)\max_{x\in\mathbb{R}^d:\|x\|_{\mu,p}=1}|x_d|=(1-\omega)/\mu_d^{1/p}> 1-\omega$. However, by choosing $\mu_d$ close to unity, we can get $\|M\|_{\mu,p}$ arbitrarily close to $1-\omega$. This is analogous to choosing $\theta$ close to one in Proposition \ref{prop:matrix_contraction}. Since Proposition \ref{prop:matrix_contraction} is the major result for proving Theorem \ref{thm:contraction}, we provide its proof sketch in Section \ref{sec:proof}.

\subsection{Finite-Sample Convergence Guarantees}\label{subsec:sa_bound}
In light of Theorem \ref{thm:contraction}, Algorithm \ref{algorithm} is a Markovian SA algorithm for solving a fixed-point equation $\Tilde{\mathcal{B}}_{c,\rho}(Q)=Q$, where the fixed-point operator $\Tilde{\mathcal{B}}_{c,\rho}(\cdot)$ is a contraction mapping. Therefore, to establish the finite-sample bounds, we use a Lyapunov drift argument where we choose $W(Q)=\|Q-Q^{\pi,\rho}\|_{\mu,p}^2$ as the Lyapunov function. This leads to a finite-sample bound on $\mathbb{E}[\|Q_k-Q^{\pi,\rho}\|_{\mu,p}^2]$. However, since $\mu$ is unknown, to make the finite-sample bound independent of $\mu$, we use the lower bound on the components of $\mu$ provided in Theorem \ref{thm:contraction}, and also tune the parameters $p$ and $\theta$ to obtain a finite-sample bound on $\mathbb{E}[\|Q_k-Q^{\pi,\rho}\|_\infty^2]$. The fact that we have a uniform contraction factor $1-\theta \omega$ (cf. Theorem \ref{thm:contraction}) plays an important role in such tuning process.

To present the results, we need to introduce more notation. For any $\delta>0$, define $t_\delta(\mathcal{MC}_S)$ as the mixing time of the Markov chain $\{S_k\}$ (induced by $\pi_b$) with precision $\delta$, i.e., $t_\delta(\mathcal{MC}_S)=\min\{k\geq 0:\max_{s\in\mathcal{S}}\|P^k(s,\cdot)-\kappa_S(\cdot)\|_{\text{TV}}\leq \delta\}$. Under Assumption \ref{as:MC}, we can easily verify that $t_\delta(\mathcal{MC}_S)\leq L(\log(1/\delta)+1)$ for some constant $L>0$, which depends only on $C$ and $\delta$. Let $\tau_{\delta,n}=t_\delta(\mathcal{MC}_S)+n+1$. The parameters $c_1,c_2$ and $c_3$ used in stating the following theorem are numerical constants, and will be explicitly given in the Appendix. For ease of exposition, we here only present the finite-sample bound for using constant stepsize. 

\begin{theorem}\label{thm:main}
Consider $\{Q_k\}$ of Algorithm \ref{algorithm}. Suppose that (1) Assumptions \ref{as:MC} is satisfied, (2) $c(s,a)\leq \rho(s,a)$ for all $(s,a)$ and $D_{\rho,\max}< 1/\gamma$, and (3) the constant stepsize $\alpha$ is chosen such that $\alpha \tau_{\alpha,n}\leq \frac{c_1\omega }{\log(2|\mathcal{S}||\mathcal{A}|/\omega) f(\gamma c_{\max})^2(\gamma \rho_{\max}+1)^2}$. Then we have for all $k\geq \tau_{\alpha,n}$:
\begin{align}\label{eq:bounds}
    \mathbb{E}[\|Q_k-Q^{\pi,\rho}\|_\infty^2]
    \leq \zeta_1\left(1-\frac{\omega\alpha}{2}\right)^{k-\tau_{\alpha,n}}+\zeta_2\frac{f(\gamma c_{\max})^2(\gamma\rho_{\max}+1)^2\log(2|\mathcal{S}||\mathcal{A}|/\omega)}{\omega}\alpha \tau_{\alpha,n},
\end{align}
where $\zeta_1=c_2(\|Q_0-Q^{\pi,\rho}\|_\infty+\|Q_0\|_\infty+1)^2$, and $\zeta_2=c_3(3\|Q^{\pi,\rho}\|_\infty+1)^2$.
\end{theorem}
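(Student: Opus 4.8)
The plan is to derive the finite-sample bound by treating Algorithm~\ref{algorithm} as a Markovian stochastic approximation recursion of the form $Q_{k+1} = Q_k + \alpha_k(\Tilde{\mathcal{B}}_{c,\rho}(Q_k) - Q_k + w_k)$, where $w_k$ is a noise term capturing the gap between the empirical $n$-step update and its conditional expectation under the stationary distribution, and to run a Lyapunov drift argument with $W(Q) = \|Q - Q^{\pi,\rho}\|_{\mu,p}^2$. The starting point is Theorem~\ref{thm:contraction}(1), which gives me a weight vector $\mu$ and a uniform contraction factor $\gamma_c \le 1 - \theta\omega$ for $\Tilde{\mathcal{B}}_{c,\rho}(\cdot)$ in the $\|\cdot\|_{\mu,p}$-norm, together with the lower bound $\mu(s,a) \ge \frac{\omega(1-\theta)}{(1-\theta\omega)|\mathcal{S}||\mathcal{A}|}$ on the weights. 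The whole strategy is to carry out the analysis in the $\|\cdot\|_{\mu,p}$-norm where contraction is clean, and only at the very end convert back to $\|\cdot\|_\infty$ using the weight lower bound and a judicious choice of $p$ and $\theta$.

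\textbf{Key steps, in order.} First I would set up the noise decomposition: writing the per-step increment as the ideal drift $\Tilde{\mathcal{B}}_{c,\rho}(Q_k) - Q_k$ plus a Markovian noise $w_k$, I need to control the conditional bias $\mathbb{E}[w_k \mid \mathcal{F}_{k-\tau}]$ and the second moment of $w_k$. This is where the mixing time $t_\delta(\mathcal{MC}_S)$ enters: by conditioning on the history $\tau_{\alpha,n} = t_\alpha(\mathcal{MC}_S)+n+1$ steps in the past, the geometric mixing bound $\max_s \|P^k(s,\cdot) - \kappa_S\|_{\text{TV}} \le C\sigma^k$ makes the conditional bias of order $\alpha$, and the extra $n+1$ shift accounts for the $n$-step lookahead in the update, which depends on samples up to $k+n$. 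Second, I would establish affine bounds on the update direction, namely $\|\Tilde{\mathcal{B}}_{c,\rho}(Q)-Q\|$ and the noise magnitude are both at most linear in $\|Q\|$; the Lipschitz-type constants here are where the factors $f(\gamma c_{\max})$ and $(\gamma\rho_{\max}+1)$ in the theorem come from, since a single $n$-step increment involves the product $\prod_{j=k+1}^i c(S_j,A_j) \le c_{\max}^{\,i-k}$ summed geometrically (giving $f(\gamma c_{\max})$) and the single factor $\rho_{\max}$ from the $Q_k(S_{i+1},A_{i+1})$ term. Third, I would apply the Lyapunov drift inequality: expanding $\mathbb{E}[W(Q_{k+1})]$, the contraction of $\Tilde{\mathcal{B}}_{c,\rho}(\cdot)$ supplies a negative drift term of order $-\omega\alpha\, W(Q_k)$ (the $\theta\omega$ factor degrading gracefully to $\omega/2$ after absorbing lower-order terms), while the noise and bias contribute an additive term of order $\alpha^2\tau_{\alpha,n}$ scaled by the squared affine constants. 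The stepsize condition in hypothesis~(3) is exactly what guarantees the negative drift dominates and keeps the recursion stable. Fourth, unrolling the one-step drift recursion geometrically yields a bound on $\mathbb{E}[\|Q_k - Q^{\pi,\rho}\|_{\mu,p}^2]$ of the form (geometric transient) $+$ (steady-state floor proportional to $\alpha\tau_{\alpha,n}$).

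\textbf{Converting to $\ell_\infty$ and the main obstacle.} The final and most delicate step is passing from the $\|\cdot\|_{\mu,p}$ bound to the $\|\cdot\|_\infty$ bound in Eq.~(\ref{eq:bounds}). Using $\|x\|_\infty^2 \le \mu_{\min}^{-2/p}\|x\|_{\mu,p}^2$ together with the weight lower bound introduces a factor $\big(\frac{(1-\theta\omega)|\mathcal{S}||\mathcal{A}|}{\omega(1-\theta)}\big)^{2/p}$, and I would then optimize over $p$ and $\theta$ to make this conversion factor polylogarithmic rather than polynomial in $|\mathcal{S}||\mathcal{A}|$ and $1/\omega$; choosing $p \asymp \log(|\mathcal{S}||\mathcal{A}|/\omega)$ (with $\theta$ bounded away from $1$) turns the offending $(\cdot)^{2/p}$ into a constant, which is precisely what produces the $\log(2|\mathcal{S}||\mathcal{A}|/\omega)$ factor in the theorem and is the source of the improved state-action dependence over \citep{chen2021finite}. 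I expect this tuning to be the hard part: it is exactly where the uniform-in-$p$ contraction factor from Theorem~\ref{thm:contraction} is indispensable, since a $p$-dependent contraction rate would be overwhelmed by the $p$-th root conversion cost, and balancing the competing $p$-dependence of the contraction, the noise constants, and the norm-conversion factor is what makes the bound tight. The bounds $\|Q^{\pi,\rho}\|_\infty \le \frac{1}{1-\gamma D_{\rho,\max}}$ from Proposition~\ref{prop:H_rho}(2) then feed directly into the explicit form of $\zeta_2$.
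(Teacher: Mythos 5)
Your proposal is correct and follows essentially the same route as the paper: a Lyapunov drift analysis with $W(Q)=\|Q-Q^{\pi,\rho}\|_{\mu,p}^2$ built on the uniform-in-$p$ contraction and weight lower bound of Theorem \ref{thm:contraction}, Markovian-noise control via the augmented chain $(S_k,A_k,\ldots,S_{k+n},A_{k+n})$ with mixing time $\tau_{\alpha,n}=t_\alpha(\mathcal{MC}_S)+n+1$, affine bounds yielding the constants $f(\gamma c_{\max})(\gamma\rho_{\max}+1)$, and finally the conversion $\|x\|_\infty^2\leq \mu_{\min}^{-2/p}\|x\|_{\mu,p}^2$ with the tuning $\theta$ fixed (the paper takes $\theta=1/2$) and $p\asymp \log(1/\mu_{\min})\asymp\log(|\mathcal{S}||\mathcal{A}|/\omega)$, exactly matching the paper's Theorem \ref{thm:more_general} plus tuning step. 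The only trivial discrepancy is that Proposition \ref{prop:H_rho}(2) is not needed for $\zeta_2$ in Theorem \ref{thm:main} itself (which is stated directly in terms of $\|Q^{\pi,\rho}\|_\infty$); it enters only later in Corollary \ref{co:sc}.
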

Theorem \ref{thm:main} enables one to design a wide class of off-policy TD variants with provable finite-sample guarantees by choosing appropriate generalized importance sampling ratios $c(\cdot,\cdot)$ and $\rho(\cdot,\cdot)$, which, as we will see soon, are closely related to the bias-variance trade-offs in Algorithm \ref{algorithm}. The first term on the RHS of Eq. (\ref{eq:bounds}) is usually called the bias in SA literature \citep{bottou2018optimization}, and it goes to zero at a geometric rate. The second term on the RHS of Eq. (\ref{eq:bounds}) stands for the variance in the iterates, and it is a constant proportional to $\alpha \tau_{\alpha,n}$. To see more explicitly the bias-variance trade-off, we derive the sample complexity of Algorithm \ref{algorithm} in the following.
\begin{corollary}\label{co:sc}
For an accuracy $\epsilon>0$, to obtain $\mathbb{E}[\|Q_k-Q^{\pi,\rho}\|_\infty]\leq \epsilon$, the sample complexity is
\begin{align}\label{eq:sc}
    \underbrace{\mathcal{O}\left(\frac{\log^2(1/\epsilon)}{\epsilon^2}\right)}_{T_1}\;\underbrace{\Tilde{\mathcal{O}}\left(\frac{1}{(1-\gamma D_{\rho,\max})^2}\right)}_{T_2}\underbrace{\Tilde{\mathcal{O}}\left(\frac{f(\gamma c_{\max})^2(\gamma \rho_{\max}+1)^2}{ \mathcal{K}_{SA,\min}^2f(\gamma D_{c,\min})^2(1-\gamma D_{\rho,\max})^2}\right)}_{T_3}\Tilde{O}\left(n\right).
\end{align}
\end{corollary}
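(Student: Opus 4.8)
The plan is to derive Corollary \ref{co:sc} directly from the finite-sample bound in Theorem \ref{thm:main} via the standard two-term budgeting argument. First, by Jensen's inequality, $\mathbb{E}[\|Q_k - Q^{\pi,\rho}\|_\infty] \leq (\mathbb{E}[\|Q_k - Q^{\pi,\rho}\|_\infty^2])^{1/2}$, so it suffices to drive the right-hand side of Eq. (\ref{eq:bounds}) below $\epsilon^2$. I would split this budget, requiring the geometrically decaying bias term and the constant variance term each to be at most $\epsilon^2/2$, and then solve separately for the stepsize $\alpha$ and the iteration count $k$.

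For the variance term, write the leading constant as $C := \zeta_2 f(\gamma c_{\max})^2(\gamma\rho_{\max}+1)^2 \log(2|\mathcal{S}||\mathcal{A}|/\omega)/\omega$, so the constraint reads $C\alpha\tau_{\alpha,n} \leq \epsilon^2/2$. Under Assumption \ref{as:MC} we have $\tau_{\alpha,n} = t_\alpha(\mathcal{MC}_S) + n + 1 \leq L(\log(1/\alpha)+1) + n + 1$, so the product $\alpha\tau_{\alpha,n}$ is $\Tilde{\mathcal{O}}(\alpha(n + \log(1/\alpha)))$, and choosing $\alpha = \Tilde{\Theta}(\epsilon^2/C)$ meets the constraint up to polylogarithmic factors that are absorbed into $\Tilde{\mathcal{O}}(\cdot)$. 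One also verifies that for $\epsilon$ small enough this $\alpha$ satisfies the stepsize requirement (3) of Theorem \ref{thm:main}. For the bias term, I would use $-\log(1-\omega\alpha/2) \geq \omega\alpha/2$ to solve $\zeta_1(1-\omega\alpha/2)^{k-\tau_{\alpha,n}} \leq \epsilon^2/2$ for $k$, yielding $k \geq \tau_{\alpha,n} + \tfrac{2}{\omega\alpha}\log(2\zeta_1/\epsilon^2)$.

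Substituting the chosen $\alpha$ then gives the total iteration (hence sample) count $k = \Tilde{\Theta}\!\left(\tfrac{C}{\omega\epsilon^2}\,\tau_{\alpha,n}\log(1/\epsilon)\right)$, and the final step is to expand $C$ and $\omega = \mathcal{K}_{SA,\min}f(\gamma D_{c,\min})(1-\gamma D_{\rho,\max})$ and regroup to recover the claimed factorization. Specifically, the factor $\log(2\zeta_1/\epsilon^2)/\epsilon^2$ together with one $\log(1/\alpha)$ factor from $\tau_{\alpha,n}$ produces the explicit $T_1 = \mathcal{O}(\log^2(1/\epsilon)/\epsilon^2)$; the bound $\|Q^{\pi,\rho}\|_\infty \leq 1/(1-\gamma D_{\rho,\max})$ from Proposition \ref{prop:H_rho}(2) turns $\zeta_2 = c_3(3\|Q^{\pi,\rho}\|_\infty+1)^2$ into $T_2 = \Tilde{\mathcal{O}}((1-\gamma D_{\rho,\max})^{-2})$; the ratio $f(\gamma c_{\max})^2(\gamma\rho_{\max}+1)^2/\omega^2$ supplies $T_3$ after $\omega^2 = \mathcal{K}_{SA,\min}^2 f(\gamma D_{c,\min})^2(1-\gamma D_{\rho,\max})^2$; and the additive $n$ in $\tau_{\alpha,n}$ yields the trailing $\Tilde{\mathcal{O}}(n)$.

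The main obstacle is the implicit dependence of $\tau_{\alpha,n}$ on $\alpha$, which is itself chosen as a function of $\epsilon$: plugging $\alpha \approx \epsilon^2/C$ into $\tau_{\alpha,n}$ generates a $\log(1/\alpha) \approx \log(C/\epsilon^2)$ term, so the variance constraint is genuinely the transcendental inequality $C\alpha(L\log(1/\alpha)+n+1) \leq \epsilon^2/2$. The careful step is to confirm that this self-referential logarithm contributes only $\mathrm{polylog}(1/\epsilon)$ and polylog of the problem parameters, all legitimately hidden inside $\Tilde{\mathcal{O}}(\cdot)$, so that the explicit $\log^2(1/\epsilon)$ in $T_1$ (arising as the product of the bias logarithm and a single mixing-time logarithm) remains the only non-hidden logarithmic dependence on $\epsilon$.
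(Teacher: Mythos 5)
Your proposal is correct and follows essentially the same route as the paper, which sketches this argument right after Corollary \ref{co:sc} and carries out the identical computation style in the appendix for $Q$-trace: apply Jensen and split the two terms of Eq.~(\ref{eq:bounds}), pick $\alpha$ so that the variance term is $\mathcal{O}(\epsilon^2)$, solve the geometric bias term for $k$, bound $\zeta_2$ via Proposition \ref{prop:H_rho}(2) to get $T_2$, and expand $\omega^2=\mathcal{K}_{SA,\min}^2 f(\gamma D_{c,\min})^2(1-\gamma D_{\rho,\max})^2$ together with $\tau_{\alpha,n}$ to recover $T_3$ and the $\tilde{\mathcal{O}}(n)$ factor. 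Your handling of the self-referential $\log(1/\alpha)$ inside $\tau_{\alpha,n}$ (and of $\zeta_1$ appearing only logarithmically) is exactly the bookkeeping the paper's $\tilde{\mathcal{O}}(\cdot)$ notation absorbs.
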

In Corollary \ref{co:sc}, the $\Tilde{\mathcal{O}}(\epsilon^{-2})$ dependence on the accuracy and  the linear dependence on the parameter $n$ (which is due to performing $n$-step bootstrapping) are both the same as $n$-step TD-learning in the on-policy setting \citep{chen2021lyapunov}. The term $T_2$ arises because we upper bound the constants $\zeta_1$ and $\zeta_2$ in Theorem \ref{thm:main} by $\mathcal{O}\left(1/(1-\gamma D_{\rho,\max})^2\right)$, where we make use of Proposition \ref{prop:H_rho} (2). The impact of performing off-policy sampling is captured by the term $T_3$, which depends on the choice of the generalized importance sampling ratios $c(\cdot,\cdot)$ and $\rho(\cdot,\cdot)$. 

In the numerator of $T_3$, we have $f(\gamma c_{\max})^2(\gamma \rho_{\max}+1)^2$, which is from the second term on the RHS of Eq. (\ref{eq:bounds}), and represents the impact of the variance on the sample complexity. It is clear that smaller $c_{\max}$ and $\rho_{\max}$ lead to smaller variance. As we will see later, this is the reason for the variance reduction of various off-policy TD-learning algorithms in the literature. In the denominator of $T_3$, we have $\mathcal{K}_{SA,\min}^2f(\gamma D_{c,\min})^2(1-\gamma D_{\rho,\max})^2=\omega^2$, which represents the effect of the contraction factor. To see this, recall from Theorem \ref{thm:contraction} that the contraction factor is $1-\theta\omega$. In light of the previous analysis, the bias-variance trade-off in general off-policy multi-step TD-learning algorithm \ref{algorithm} is intuitively of the form $\Tilde{\mathcal{O}}\left(\frac{\text{Variance}}{(1-\text{ Contraction factor})^2}\right)$.

\section{Application to Various Off-Policy TD-Learning Algorithms}\label{sec:variants_off_policy}
In this section, we apply Theorem \ref{thm:main} to various off-policy $n$-step TD-learning algorithms in the literature (e.g. $Q^\pi(\lambda)$, TB$(\lambda)$, Retrace$(\lambda)$, and $Q$-trace). We begin by introducing some notation. Let $\pi_{\max}$ ($\pi_{\min}$) and $\pi_{b,\max}$ ($\pi_{b,\min}$) be the maximal (minimal) entry of the target policy $\pi$ and the behavior policy $\pi_b$ respectively. Let $r_{\max}=\max_{s,a}(\pi(a|s)/\pi_b(a|s))$ ($r_{\min}=\min_{s,a}(\pi(a|s)/\pi_b(a|s))$) be the maximum (minimum) ratio between $\pi$ and $\pi_b$. We will overload the notation of $\zeta_1$ and $\zeta_2$ from Theorem \ref{thm:main}. Note that $Q^{\pi,\rho}=Q^\pi$ in $Q^\pi(\lambda)$, TB$(\lambda)$, and Retrace$(\lambda)$, but $Q^{\pi,\rho}\neq Q^\pi$ in $Q$-trace.

\subsection{Finite-Sample Analysis of Vanilla IS}
Consider the Vanilla IS algorithm, where $c(s,a)=\rho(s,a)=\pi(a|s)/\pi_b(a|s)$ for all $(s,a)$. In this case, we have $c_{\max}=\rho_{\max}=r_{\max}$, $D_c=D_\rho=I$, and $\omega=\mathcal{K}_{SA,\min}(1-\gamma^n)$.
\begin{theorem}\label{co:VanillaIS}
Consider Algorithm \ref{algorithm} with Vanilla IS update. Suppose that Assumption \ref{as:MC} is satisfied and the constant stepsize $\alpha$ is chosen such that  $\alpha \tau_{\alpha,n}\leq \frac{c_1\omega }{\log(2|\mathcal{S}||\mathcal{A}|/\omega) ((\gamma r_{\max})^n+1)^2}$. Then we have for all $k\geq \tau_{\alpha,n}$: 
\begin{align*}
    \mathbb{E}[\|Q_k-Q^{\pi}\|_\infty^2]
\leq \zeta_1\left(1-\frac{\omega\alpha}{2}\right)^{k-\tau_{\alpha,n}}+\zeta_2\frac{((\gamma r_{\max})^n+1)^2\log(2|\mathcal{S}||\mathcal{A}|/\omega)}{\omega}\alpha \tau_{\alpha,n},
\end{align*}
where $\omega=\mathcal{K}_{SA,\min}(1-\gamma^n)$. This leads to a sample complexity of $\Tilde{\mathcal{O}}\left(\frac{\log^2(1/\epsilon)}{\epsilon^2}\frac{n((\gamma r_{\max})^n+1)^2}{\mathcal{K}_{SA,\min}^2(1-\gamma^n )^2(1-\gamma)^2}\right)$.
\end{theorem}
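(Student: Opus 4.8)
The plan is to obtain Theorem~\ref{co:VanillaIS} from the same Lyapunov/Markovian-SA machinery that proves Theorem~\ref{thm:main}, specialized to $c=\rho=\pi/\pi_b$, with one genuine sharpening that is available \emph{only} in this case. First I would record the parameter values. Since $c(s,a)=\rho(s,a)$, the hypothesis $c\le\rho$ holds with equality; and for every state $s$, $D_c((s,a),(s,a))=\sum_{a'}\pi_b(a'|s)\pi(a'|s)/\pi_b(a'|s)=\sum_{a'}\pi(a'|s)=1$, so $D_c=D_\rho=I$ and in particular $D_{\rho,\max}=1<1/\gamma$. Hence Proposition~\ref{prop:H_rho} applies; moreover $\pi(a|s)-\pi_b(a|s)\rho(s,a)=0$, so part~(1) forces the fixed point to be $Q^{\pi,\rho}=Q^\pi$, while part~(2) gives $\|Q^\pi\|_\infty\le 1/(1-\gamma)$. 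Substituting $D_{c,\min}=1$ and $D_{\rho,\max}=1$ into $\omega=\mathcal{K}_{SA,\min}f(\gamma D_{c,\min})(1-\gamma D_{\rho,\max})$ and using $f(\gamma)=(1-\gamma^n)/(1-\gamma)$ yields $\omega=\mathcal{K}_{SA,\min}(1-\gamma^n)$, the value claimed.

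The crux, and the reason the stated constants beat a verbatim use of Theorem~\ref{thm:main}, is a telescoping identity. Write the per-step increment as an affine map $Q\mapsto \bar A(O_k)Q+\bar b(O_k)$ of the observation window $O_k$, and set the cumulative weights $w_i=\prod_{j=k+1}^{i}\pi(A_j|S_j)/\pi_b(A_j|S_j)$ (so $w_k=1$). The $Q$-dependent part of the increment is $\sum_{i=k}^{k+n-1}\gamma^{i-k}w_i\big[\gamma\rho(S_{i+1},A_{i+1})Q(S_{i+1},A_{i+1})-Q(S_i,A_i)\big]$. When $\rho=c=\pi/\pi_b$ the coefficient $\gamma^{i+1-k}w_i\rho(S_{i+1},A_{i+1})$ on $Q(S_{i+1},A_{i+1})$ equals $\gamma^{i+1-k}w_{i+1}$, so consecutive terms cancel and the sum collapses to $\gamma^n w_{k+n}Q(S_{k+n},A_{k+n})-Q(S_k,A_k)$. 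Therefore the $\ell_\infty$ Lipschitz modulus of the increment satisfies $\|\bar A(O_k)\|_\infty\le \gamma^n w_{k+n}+1\le (\gamma r_{\max})^n+1$, in place of the crude $f(\gamma c_{\max})(\gamma\rho_{\max}+1)$ that the general analysis incurs when the $n$ summands do not telescope (i.e. when $c\ne\rho$). The reward part $\bar b(O_k)$ is bounded by $f(\gamma r_{\max})$ using $\mathcal R\in[0,1]$, an additive constant that gets absorbed into the fixed-point-size factor $\zeta_2$.

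With this sharper constant in hand I would rerun the drift argument of Theorem~\ref{thm:main} unchanged: take $W(Q)=\|Q-Q^\pi\|_{\mu,p}^2$, invoke the uniform contraction factor $1-\theta\omega$ of $\Tilde{\mathcal{B}}_{c,\rho}(\cdot)$ from Theorem~\ref{thm:contraction}, tune $p$ and $\theta$ to pass from the weighted $\ell_p$-bound to the $\ell_\infty$-bound, and carry $(\gamma r_{\max})^n+1$ everywhere the general proof carried $f(\gamma c_{\max})(\gamma\rho_{\max}+1)$. This produces the stated stepsize restriction and two-term bound, with $\zeta_1,\zeta_2$ inheriting their scaling through $\|Q^\pi\|_\infty\le 1/(1-\gamma)$. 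The sample complexity then follows exactly as in Corollary~\ref{co:sc}: balancing the geometric bias term against the $\mathcal{O}(\alpha\tau_{\alpha,n})$ variance term forces $\alpha\tau_{\alpha,n}=\Tilde{\mathcal{O}}(\epsilon^2\omega/((\gamma r_{\max})^n+1)^2)$ and $k=\Tilde{\mathcal{O}}(1/(\omega\alpha))$; substituting $\omega=\mathcal{K}_{SA,\min}(1-\gamma^n)$, using $f(\gamma D_{c,\min})(1-\gamma D_{\rho,\max})=1-\gamma^n$ so that the denominator of $T_3$ equals $\mathcal{K}_{SA,\min}^2(1-\gamma^n)^2$, combining the $1/(1-\gamma)^2$ carried by $\zeta_2$ (i.e.\ $T_2$), and picking up the extra factor $n$ from $\tau_{\alpha,n}$, gives $\Tilde{\mathcal{O}}\big(\tfrac{\log^2(1/\epsilon)}{\epsilon^2}\tfrac{n((\gamma r_{\max})^n+1)^2}{\mathcal{K}_{SA,\min}^2(1-\gamma^n)^2(1-\gamma)^2}\big)$.

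The main obstacle I anticipate is the second paragraph: I must confirm that the telescoped modulus $(\gamma r_{\max})^n+1$ is precisely the quantity that enters the finite-sample recursion, which requires reopening the relevant step of the Appendix proof of Theorem~\ref{thm:main} rather than quoting its displayed statement — the latter already carries the looser $f(\gamma c_{\max})(\gamma\rho_{\max}+1)$, so a literal substitution would not yield the improvement. The remaining work (evaluating $D_c,D_\rho,\omega,Q^{\pi,\rho}$ and assembling the sample complexity) is routine bookkeeping.
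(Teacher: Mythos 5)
Your proposal is correct and follows essentially the same route as the paper: the paper also notes $D_c=D_\rho=I$, $\omega=\mathcal{K}_{SA,\min}(1-\gamma^n)$, and $Q^{\pi,\rho}=Q^\pi$, and its key step is exactly your telescoping of the $Q$-dependent part of $F(Q,y)$ under $c=\rho$ to sharpen the Lipschitz constant from $\frac{2}{\mu_{\min}^{1/p}}f(\gamma c_{\max})(\gamma\rho_{\max}+1)$ to $\frac{(\gamma r_{\max})^n+1}{\mu_{\min}^{1/p}}$, after which the argument is verbatim that of Theorem \ref{thm:main}. Your anticipated obstacle is benign: the intermediate results (Theorem \ref{thm:more_general} via Theorem \ref{thm:chen}) treat the Lipschitz constant as a free parameter $a_1$, so the improved modulus can be substituted directly without reopening the drift recursion.
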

Consider the special case where $\pi=\pi_b$ (i.e., on-policy $n$-step TD). Then the sample complexity bound reduces to $\Tilde{\mathcal{O}}\left(\frac{n \log^2(1/\epsilon)}{\epsilon^2\mathcal{K}_{SA,\min}^2(1-\gamma^n )^2(1-\gamma)^2}\right)$, which is comparable to the results in \citep{chen2021lyapunov}. See Appendix \ref{ap:n-step TD} for more details. In the off-policy setting, note that the factor $((\gamma r_{\max})^n+1)^2$ appears in the sample complexity. When $\gamma r_{\max}>1$ (which can usually happen), the sample complexity bound involves an exponential factor $(\gamma r_{\max})^n$. The reason is that the product of importance sampling ratios are not at all controlled by any means in Vanilla IS. Therefore, the variance can be very large. On the other hand, since the importance sampling ratios are not modified, Vanilla IS effectively uses the full $n$-step return. As a result, the parameter $\omega=\mathcal{K}_{SA,\min}(1-\gamma^n)$  within Vanilla IS is the largest (best) among all the algorithms we study.

\subsubsection{Finite-Sample Analysis of $Q^\pi(\lambda)$}
Consider the $Q^\pi(\lambda)$ algorithm, where $c(s,a)=\lambda$ and $\rho(s,a)=\pi(a|s)/\pi_b(a|s)$ for all $(s,a)$. This implies that $c_{\max}=\lambda$, $\rho_{\max}=r_{\max}$, $D_{c}=\lambda I$, and $D_\rho=I$. 
\begin{theorem}\label{co:Qpi}
Consider Algorithm \ref{algorithm} with $Q^\pi(\lambda)$ update. Suppose that Assumption \ref{as:MC} is satisfied, $\lambda\leq r_{\min}$, and the constant stepsize $\alpha$ is chosen such that $\alpha \tau_{\alpha,n}\leq \frac{c_1\omega }{\log(2|\mathcal{S}||\mathcal{A}|/\omega) f(\gamma \lambda)^2(\gamma r_{\max}+1)^2}$. Then we have for all $k\geq \tau_{\alpha,n}$: 
\begin{align*}
    \mathbb{E}[\|Q_k-Q^{\pi}\|_\infty^2]
\leq \zeta_1\left(1-\frac{\omega\alpha}{2}\right)^{k-\tau_{\alpha,n}}+\zeta_2\frac{f(\gamma \lambda)^2(\gamma r_{\max}+1)^2\log(2|\mathcal{S}||\mathcal{A}|/\omega)}{\omega}\alpha \tau_{\alpha,n},
\end{align*}
where $\omega=\mathcal{K}_{SA,\min} f(\gamma\lambda)(1-\gamma)$. This leads to a sample complexity of $\Tilde{\mathcal{O}}\left(\frac{\log^2(1/\epsilon)}{\epsilon^2}\frac{n(\gamma r_{\max}+1)^2}{ \mathcal{K}_{SA,\min}^2(1-\gamma)^4}\right)$.
\end{theorem}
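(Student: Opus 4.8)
The plan is to obtain Theorem~\ref{co:Qpi} as a direct specialization of the general finite-sample bound in Theorem~\ref{thm:main}, so the work is almost entirely a matter of verifying its three hypotheses for the $Q^\pi(\lambda)$ choice $c(s,a)=\lambda$, $\rho(s,a)=\pi(a|s)/\pi_b(a|s)$, and then substituting the resulting problem parameters. First I would check the contraction conditions. The requirement $c(s,a)\leq\rho(s,a)$ for all $(s,a)$ reads $\lambda\leq\pi(a|s)/\pi_b(a|s)$, and taking the minimum over $(s,a)$ shows this is equivalent to the stated assumption $\lambda\leq r_{\min}$; this is the only hypothesis requiring genuine attention. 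For the second condition, $\rho(s,a)=\pi(a|s)/\pi_b(a|s)$ gives $D_\rho((s,a),(s,a))=\sum_{a}\pi_b(a|s)\pi(a|s)/\pi_b(a|s)=1$, so $D_\rho=I$ and hence $D_{\rho,\max}=1<1/\gamma$ automatically from $\gamma\in(0,1)$; Assumption~\ref{as:MC} is assumed directly.

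Next I would read off the parameters. From $c\equiv\lambda$ we get $c_{\max}=\lambda$ and $D_c=\lambda I$, so $D_{c,\min}=\lambda$; from $\rho=\pi/\pi_b$ we get $\rho_{\max}=r_{\max}$ and $D_{\rho,\max}=1$. Substituting into $\omega=\mathcal{K}_{SA,\min}f(\gamma D_{c,\min})(1-\gamma D_{\rho,\max})$ yields $\omega=\mathcal{K}_{SA,\min}f(\gamma\lambda)(1-\gamma)$, matching the statement, and plugging $f(\gamma c_{\max})=f(\gamma\lambda)$ and $\gamma\rho_{\max}+1=\gamma r_{\max}+1$ into the stepsize condition and the bound of Theorem~\ref{thm:main} reproduces the displayed finite-sample guarantee verbatim. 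To identify the limit, I would invoke Proposition~\ref{prop:H_rho}(1): with $\rho(s,a)=\pi(a|s)/\pi_b(a|s)$ the numerator $\sum_{a}|\pi(a|s)-\pi_b(a|s)\rho(s,a)|$ vanishes identically, forcing $\|Q^\pi-Q^{\pi,\rho}\|_\infty\leq 0$, i.e.\ $Q^{\pi,\rho}=Q^\pi$, which is why the bound is stated in terms of $Q^\pi$ rather than a biased limit.

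Finally, the sample complexity follows from Corollary~\ref{co:sc} by the same substitution. The term $T_2=\Tilde{\mathcal{O}}(1/(1-\gamma D_{\rho,\max})^2)$ becomes $\Tilde{\mathcal{O}}(1/(1-\gamma)^2)$, and in $T_3$ the factor $f(\gamma c_{\max})^2=f(\gamma\lambda)^2$ in the numerator cancels against $f(\gamma D_{c,\min})^2=f(\gamma\lambda)^2$ in the denominator, leaving $\Tilde{\mathcal{O}}((\gamma r_{\max}+1)^2/(\mathcal{K}_{SA,\min}^2(1-\gamma)^2))$; multiplying by $T_1$ and the $\Tilde{\mathcal{O}}(n)$ factor gives exactly the stated $\Tilde{\mathcal{O}}(\epsilon^{-2}\log^2(1/\epsilon)\,n(\gamma r_{\max}+1)^2/(\mathcal{K}_{SA,\min}^2(1-\gamma)^4))$. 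There is no deep obstacle, since everything reduces to Theorem~\ref{thm:main}; the only points demanding care are recognizing that $\lambda\leq r_{\min}$ is precisely the substochasticity condition $c\leq\rho$, and tracking the cancellation of $f(\gamma\lambda)$ in $T_3$, which is what renders the final sample complexity independent of $\lambda$ even though $\lambda$ enters the contraction modulus $\omega$.
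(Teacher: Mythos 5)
Your proposal is correct and takes essentially the same route as the paper, whose entire proof of this theorem is the one-line remark that the result follows by directly applying Theorem~\ref{thm:main} (with the sample complexity read off from Corollary~\ref{co:sc}). Your verification of the hypotheses --- $\lambda\leq r_{\min}$ being exactly the condition $c\leq\rho$, the computation $D_\rho=I$ and $D_c=\lambda I$, the identification $Q^{\pi,\rho}=Q^\pi$ via Proposition~\ref{prop:H_rho}(1), and the cancellation of $f(\gamma\lambda)^2$ in $T_3$ --- simply fills in details the paper leaves implicit, without introducing any different method.
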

To see how $Q^\pi(\lambda)$ overcomes the high variance issue in Vanilla IS, observe that since $\gamma\lambda\leq \gamma r_{\min}\leq \gamma<1$, we have $f^2(\gamma\lambda)\leq 1/(1-\gamma\lambda)^2$. Therefore, by replacing $c(s,a)=\pi(a|s)/\pi_b(a|s)$ in Vanilla IS with a properly chosen constant $\lambda$, $Q^\pi(\lambda)$ algorithm successfully avoids an exponential large factor in the sample complexity. However, choosing a small $\lambda$ to control the variance has a side effect on the contraction factor. Intuitively, when $\lambda$ is small, $Q^\pi(\lambda)$ does not effectively use the full $n$-step return. Hence the parameter $\omega=\mathcal{K}_{SA,\min}f(\gamma\lambda)(1-\gamma)$ in $Q^\pi(\lambda)$ is less (worse) than the one in Vanilla IS.

\subsubsection{Finite-Sample Analysis of TB$(\lambda)$}
In the TB$(\lambda)$ algorithm, we have $c(s,a)=\lambda \pi(a|s)$ and $\rho(s,a)=\pi(a|s)/\pi_b(a|s)$. This implies that $c_{\max}=\lambda\pi_{\max}$ and $\rho_{\max}=r_{\max}$. Moreover, we have $D_c(s,a)=\lambda \sum_{a}\pi_b(a|s)\pi(a|s)$ and $D_\rho(s,a)=1$ for all $(s,a)$.
\begin{theorem}\label{co:TB}
Consider Algorithm \ref{algorithm} with TB$(\lambda)$ update. Suppose that Assumption \ref{as:MC} is satisfied, $\lambda\leq 1/\pi_{b,\max}$, and the constant stepsize $\alpha$ is chosen such that $\alpha \tau_{\alpha,n}\leq \frac{c_1\omega }{\log(2|\mathcal{S}||\mathcal{A}|/\omega) f(\gamma \lambda\pi_{\max})^2(\gamma r_{\max}+1)^2}$. Then we have for all $k\geq \tau_{\alpha,n}$: 
\begin{align*}
    \mathbb{E}[\|Q_k-Q^{\pi}\|_\infty^2]
\leq \zeta_1\left(1-\frac{\omega\alpha}{2}\right)^{k-\tau_{\alpha,n}}+\zeta_2\frac{f(\gamma \lambda\pi_{\max})^2(\gamma r_{\max}+1)^2\log(2|\mathcal{S}||\mathcal{A}|/\omega)}{\omega}\alpha \tau_{\alpha,n},
\end{align*}
where $\omega=\mathcal{K}_{SA,\min}f(\gamma D_{c,\min})(1-\gamma)$. This implies a sample complexity of 
\begin{align*}
    \Tilde{\mathcal{O}}\left(\frac{\log^2(1/\epsilon)nf(\gamma \lambda \pi_{\max})^2(\gamma r_{\max}+1)^2}{\epsilon^2 \mathcal{K}_{SA,\min}^2f(\gamma D_{c,\min})^2(1-\gamma)^4}\right).
\end{align*}
\end{theorem}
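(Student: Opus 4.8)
The plan is to obtain Theorem \ref{co:TB} as a direct specialization of the general finite-sample bound in Theorem \ref{thm:main} together with the sample-complexity expression in Corollary \ref{co:sc}. Since those results already apply to any admissible pair $(c,\rho)$, the whole argument reduces to (i) verifying the three hypotheses of Theorem \ref{thm:main} for the TB$(\lambda)$ choice $c(s,a)=\lambda\pi(a|s)$, $\rho(s,a)=\pi(a|s)/\pi_b(a|s)$, (ii) identifying the limit point, and (iii) substituting the induced problem parameters and simplifying.

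First I would compute the two diagonal matrices. By definition, $D_\rho((s,a),(s,a))=\sum_{a'\in\mathcal{A}}\pi_b(a'|s)\rho(s,a')=\sum_{a'}\pi(a'|s)=1$, so $D_\rho=I$ and hence $D_{\rho,\max}=1<1/\gamma$, which settles one half of hypothesis (2). Similarly $D_c((s,a),(s,a))=\lambda\sum_{a'}\pi_b(a'|s)\pi(a'|s)$, giving the stated $D_{c,\min}$. For the other half of hypothesis (2), the requirement $c(s,a)\leq\rho(s,a)$, i.e.\ $\lambda\pi(a|s)\leq\pi(a|s)/\pi_b(a|s)$, is equivalent for $\pi(a|s)>0$ (and trivial otherwise) to $\lambda\leq 1/\pi_b(a|s)$, which holds for every $(s,a)$ exactly under the hypothesis $\lambda\leq 1/\pi_{b,\max}$. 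Reading off $c_{\max}=\lambda\pi_{\max}$ and $\rho_{\max}=r_{\max}$, hypothesis (3) is then precisely the stepsize condition stated in the theorem. For the limit point, since $\rho(s,a)=\pi(a|s)/\pi_b(a|s)$ makes $\sum_{a}|\pi(a|s)-\pi_b(a|s)\rho(s,a)|=0$, Proposition \ref{prop:H_rho}(1) forces $Q^{\pi,\rho}=Q^\pi$, which is why the bound is stated in terms of $\|Q_k-Q^\pi\|_\infty^2$.

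It then remains to substitute. Plugging $f(\gamma c_{\max})=f(\gamma\lambda\pi_{\max})$, $\gamma\rho_{\max}+1=\gamma r_{\max}+1$, and $\omega=\mathcal{K}_{SA,\min}f(\gamma D_{c,\min})(1-\gamma D_{\rho,\max})=\mathcal{K}_{SA,\min}f(\gamma D_{c,\min})(1-\gamma)$ into Eq.~(\ref{eq:bounds}) reproduces the displayed finite-sample bound verbatim, and the same substitutions in the product $T_1 T_2 T_3\,\Tilde{\mathcal{O}}(n)$ of Corollary \ref{co:sc} give the stated sample complexity, the $(1-\gamma)^4$ arising from the factor $T_2=\Tilde{\mathcal{O}}(1/(1-\gamma)^2)$ multiplied by the $(1-\gamma)^2$ in the denominator of $T_3$. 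Because every step is a substitution into previously established results, there is no substantive obstacle; the only points demanding care are the equivalence $c\leq\rho\Leftrightarrow\lambda\leq 1/\pi_{b,\max}$ and the computation $D_\rho=I$, which jointly determine the admissible range of $\lambda$ and the value of $D_{\rho,\max}$ that then propagate through the whole bound. Unlike $Q^\pi(\lambda)$, where $c_{\max}$ and $D_{c,\min}$ coincide and the $f$-factors cancel, here $c_{\max}=\lambda\pi_{\max}$ differs from $D_{c,\min}$, so both $f(\gamma\lambda\pi_{\max})^2$ and $f(\gamma D_{c,\min})^2$ survive in the final complexity expression.
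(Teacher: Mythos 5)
Your proposal is correct and follows essentially the same route as the paper, whose own proof of Theorem \ref{co:TB} is simply a direct application of Theorem \ref{thm:main} with the TB$(\lambda)$ parameters; you fill in exactly the verifications that application requires ($D_\rho=I$, $D_{c,\min}=\lambda\min_s\sum_{a}\pi_b(a|s)\pi(a|s)$, $c\leq\rho$ under $\lambda\leq 1/\pi_{b,\max}$, $Q^{\pi,\rho}=Q^\pi$ via Proposition \ref{prop:H_rho}(1), and the substitutions into Eq.~(\ref{eq:bounds}) and Corollary \ref{co:sc}). One cosmetic quibble: $\lambda\leq 1/\pi_{b,\max}$ is \emph{sufficient} for $c\leq\rho$ but not strictly equivalent to it (the constraint is vacuous at pairs with $\pi(a|s)=0$), though only sufficiency is needed, so this does not affect the argument.
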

Suppose we further choose $\lambda<1/(\gamma\pi_{\max})$, the TB$(\lambda)$ algorithm also overcomes the high variance issue in Vanilla IS because $f(\gamma \lambda \pi_{\max})\leq 1/(1-\gamma\lambda\pi_{\max})$, which does not involve any exponential large factor. When compared to $Q^\pi(\lambda)$, an advantage of TB$(\lambda)$ is that the constraint on $\lambda$ is much relaxed. However, the same side effect on the contraction factor is also present here. To see this, since $D_{c,\min}=\lambda \min_{s,a}\sum_{a}\pi_b(a|s)\pi(a|s)\leq  1$, the TB$(\lambda)$ algorithm does not effectively use the full $n$-step return. As a result, the parameter $\omega=\mathcal{K}_{SA,\min}f(\gamma D_{c,\min})(1-\gamma)$ in TB$(\lambda)$ is less (worse) than the one in Vanilla IS.

\subsubsection{Finite-Sample Analysis of Retrace$(\lambda)$}
We now analyze the Retrace$(\lambda)$ algorithm, where $c(s,a)=\lambda \min(1,\pi(a|s)/\pi_b(a|s))$ and $\rho(s,a)=\pi(a|s)/\pi_b(a|s)$ for all $(s,a)$. This implies that $c_{\max}=\lambda$ and $\rho_{\max}=r_{\max}$. In addition, we have for any $(s,a)$ that $D_c(s,a)=\lambda \sum_{a}\min (\pi_b(a|s),\pi(a|s))$ and $D_\rho(s,a)=1$. 
\begin{theorem}\label{co:Retrace}
Consider Algorithm \ref{algorithm} with Retrace$(\lambda)$ update. Suppose that Assumption \ref{as:MC} is satisfied, $\lambda\leq 1$, and the constant stepsize $\alpha$ is chosen such that $\alpha \tau_{\alpha,n}\leq \frac{c_1\omega }{\log(2|\mathcal{S}||\mathcal{A}|/\omega) f(\gamma \lambda)^2(\gamma r_{\max}+1)^2}$. Then we have for all $k\geq \tau_{\alpha,n}$: 
\begin{align*}
    \mathbb{E}[\|Q_k-Q^{\pi}\|_\infty^2]
\leq \zeta_1\left(1-\frac{\omega\alpha}{2}\right)^{k-\tau_{\alpha,n}}+\zeta_2\frac{f(\gamma \lambda)^2(\gamma r_{\max}+1)^2\log(2|\mathcal{S}||\mathcal{A}|/\omega)}{\omega}\alpha \tau_{\alpha,n},
\end{align*}
where $\omega=\mathcal{K}_{SA,\min}f(\gamma D_{c,\min})(1-\gamma)$. This implies a sample complexity of $\Tilde{\mathcal{O}}\left(\frac{\log^2(1/\epsilon)nf(\gamma \lambda)^2(\gamma r_{\max}+1)^2}{ \epsilon^2\mathcal{K}_{SA,\min}^2f(\gamma D_{c,\min})^2(1-\gamma)^4}\right)$.
\end{theorem}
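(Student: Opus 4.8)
The plan is to obtain Theorem \ref{co:Retrace} as a direct specialization of the master finite-sample bound in Theorem \ref{thm:main}, so the work reduces to (i) checking that the Retrace$(\lambda)$ choice of generalized importance sampling ratios satisfies the hypotheses of Theorem \ref{thm:main}, (ii) evaluating the problem-dependent constants appearing in Eq.\ (\ref{eq:bounds}) for these ratios, and (iii) identifying the fixed-point as $Q^\pi$.

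First I would verify the two structural hypotheses. For the ordering $c(s,a)\le \rho(s,a)$, note that $\min(1,\pi(a|s)/\pi_b(a|s))\le \pi(a|s)/\pi_b(a|s)$ always holds, so with $\lambda\le 1$ we get $c(s,a)=\lambda\min(1,\pi(a|s)/\pi_b(a|s))\le \pi(a|s)/\pi_b(a|s)=\rho(s,a)$. For the bound $D_{\rho,\max}<1/\gamma$, a direct computation gives $D_\rho((s,a),(s,a))=\sum_{a'}\pi_b(a'|s)\rho(s,a')=\sum_{a'}\pi(a'|s)=1$, so $D_\rho=I$ and $D_{\rho,\max}=1<1/\gamma$ since $\gamma<1$. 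This simultaneously shows that the factor $1-\gamma D_{\rho,\max}$ entering $\omega$ collapses to exactly $1-\gamma$.

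Next I would compute the remaining constants. Since $\min(1,\cdot)\le 1$, we have $c_{\max}=\lambda$, where the supremum is genuinely attained because for each state $s$ the distributions $\pi(\cdot|s)$ and $\pi_b(\cdot|s)$ both sum to one, forcing at least one action with $\pi(a|s)\ge \pi_b(a|s)$ and hence $\min(1,\pi(a|s)/\pi_b(a|s))=1$; also $\rho_{\max}=r_{\max}$ by definition. The diagonal entry is $D_c((s,a),(s,a))=\lambda\sum_{a'}\pi_b(a'|s)\min(1,\pi(a'|s)/\pi_b(a'|s))=\lambda\sum_{a'}\min(\pi_b(a'|s),\pi(a'|s))$, from which $\omega=\mathcal{K}_{SA,\min}f(\gamma D_{c,\min})(1-\gamma)$. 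Substituting $c_{\max}=\lambda$, $\rho_{\max}=r_{\max}$, $D_{\rho,\max}=1$, and this value of $\omega$ into the stepsize condition and into Eq.\ (\ref{eq:bounds}) of Theorem \ref{thm:main} yields verbatim the stepsize constraint and the finite-sample bound stated in the theorem.

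Finally, to replace $Q^{\pi,\rho}$ by $Q^\pi$ I would invoke Proposition \ref{prop:H_rho} (1): since $\rho(s,a)=\pi(a|s)/\pi_b(a|s)$ makes $\pi(a|s)-\pi_b(a|s)\rho(s,a)=0$ for every $(s,a)$, the numerator of its error bound vanishes and the fixed-point of $\mathcal{H}_\rho(\cdot)$ coincides with $Q^\pi$. The sample complexity then follows by plugging $D_{\rho,\max}=1$, $c_{\max}=\lambda$, and $\rho_{\max}=r_{\max}$ into Corollary \ref{co:sc}, where $T_2 T_3$ collapses to $\Tilde{\mathcal{O}}(nf(\gamma\lambda)^2(\gamma r_{\max}+1)^2/[\mathcal{K}_{SA,\min}^2 f(\gamma D_{c,\min})^2(1-\gamma)^4])$. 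Since the derivation is a pure specialization, there is no genuine analytic obstacle; the only point demanding care is confirming that truncating $c$ does \emph{not} perturb the fixed-point (guaranteed by the $\rho$-only dependence of $\mathcal{H}_\rho(\cdot)$) while still pulling $c_{\max}$ down from $r_{\max}$ to $\lambda$, which is exactly the mechanism by which Retrace$(\lambda)$ achieves variance reduction relative to Vanilla IS.
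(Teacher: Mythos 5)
Your proposal is correct and takes essentially the same route as the paper, whose entire proof of Theorems \ref{co:Qpi}--\ref{co:Qtrace} is the one-line remark that they follow by directly applying Theorem \ref{thm:main}. Your verifications ($c(s,a)\le\rho(s,a)$ for $\lambda\le 1$, $D_\rho=I$ so $D_{\rho,\max}=1<1/\gamma$, $c_{\max}=\lambda$, $\rho_{\max}=r_{\max}$, the identification $Q^{\pi,\rho}=Q^\pi$ via Proposition \ref{prop:H_rho}, and the substitution into Corollary \ref{co:sc}) simply make explicit the routine specialization the paper leaves implicit.
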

Similarly as in $Q^\pi(\lambda)$ and TB$(\lambda)$, the Retrace$(\lambda)$ algorithm overcomes the high variance issue in Vanilla IS by truncating the importance sampling ratio at $1$, which prevents an exponential large factor in the variance term. In addition, it does not require choosing $\lambda$ to be extremely small as required in $Q^\pi(\lambda)$. As for the compromise in the contraction factor, note that $\min(1,\pi(a|s)/\pi_b(a|s))\geq \pi(a|s)$, which implies that $D_c(s,a)$ (and hence $D_{c,\min}$) is larger in the Retrace$(\lambda)$ algorithm than the TB$(\lambda)$ algorithm. Therefore, Retrace$(\lambda)$ does not truncate the $n$-step return as heavy as TB$(\lambda)$. As a result, the parameter $\omega$ is larger (better) in Retrace$(\lambda)$ than in TB$(\lambda)$.

\subsubsection{Finite-Sample Analysis of $Q$-Trace}
Lastly, we analyze the $Q$-trace algorithm, where we choose $c(s,a)=\min(\bar{c},\pi(a|s)/\pi_b(a|s))$ and $\rho(s,a)=\min(\bar{\rho},\pi(a|s)/\pi_b(a|s))$ for all $(s,a)$. This implies that $c_{\max}=\bar{c}$ and $\rho_{\max}=\bar{\rho}$. Moreover, we have for any $(s,a)$ that $D_c(s,a)=\sum_{a}\min(\bar{c}\pi_b(a|s),\pi(a|s))$ and $D_\rho(s,a)=\sum_{a}\min(\bar{\rho}\pi_b(a|s),\pi(a|s))$. 
\begin{theorem}\label{co:Qtrace}
Consider Algorithm \ref{algorithm} with $Q$-trace update. Suppose that Assumption \ref{as:MC} is satisfied, $\bar{c}\leq \bar{\rho}$, and the constant stepsize $\alpha$ is chosen such that $\alpha \tau_{\alpha,n}\leq \frac{c_1\omega }{\log(2|\mathcal{S}||\mathcal{A}|/\omega) f(\gamma\bar{c})^2(\gamma \bar{\rho}+1)^2}$. Then we have for all $k\geq \tau_{\alpha,n}$: 
\begin{align*}
    \mathbb{E}[\|Q_k-Q^{\pi,\rho}\|_\infty^2]
\leq \zeta_1\left(1-\frac{\omega\alpha}{2}\right)^{k-\tau_{\alpha,n}}+\zeta_2\frac{f(\gamma \bar{c})^2(\gamma \bar{\rho}+1)^2\log(2|\mathcal{S}||\mathcal{A}|/\omega)}{\omega}\alpha \tau_{\alpha,n},
\end{align*}
where $\omega=\mathcal{K}_{SA,\min}f(\gamma D_{c,\min})(1-\gamma D_{\rho,\max})$. This implies a sample complexity of 
\begin{align*}
    \Tilde{\mathcal{O}}\left(\frac{\log^2(1/\epsilon)nf(\gamma \bar{c})^2(\gamma \bar{\rho}+1)^2}{ \epsilon^2\mathcal{K}_{SA,\min}^2f(\gamma D_{c,\min})^2(1-\gamma D_{\rho,\max})^2(1-\gamma)^2}\right).
\end{align*}
\end{theorem}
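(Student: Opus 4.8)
The plan is to obtain Theorem~\ref{co:Qtrace} as a direct specialization of the master bound in Theorem~\ref{thm:main} to the $Q$-trace choice $c(s,a)=\min(\bar c,\pi(a|s)/\pi_b(a|s))$ and $\rho(s,a)=\min(\bar\rho,\pi(a|s)/\pi_b(a|s))$. The only genuine work is to check that the two structural hypotheses of Theorem~\ref{thm:main}, namely $c(s,a)\le\rho(s,a)$ for all $(s,a)$ and $D_{\rho,\max}<1/\gamma$, hold for this choice; once these are in place, the finite-sample bound is literally Theorem~\ref{thm:main} with $c_{\max}$ and $\rho_{\max}$ replaced by their $Q$-trace values.

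First I would verify $c(s,a)\le\rho(s,a)$. Writing $r=\pi(a|s)/\pi_b(a|s)$, the map $x\mapsto\min(x,r)$ is nondecreasing in the truncation level $x$, so the standing assumption $\bar c\le\bar\rho$ immediately gives $\min(\bar c,r)\le\min(\bar\rho,r)$, that is, $c(s,a)\le\rho(s,a)$ for every $(s,a)$. This is exactly the condition that Proposition~\ref{prop:substochastic} invokes to guarantee nonnegativity of the matrix $A$, hence the substochasticity that underlies the contraction property.

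Next I would verify $D_{\rho,\max}<1/\gamma$. By definition $D_\rho((s,a),(s,a))=\sum_{a'}\pi_b(a'|s)\min(\bar\rho,\pi(a'|s)/\pi_b(a'|s))=\sum_{a'}\min(\bar\rho\,\pi_b(a'|s),\pi(a'|s))\le\sum_{a'}\pi(a'|s)=1$, so $D_{\rho,\max}\le 1<1/\gamma$ because $\gamma\in(0,1)$. This is the one place where the specific structure of $Q$-trace (truncation rather than $\rho=\pi/\pi_b$) matters: unlike $Q^\pi(\lambda)$, TB$(\lambda)$, and Retrace$(\lambda)$, here $D_\rho\neq I$ in general, so the fixed point is the \emph{biased} limit $Q^{\pi,\rho}\neq Q^\pi$ and $D_{\rho,\max}$ may be strictly below $1$.

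With both hypotheses verified, I would read off $c_{\max}=\bar c$ and $\rho_{\max}=\bar\rho$ and substitute directly into the conclusion of Theorem~\ref{thm:main}; this reproduces both the stepsize requirement and the stated bound on $\mathbb{E}[\|Q_k-Q^{\pi,\rho}\|_\infty^2]$ with $\omega=\mathcal{K}_{SA,\min}f(\gamma D_{c,\min})(1-\gamma D_{\rho,\max})$. The sample-complexity claim then follows from Corollary~\ref{co:sc} after plugging in $c_{\max}=\bar c$ and $\rho_{\max}=\bar\rho$ and controlling the constants $\zeta_1,\zeta_2$ through $\|Q^{\pi,\rho}\|_\infty$ via Proposition~\ref{prop:H_rho}(2), using $D_{\rho,\max}\le 1$ to handle the $(1-\gamma D_{\rho,\max})$ factors. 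I do not anticipate a real obstacle here, since the theorem is an instantiation of the general machinery; the only care required is the bookkeeping that the target is $Q^{\pi,\rho}$ rather than $Q^\pi$, and that its norm is bounded by $1/(1-\gamma D_{\rho,\max})$ instead of $1/(1-\gamma)$, which is precisely what distinguishes the $Q$-trace bound from the other three algorithms.
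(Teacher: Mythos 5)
Your proposal is correct and follows the same route as the paper, whose proof of Theorem \ref{co:Qtrace} is simply a direct application of Theorem \ref{thm:main} with $c(s,a)=\min(\bar c,\pi(a|s)/\pi_b(a|s))$ and $\rho(s,a)=\min(\bar\rho,\pi(a|s)/\pi_b(a|s))$, together with Corollary \ref{co:sc} and Proposition \ref{prop:H_rho}(2) for the sample complexity. Your added verifications (monotonicity of $x\mapsto\min(x,r)$ giving $c\le\rho$, the bound $D_{\rho,\max}\le 1<1/\gamma$, and $1-\gamma D_{\rho,\max}\ge 1-\gamma$ to control $\|Q^{\pi,\rho}\|_\infty$) are exactly the bookkeeping the paper leaves implicit.
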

To avoid an exponential large variance, in view of the term $f(\gamma\bar{c})$ in our bound, we need to choose $\bar{c}\leq 1/\gamma$. The major difference between $Q$-trace and Retrace$(\lambda)$ is that the importance sampling ratio $\rho(\cdot,\cdot)$ inside the temporal difference (line 4 of Algorithm \ref{algorithm}) also involves a truncation. As a result, by choosing $\bar{c}=1$ and $\bar{\rho}\geq 1$, the $Q$-trace algorithm enjoys a larger (or better) parameter $\omega$ compared to Retrace$(\lambda)$. However, as shown in Section \ref{subsec:contraction}, due to introducing the truncation level $\bar{\rho}$, the algorithm converges to a biased limit $Q^{\pi,\rho}$ instead of $Q^\pi$. Such truncation bias can be controlled using Proposition \ref{prop:H_rho}. These observations agree with the results \citep{chen2021finite}, where the finite-sample bounds of $Q$-trace were first established.

Compared to \citep{chen2021finite}, we have an improved sample complexity. Specifically, the result in \citep{chen2021finite} implies a sample complexity of $\Tilde{\mathcal{O}}(\frac{\log^2(1/\epsilon)nf(\gamma \bar{c})^2(\gamma \bar{\rho}+1)^2}{\epsilon^2\mathcal{K}_{SA,\min}^3f(\gamma D_{c,\min})^3(1-\gamma D_{\rho,\max})^3(1-\gamma)^2})$, which has an additional factor of $(\mathcal{K}_{SA,\min}f(\gamma D_{c,\min})(1-\gamma D_{\rho,\max}))^{-1}$. Since $\mathcal{K}_{SA,\min}^{-1}\geq |\mathcal{S}||\mathcal{A}|$, our result improves the dependency on the size of the state-action space by a factor of at least $|\mathcal{S}||\mathcal{A}|$ compared to \citep{chen2021finite}. Similarly, since the $V$-trace algorithm \citep{espeholt2018impala} is an analog of the $Q$-trace algorithm, we can also improve the sample complexity for $V$-trace in \citep{chen2021lyapunov}.

\section{Proof sketch of Proposition \ref{prop:matrix_contraction}}\label{sec:proof}
The idea is to construct a stochastic matrix $M''$ such that the following two conditions are satisfied: (1) $M''$ dominates $M$ in the sense that $M''_{ij}\geq M_{ij}$ for all $i,j$, and (2) the Markov chain associated with $M''$ is irreducible, hence admits a unique stationary distribution $\mu$ satisfying $\mu_i>0$ for all $i$. Using $\mu$ as weights, we have the desired result. The detailed analysis is presented in Appendix \ref{pf:prop:matrix_contraction}. We here only present how to construct such a stochastic matrix $M''$.

First of all, consider the special case where $M$ itself is irreducible. Then we first scale up $M$ by a factor of $1/(1-\omega)$ to obtain $M'=\frac{M}{1-\omega}$, which is clearly a substochastic matrix, with modulus zero. Hence there exists a stochastic matrix $M''$ that dominates $M'$ (and also dominates $M$). Moreover, since $M''$ is also irreducible, its associated Markov chain admits a unique stationary distribution $\mu$. This is equivalent to choosing $\theta=1$ in Proposition \ref{prop:matrix_contraction}. In fact, the matrix $M$ being irreducible is only a sufficient condition for us to choose $\theta=1$. What we truly need is the existence of a strictly positive stationary distribution of the stochastic matrix $M''$, which is guaranteed when $M''$ does not have transient states.

Now consider the general case where $M$ is not necessarily irreducible. We construct the intermediate matrix $M'$ by performing a convex combination of the matrix $\frac{M}{1-\omega}$ and the uniform stochastic matrix $\frac{E}{d}$, where $E$ is the all one matrix, with weight $\frac{1-\omega}{1-\theta \omega}$. Specifically, for any $\theta\in (0,1)$, we define
\begin{align*}
    M'=\left(\frac{1-\omega}{1-\theta \omega}\right)\frac{M}{1-\omega}+\left(1-\frac{1-\omega}{1-\theta \omega}\right)\frac{E}{d}.
\end{align*}
Note that $M'$ is a non-negative matrix. In addition, since $M'\bm{1}\leq \frac{1-\omega}{1-\theta \omega}\bm{1}+\left(1-\frac{1-\omega}{1-\theta \omega}\right)\bm{1}=\bm{1}$, where $\bm{1}$ is the all one vector,
the matrix $M'$ is a substochatic matrix with modulus zero, and is also irreducible because all its entries are strictly positive. Therefore, there exists a stochastic matrix $M''$ such that $M''\geq M'$. In addition, since $M''$ also has strictly positive entries, the Markov chain associated with $M''$ is irreducible, hence admits a unique stationary distribution $\mu\in\Delta^d$. By our construction, we can show a lower bound on the components of the stationary distribution $\mu$. 

\section{Conclusion}\label{sec:conclusion}
In this work, we establish finite-sample guarantees of general $n$-step off-policy TD-learning algorithms. The key in our approach is to identify a generalized Bellman operator and establish its contraction property with respect to a weighted $\ell_p$-norm for each $p\in [1,\infty)$, with a uniform contraction factor. Our results are used to derive finite-sample guarantees of variants of $n$-step off-policy TD-learning algorithms in the literature. Specifically, for $Q^\pi(\lambda)$, TB$(\lambda)$, and Retrace$(\lambda)$, we provide the first-known results, and for $Q$-trace, we improve the result in \citep{chen2021finite}. The finite-sample bounds we establish also provide insights about the trade-offs between the bias and the variance. 

\section*{Acknowledgement}
This work was partially supported by ONR Grant N00014-19-1-2566, NSF Grants 1910112, 2019844, NSF Grant CCF-1740776, and an award from Raytheon Technologies. Maguluri  acknowledges seed funding from Georgia Institute of Technology. 

\bibliographystyle{apalike}
\bibliography{references}

\appendix
\appendixpage
\section{Technical Details in Section \ref{sec:RL}}
\subsection{Proof of Proposition \ref{prop:H_rho}}
For any $Q_1, Q_2\in\mathbb{R}^{|\mathcal{S}||\mathcal{A}|}$, and state-action pairs $(s,a)$, using the definition of $\mathcal{H}_\rho(\cdot)$ and we have
\begin{align*}
    &|[\mathcal{H}_\rho(Q_1)](s,a)-[\mathcal{H}_\rho(Q_2)](s,a)|\\
    =\;&\gamma \left|\sum_{s'\in\mathcal{A}}P_a(s,s')\sum_{a'\in\mathcal{A}}\pi_b(a'|s')\rho(s',a')(Q_1(s',a')-Q_2(s',a'))\right|\\
    \leq \;&\gamma \sum_{s'\in\mathcal{A}}P_a(s,s')\sum_{a'\in\mathcal{A}}\pi_b(a'|s')\rho(s',a')|Q_1(s',a')-Q_2(s',a')|\\
    \leq \;&\gamma \|Q_1-Q_2\|_\infty \sum_{s'\in\mathcal{A}}P_a(s,s')\sum_{a'\in\mathcal{A}}\pi_b(a'|s')\rho(s',a')\\
    \leq  \;&\gamma \sum_{s'\in\mathcal{A}}P_a(s,s')D_{\rho,\max}\|Q_1-Q_2\|_\infty\\
    =\;&\gamma D_{\rho,\max}\|Q_1-Q_2\|_\infty.
\end{align*}
It follows that $\|\mathcal{H}_\rho(Q_1)-\mathcal{H}_\rho(Q_2)\|_\infty\leq \gamma D_{\rho,\max}\|Q_1-Q_2\|_\infty$. Since $D_{\rho,\max}<1/\gamma$, the operator $\mathcal{H}_\rho(\cdot)$ is a contraction mapping with respect to $\|\cdot\|_\infty$, with contraction factor $\gamma D_{\rho,\max}$.
\begin{enumerate}[(1)]
    \item We now derive the upper bound on $\|Q^\pi-Q^{\pi,\rho}\|_\infty$. Since $Q^\pi=\mathcal{H}_\pi(Q^\pi)$ and $Q^{\pi,\rho}=\mathcal{H}_\rho(Q^{\pi,\rho}
)$, we have
\begin{align*}
    &|Q^\pi(s,a)-Q^{\pi,\rho}(s,a)|\\
    =\;&|[\mathcal{H}_\pi(Q^\pi)](s,a)-[\mathcal{H}_\rho(Q^{\pi,\rho})](s,a)|\\
    =\;&|[\mathcal{H}_\pi(Q^\pi)](s,a)-[\mathcal{H}_\rho(Q^{\pi})](s,a)+[\mathcal{H}_\rho(Q^{\pi})](s,a)-[\mathcal{H}_\rho(Q^{\pi,\rho})](s,a)|\\
    \leq \;&|[\mathcal{H}_\pi(Q^\pi)](s,a)-[\mathcal{H}_\rho(Q^{\pi})](s,a)|+|[\mathcal{H}_\rho(Q^{\pi})](s,a)-[\mathcal{H}_\rho(Q^{\pi,\rho})](s,a)|\\
    =\;&\gamma \left|\sum_{s'\in\mathcal{S}}P_a(s,s')\sum_{a'\in\mathcal{A}}\left(\pi(a'|s')-\pi_b(a'|s')\rho(s',a')\right)Q^\pi(s',a')\right|+\gamma D_{\rho,\max}\|Q^\pi-Q^{\pi,\rho}\|_\infty\\
    \leq \;&\frac{\gamma}{1-\gamma} \sum_{s'\in\mathcal{S}}P_a(s,s')\sum_{a'\in\mathcal{A}}|\pi(a'|s')-\pi_b(a'|s')\rho(s',a')|+\gamma D_{\rho,\max}\|Q^\pi-Q^{\pi,\rho}\|_\infty\tag{$*$}\\
    \leq \;&\frac{\gamma}{1-\gamma} \max_{s\in\mathcal{S}}\sum_{a\in\mathcal{A}}|\pi(a|s)-\pi_b(a|s)\rho(s,a)|+\gamma D_{\rho,\max}\|Q^\pi-Q^{\pi,\rho}\|_\infty,
\end{align*}
where in Eq. ($*$) we used the inequality $|Q^\pi(s,a)|\leq \sum_{k=0}^\infty\gamma^k=\frac{1}{1-\gamma}$ for all $(s,a)$. Therefore, we have
\begin{align*}
    \|Q^\pi-Q^{\pi_\rho}\|_\infty\leq \frac{\gamma}{1-\gamma} \max_{s\in\mathcal{S}}\sum_{a\in\mathcal{A}}|\pi(a|s)-\pi_b(a|s)\rho(s,a)|+\gamma D_{\rho,\max}\|Q^\pi-Q^{\pi,\rho}\|_\infty.
\end{align*}
Rearranging terms and we obtain the desired result.
\item To prove the upper bound on $\|Q^{\pi,\rho}\|_\infty$, we begin with the fixed-point equation
\begin{align}\label{eq:fp}
    Q^{\pi,\rho}=\mathcal{H}_\rho(Q^{\pi,\rho})=R+\gamma P_{\pi_\rho} D_\rho Q^{\pi,\rho},
\end{align}
where we recall the definition of $D_\rho$ and $\pi_\rho$ in Section \ref{sec:RL}. Eq. (\ref{eq:fp}) is equivalent to $Q^{\pi,\rho}=(I-\gamma P_{\pi_\rho}D_{\rho})^{-1}R$. Therefore, we have
\begin{align*}
    \|Q^{\pi,\rho}\|_\infty=\|(I-\gamma P_{\pi_\rho}D_{\rho})^{-1}R\|_\infty
    \leq \|(I-\gamma P_{\pi_\rho} D_{\rho})^{-1}\|_\infty\|R\|_\infty
    \leq \frac{1}{1-\gamma D_{\rho,\max}}.
\end{align*}
\end{enumerate}

\subsection{On the Fixed-Point of the Operator $\mathcal{H}_\rho(\cdot)$}\label{pf:prop:H_fixed-point}

Suppose that for the state-action pair $(s_0,a_0)$, we have $\rho(s_0,a_0)\neq \pi(a_0|s_0)/\pi_b(a_0|s_0)$. Let an MDP be that the transition probability matrix is an identity matrix for each action, and the reward is zero for all state-action pairs except at $(s_0,a_0)$, where it is equal to $1$. 

In this case, it is clear that for any policy $\pi$, we have $Q^\pi(s,a)=0$ for all $(s,a)\neq (s_0,a_0)$, and $Q^\pi(s_0,a_0)=\frac{1}{1-\gamma}$. Suppose that $Q^\pi=Q^{\pi,\rho}$. then we have
\begin{align*}
    0=\;&Q^\pi(s_0,a_0)-Q^{\pi,\rho}(s_0,a_0)\\
    =\;&\gamma\sum_{s'\in\mathcal{S}}P_{a_0}(s_0,s')\sum_{a'\in\mathcal{A}}\left(\pi(a'|s')-\pi_b(a'|s')\rho(s',a')\right)Q^\pi(s',a')\\
    =\;&\left(\pi(a_0|s_0)-\pi_b(a_0|s_0)\rho(s_0,a_0)\right)Q^\pi(s_0,a_0)\\
    =\;&\frac{1}{1-\gamma}\left(\pi(a_0|s_0)-\pi_b(a_0|s_0)\rho(s_0,a_0)\right).
\end{align*}
This contradicts to the fact that $\rho(s_0,a_0)\neq \pi(a_0|s_0)/\pi_b(a_0|s_0)$. Therefore, we have $Q^\pi\neq Q^{\pi,\rho}$.

\subsection{Proof of Proposition \ref{prop:compute_A}}\label{pf:prop:compute_A}
Recall the definition of $\Tilde{\mathcal{B}}_{c,\rho}(\cdot)$ in Eq. (\ref{def:ABO}):
    \begin{align*}
        \Tilde{\mathcal{B}}_{c,\rho}(Q)=\mathcal{K}_{SA}(\mathcal{B}_{c,\rho}(Q)-Q)+Q=\mathcal{K}_{SA}\mathcal{T}_c(\mathcal{H}_\rho(Q)-Q)+Q.
    \end{align*}
    We first explicitly compute the operators $\mathcal{T}_c(\cdot)$ and $\mathcal{H}_\rho(\cdot)$. For the operator $\mathcal{H}_\rho(\cdot)$, we have from its definition that
    \begin{align*}
        [\mathcal{H}_\rho(Q)](s,a)=\;&\mathcal{R}(s,a)+\gamma\mathbb{E}_{\pi_b}[\rho(S_{k+1},A_{k+1})Q(S_{k+1},A_{k+1})\mid S_k=s,A_k=a]\\
        =\;&\mathcal{R}(s,a)+\gamma\sum_{s'}P_a(s,s')\sum_{a'}\pi_b(a'|s')\rho(s',a')Q(s',a')\\
        =\;&\mathcal{R}(s,a)+\gamma\sum_{s'}P_a(s,s')\sum_{a'}\frac{\pi_b(a'|s')\rho(s',a')}{D_\rho(s',a')} D_\rho(s',a')Q(s',a')\\
        =\;&\mathcal{R}(s,a)+\gamma\sum_{s',a'}P_a(s,s')\pi_\rho(a'|s')D_\rho(s',a')Q(s',a')\\
        =\;&[R+ P_{\pi_\rho} D_\rho Q](s,a).
    \end{align*}
    Note that $P_{\pi_\rho}\in\mathbb{R}^{|\mathcal{S}||\mathcal{A}|\times |\mathcal{S}||\mathcal{A}|}$ here is the transition probability matrix of the Markov chain $\{(S_k,A_k)\}$ under $\pi_\rho$, i.e., $P_{\pi_\rho}((s,a),(s',a'))=P_a(s,s')\pi_\rho(a'|s')$ for any $(s,a)$ and $(s',a')$.
    Hence we have
    \begin{align*}
        \mathcal{H}_\rho(Q)=R+\phantomsection P_{\pi_\rho} D_\rho Q.
    \end{align*}
    
    As for the operator $\mathcal{T}_c(\cdot)$, similarly using the Markov property and the tower property of conditional expectation, we have $\mathcal{T}_c(Q)=\sum_{i=0}^{n-1}(\gamma P_{\pi_c}D_c)^i Q$. It follows that
    \begin{align*}
    \Tilde{\mathcal{B}}_{c,\rho}(Q)&=\mathcal{K}_{SA}\mathcal{T}_c(\mathcal{H}_\rho(Q)-Q)+Q\\
    &=\mathcal{K}_{SA}\sum_{i=0}^{n-1}(\gamma P_{\pi_c}D_c)^i(R+\gamma P_{\pi_{\rho}}D_\rho Q-Q)+Q\\
    &=\underbrace{\left[I-\mathcal{K}_{SA}\sum_{i=0}^{n-1}(\gamma P_{\pi_c}D_c)^i(I-\gamma P_{\pi_\rho}D_\rho )\right]}_{A} Q+\underbrace{\mathcal{K}_{SA}\sum_{i=0}^{n-1}(\gamma P_{\pi_c}D_c)^iR}_{b}.
\end{align*}

\subsection{Proof of Proposition \ref{prop:substochastic}}
Consider the matrix $A$ given in Proposition \ref{prop:compute_A}. To show that $A$ is a substochastic matrix with a positive modulus, we first show that $A$ is non-negative. Observe that
\begin{align}
    A&=I-\mathcal{K}_{SA}\sum_{i=0}^{n-1}(\gamma P_{\pi_c}D_c)^i+\mathcal{K}_{SA}\sum_{i=0}^{n-1}(\gamma P_{\pi_c}D_c)^i\gamma P_{\pi_\rho}D_\rho \nonumber\\
    &=(I-\mathcal{K}_{SA})-\mathcal{K}_{SA}\sum_{i=1}^{n-1}(\gamma P_{\pi_c}D_c)^i+\mathcal{K}_{SA}\sum_{i=0}^{n-1}(\gamma P_{\pi_c}D_c)^i\gamma P_{\pi_\rho}D_\rho\nonumber\\
    &=(I-\mathcal{K}_{SA})-\mathcal{K}_{SA}\sum_{i=0}^{n-2}(\gamma P_{\pi_c}D_c)^{i+1}+\mathcal{K}_{SA}\sum_{i=0}^{n-1}(\gamma P_{\pi_c}D_c)^i\gamma P_{\pi_\rho}D_\rho\nonumber\\
    &=(I-\mathcal{K}_{SA})+\mathcal{K}_{SA}\sum_{i=0}^{n-2}(\gamma P_{\pi_c}D_c)^i\gamma (P_{\pi_\rho}D_\rho-P_{\pi_c}D_c)+\mathcal{K}_{SA}(\gamma P_{\pi_c}D_c)^{n-1}\gamma P_{\pi_\rho}D_\rho.\label{eq:21}
\end{align}
It remains to show that the matrix $P_{\pi_\rho}D_\rho-P_{\pi_c}D_c$ has non-negative entries. For any $(s,a)$ and $(s',a')$, since $c(s',a')\leq \rho(s',a')$ for all $(s',a')$, we have 
\begin{align*}
    [P_{\pi_\rho}D_\rho-P_{\pi_c}D_c]((s,a),(s',a'))&=P_a(s,s')\pi_b(a'|s')(\rho(s',a')-c(s',a'))\geq 0.
\end{align*}

We next show that $A\bm{1}\leq (1-\omega)\bm{1}$, where $\bm{1}\in\mathbb{R}^d$ is the all one vector. Since $A$ is non-negative and $D_{\rho,\max}<1/\gamma$ for all $(s,a)$, we have
\begin{align*}
    \mathcal{K}_{SA}\sum_{i=0}^{n-1}(\gamma P_{\pi_c}D_c)^i(I-\gamma P_{\pi_\rho}D_\rho )\bm{1}&\geq \mathcal{K}_{SA}\sum_{i=0}^{n-1}(\gamma P_{\pi_c}D_c)^i(I-\gamma P_{\pi_\rho}D_{\rho,\max})\bm{1}\\
    &= (1-\gamma D_{\rho,\max})\mathcal{K}_{SA}\sum_{i=0}^{n-1}(\gamma P_{\pi_c}D_c)^i\bm{1}\\
    &\geq \mathcal{K}_{SA,\min}\sum_{i=0}^{n-1}(\gamma D_{c,\min})^i(1-\gamma D_{\rho,\max})\bm{1}\\
    &=\mathcal{K}_{SA,\min}f(\gamma D_{c,\min})(1-\gamma D_{\rho,\max})\bm{1}.
\end{align*}
It follows that
\begin{align*}
    A\bm{1}=\left[I-\mathcal{K}_{SA}\sum_{i=0}^{n-1}(\gamma P_{\pi_c}D_c)^i(I-\gamma P_{\pi_\rho}D_\rho )\right]\bm{1}
    \leq [1-\mathcal{K}_{SA,\min}f(\gamma D_{c,\min})(1-\gamma D_{\rho,\max})]\bm{1}.
\end{align*}
This implies that $A$ is a substochastic matrix with modulus $\omega= \mathcal{K}_{SA,\min}f(\gamma D_{c,\min})(1-\gamma D_{\rho,\max})$.

\subsection{Proof of Proposition \ref{prop:matrix_contraction}}\label{pf:prop:matrix_contraction}
Consider a substochastic matrix $M\in\mathbb{R}^{d\times d}$ with modulus $\beta\in (0,1)$. For any $\theta\in (0,1)$, let
\begin{align*}
    M'=\frac{M}{1-\theta \beta}+\frac{\beta(1-\theta)}{1-\theta\beta}\frac{E}{d},
\end{align*}
where $E$ is the all one matrix. It is clear that $M'> 0$. Moreover, since
\begin{align*}
    M'\bm{1}\leq \frac{1-\beta}{1-\theta \beta}\bm{1}+\frac{\beta(1-\theta)}{1-\theta\beta}\bm{1}=\bm{1},
\end{align*}
the matrix $M'$ is a substochastic matrix with modulus $0$. Therefore, there exists a stochastic matrix $M''$ such that $M''\geq M'>0$. Since $M''$ has strictly positive entries, the Markov chain associated with the stochastic matrix $M''$ is irreducible and aperiodic, hence admits a unique stationary distribution $\mu\in\Delta^d$.  In the special case where $M$ itself is irreducible, we are allowed to choose $\theta=1$ in the preceding construction process, and the resulting stochastic matrix $M''$ is also guaranteed to be irreducible, and hence has a unique stationary distribution $\mu$. 
Since $\mu^\top=\mu^\top M''$, we have
\begin{align*}
    \mu^\top =\mu^\top M''\geq \mu^\top M'\geq \mu^\top \frac{\beta(1-\theta)}{1-\theta\beta}\frac{E}{d}=\frac{\beta(1-\theta)}{(1-\theta\beta)d}\bm{1}.
\end{align*}
This proves the lower bound on the entries of $\mu$.

Now using $\mu$ as the weight vector and we have for any $p\in [1,\infty)$ and $x\in\mathbb{R}^d$:
\begin{align*}
    \|Mx\|_{\mu,p}^p&=\sum_i\mu_i\left|\sum_jM_{ij}x_j\right|^p\\
    &=\sum_i\mu_i\left(\sum_\ell M_{i\ell}\right)^{p}\left|\sum_j\frac{M_{ij}}{\sum_\ell M_{i\ell}} x_j\right|^p\\
    &\leq \sum_i\mu_i\left(\sum_\ell M_{i\ell}\right)^{p-1}\sum_jM_{ij} |x_j|^p\tag{Jensen's inequality}\\
    &\leq (1-\beta)^{p-1}\sum_i\mu_i\sum_jM_{ij} |x_j|^p\\
    &\leq (1-\beta)^{p-1}(1-\theta\beta)\sum_i\mu_i\sum_jM'_{ij} |x_j|^p\tag{definition of $M'$}\\
    &\leq  (1-\beta)^{p-1}(1-\theta\beta)\sum_i\mu_i\sum_jM''_{ij} |x_j|^p\tag{definition of $M''$}\\
    &=(1-\beta)^{p-1}(1-\theta\beta)\sum_j|x_j|^p\sum_i\mu_iM''_{ij} \tag{change of summation order}\\
    &=(1-\beta)^{p-1}(1-\theta\beta)\sum_j\mu_j|x_j|^p\tag{$\mu^\top M''=\mu^\top$}\\
    &=(1-\beta)^{p-1}(1-\theta\beta)\|x\|_{\mu,p}^p.
\end{align*}
It follows that $\|Mx\|_{\mu,p}\leq (1-\omega)^{1-1/p}(1-\theta\beta)^{1/p}\|x\|_{\mu,p}$ for any $x\in\mathbb{R}^d$ and $p\in [1,\infty)$. Using the definition of induced matrix norm immediately gives the result.

\subsection{Proof of Theorem \ref{thm:main}}\label{pf:thm:main}
We first state a more general result in the following, which implies Theorem \ref{thm:main}.

\begin{theorem}\label{thm:more_general}
Consider the iterates $\{Q_k\}$ generated by Algorithm \ref{algorithm}. Suppose that Assumption \ref{as:MC} is satisfied, and $c(s,a)\leq \rho(s,a)$ for all $(s,a)$ and $D_{\rho,\max}<1/\gamma$.
Then for any $\theta\in (0,1)$, there exists a weighted $\ell_p$-norm with weights $\mu\in\Delta^{|\mathcal{S}||\mathcal{A}|}$ satisfying $\mu_{\min}\geq \frac{\omega (1-\theta)}{(1-\theta \omega)|\mathcal{S}||\mathcal{A}|}$ such that the following inequality holds when the constant stepsize $\alpha$ is chosen such that $\alpha \tau_{\alpha,n}\leq \frac{\theta \mu_{\min}^{2/p} \omega}{2052pf(\gamma c_{\max})^2(\gamma \rho_{\max}+1)^2}$:
\begin{align*}
    \mathbb{E}[\|Q_k-Q^{\pi,\rho}\|_{\mu,p}^2]\leq \Tilde{\zeta}_1(1-\theta \omega\alpha)^{k-\tau_{\alpha,n}}+\Tilde{\zeta}_2\frac{pf(\gamma c_{\max})^2(\gamma\rho_{\max}+1)^2}{\mu_{\min}^{2/p}\omega}\alpha \tau_{\alpha,n},
\end{align*}
where $\Tilde{\zeta}_1=(\|Q_0-Q^{\pi,\rho}\|_{\mu,p}+\|Q_0\|_{\mu,p}+1)^2$, and $\Tilde{\zeta}_2=228(3\|Q^{\pi,\rho}\|_{\mu,p}+1)^2$.
\end{theorem}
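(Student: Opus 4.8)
The plan is to recognize Algorithm \ref{algorithm} as a Markovian stochastic approximation recursion $Q_{k+1} = Q_k + \alpha_k F(Q_k, Y_k)$, where $Y_k = (S_k,A_k,\dots,S_{k+n},A_{k+n})$ is the length-$(n+1)$ block of the behavior trajectory and $F(Q,Y_k)$ is supported only on the visited coordinate $(S_k,A_k)$, equal there to $\sum_{i=k}^{k+n-1}\gamma^{i-k}\prod_{j=k+1}^i c(S_j,A_j)\,\Delta(S_i,A_i,S_{i+1},A_{i+1},Q)$. First I would verify that the stationary mean field of this recursion is exactly the asynchronous operator, i.e. $\bar F(Q) := \mathbb{E}_{\kappa_{SA}}[F(Q,Y)] = \Tilde{\mathcal{B}}_{c,\rho}(Q) - Q$; this is essentially the conditional-expectation computation already carried out in Proposition \ref{prop:compute_A}, the $\mathcal{K}_{SA}$-weighting appearing because coordinate $(s,a)$ is updated only with its stationary frequency $\kappa_{SA}(s,a)$. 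By Theorem \ref{thm:contraction}(1), for the chosen $\theta$ there is a weight vector $\mu$ with $\mu_{\min}\ge \frac{\omega(1-\theta)}{(1-\theta\omega)|\mathcal{S}||\mathcal{A}|}$ for which $\Tilde{\mathcal{B}}_{c,\rho}$ contracts in $\|\cdot\|_{\mu,p}$ with factor $\gamma_c \le 1-\theta\omega$; this supplies the negative drift, while Proposition \ref{prop:H_rho}(2) bounds $\|Q^{\pi,\rho}\|_\infty$ entering $\Tilde{\zeta}_2$.

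Next I would record the elementary structural properties of $F$. Uniformly over $Y$, the map $Q\mapsto F(Q,Y)$ is affine with a single nonzero row, and since the product of ratios is bounded by $\sum_{i=0}^{n-1}(\gamma c_{\max})^i = f(\gamma c_{\max})$ and $\Delta$ is an affine function of $Q$ with slope at most $\gamma\rho_{\max}+1$, both the Lipschitz constant of $F(\cdot,Y)$ and the growth bound $\|F(Q,Y)\|\lesssim f(\gamma c_{\max})(\gamma\rho_{\max}+1)(\|Q\|+1)$ are controlled by $f(\gamma c_{\max})(\gamma\rho_{\max}+1)$ — this is the source of the factor $f(\gamma c_{\max})^2(\gamma\rho_{\max}+1)^2$ in the statement. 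I would then fix a smooth Lyapunov function equivalent to $W(Q)=\|Q-Q^{\pi,\rho}\|_{\mu,p}^2$: for $p\ge 2$ the squared weighted $\ell_p$-norm is itself $2(p-1)$-gradient-Lipschitz with respect to $\|\cdot\|_{\mu,p}$, which is the origin of the linear dependence on $p$; over the full range $p\in[1,\infty)$ the same conclusion follows after replacing $W$ by its generalized Moreau envelope, which is equivalent to $W$ up to universal constants.

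The core is a one-step conditional drift bound. Using smoothness I would expand $W(Q_{k+1})\le W(Q_k)+\langle \nabla W(Q_k),\alpha_k F(Q_k,Y_k)\rangle + \mathcal{O}(p)\,\|\alpha_k F(Q_k,Y_k)\|_{\mu,p}^2$, then take expectations conditioned on the state $\tau_{\alpha,n}$ steps in the past. By the two-homogeneity of $W$ and the contraction, the mean-field inner product obeys $\langle \nabla W(Q_k),\bar F(Q_k)\rangle \le 2(\gamma_c-1)W(Q_k)\le -2\theta\omega\,W(Q_k)$, which produces the geometric rate $(1-\theta\omega\alpha)$ after absorbing the constant into the stepsize restriction. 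The second-order term contributes the $\mathcal{O}(p\,\alpha^2)$ variance, where converting the single-coordinate $\ell_\infty$ magnitude of $F$ into the weighted norm forces the factor $\mu_{\min}^{-2/p}$. The remaining piece is the Markovian-noise bias $\mathbb{E}[\langle \nabla W(Q_k), F(Q_k,Y_k)-\bar F(Q_k)\rangle]$: here I would condition $\tau_{\alpha,n}$ steps back, use the geometric mixing guaranteed by Assumption \ref{as:MC} (so that $t_\alpha(\mathcal{MC}_S)\le L(\log(1/\alpha)+1)$ and $\tau_{\alpha,n}=t_\alpha(\mathcal{MC}_S)+n+1$) to replace $Y_k$ by a stationary draw up to total-variation error $\mathcal{O}(\alpha)$, and control the drift of $Q$ across the window via $\|Q_k-Q_{k-\tau_{\alpha,n}}\|=\mathcal{O}(\alpha\tau_{\alpha,n})$ from the per-step bound on $F$. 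Collecting the three contributions yields $\mathbb{E}[W(Q_{k+1})]\le (1-\theta\omega\alpha)\mathbb{E}[W(Q_k)] + C\,\alpha^2\tau_{\alpha,n}$ with $C\propto \theta\,p\,f(\gamma c_{\max})^2(\gamma\rho_{\max}+1)^2\mu_{\min}^{-2/p}(3\|Q^{\pi,\rho}\|_{\mu,p}+1)^2$, valid precisely when $\alpha\tau_{\alpha,n}\le \frac{\theta\mu_{\min}^{2/p}\omega}{2052\,p\,f(\gamma c_{\max})^2(\gamma\rho_{\max}+1)^2}$, the condition under which the variance does not overwhelm the negative drift.

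Finally I would unroll this scalar recursion from $k=\tau_{\alpha,n}$, obtaining $\mathbb{E}[W(Q_k)]\le (1-\theta\omega\alpha)^{k-\tau_{\alpha,n}}W(Q_{\tau_{\alpha,n}}) + \frac{C}{\theta\omega\alpha}\cdot\alpha^2\tau_{\alpha,n}$, where the second term reproduces the stated variance $\Tilde{\zeta}_2\frac{pf(\gamma c_{\max})^2(\gamma\rho_{\max}+1)^2}{\mu_{\min}^{2/p}\omega}\alpha\tau_{\alpha,n}$, and I would bound the transient $W(Q_{\tau_{\alpha,n}})$ by $\Tilde{\zeta}_1$ using the per-step growth of $\|Q_k\|$ over the first $\tau_{\alpha,n}$ iterations. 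I expect the Markovian-noise bias step to be the main obstacle: the (sub)gradient $\nabla W$ of the weighted $\ell_p$-norm-squared is not uniformly bounded but scales with $\|Q_k-Q^{\pi,\rho}\|_{\mu,p}$, so the mixing estimate must be interleaved with the Lyapunov bound itself into a self-bounding recursion rather than applied as an a-priori bound, and the asynchronous single-coordinate structure is what pins down the $\mu_{\min}^{-2/p}$ dependence throughout.
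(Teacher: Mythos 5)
Your proposal matches the paper's own proof in all essentials: the same reformulation $Q_{k+1}=Q_k+\alpha(F(Q_k,Y_k)-Q_k)$ with the block chain $Y_k=(S_k,A_k,\dots,S_{k+n},A_{k+n})$, the same three ingredients (Lipschitz/growth bounds on $F$ with the $\mu_{\min}^{-1/p}$ conversion factor, geometric mixing of $\{Y_k\}$ with delay $n+1$, and $\mathbb{E}_{\kappa_Y}[F(Q,Y)]=\Tilde{\mathcal{B}}_{c,\rho}(Q)$, i.e., the paper's Proposition \ref{prop:properties}), combined with the contraction and weight lower bound from Theorem \ref{thm:contraction} and a Lyapunov drift argument on the (Moreau-smoothed, for $p<2$) function $\|\cdot\|_{\mu,p}^2$ -- which is exactly the paper's first approach and the content of the cited result from \citep{chen2021lyapunov} invoked in its second approach. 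Your explicit caveat that smoothness of the squared weighted $\ell_p$-norm fails for $p\in[1,2)$ and requires the generalized Moreau envelope is, if anything, slightly more careful than the paper's first-approach wording, and your handling of the Markovian bias via conditioning $\tau_{\alpha,n}$ steps back with a self-bounding recursion is precisely the mechanism inside the cited theorem.
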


By using the inequality that $\mu_{\min}^{1/p}\|\cdot\|_p\leq \|\cdot\|_{\mu,p}$ (where $\|\cdot\|_p$ is the unweighted $\ell_p$-norm), Theorem \ref{thm:more_general} implies the following finite-sample bound on $\mathbb{E}[\|Q_k-Q^{\pi,\rho}\|_p]$.
\begin{corollary}\label{co:l_p}
Under same assumptions as Theorem \ref{thm:contraction}, we have for all $k\geq \tau_{\alpha,n}$:
\begin{align*}
    \mathbb{E}[\|Q_k-Q^{\pi,\rho}\|_{p}^2]\leq \frac{\Tilde{\zeta}_1}{\mu_{\min}^{2/p}}(1-\theta \omega\alpha)^{k-\tau_{\alpha,n}}+\frac{\Tilde{\zeta}_2}{\mu_{\min}^{2/p}}\frac{pf(\gamma c_{\max})^2(\gamma\rho_{\max}+1)^2}{\mu_{\min}^{2/p}\omega}\alpha \tau_{\alpha,n},
\end{align*}
\end{corollary}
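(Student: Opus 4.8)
The plan is to obtain the unweighted-norm bound directly from the weighted-norm bound of Theorem \ref{thm:more_general} by an elementary comparison of the two norms, with no new probabilistic or analytic input. First I would record the pointwise relation between the weighted and unweighted $\ell_p$-norms. For any $x\in\mathbb{R}^{|\mathcal{S}||\mathcal{A}|}$ and any $p\in[1,\infty)$,
\begin{align*}
    \|x\|_{\mu,p}^p=\sum_i\mu_i|x_i|^p\geq \mu_{\min}\sum_i|x_i|^p=\mu_{\min}\|x\|_p^p,
\end{align*}
which uses only that $\mu_i\geq\mu_{\min}$ for every coordinate. Taking $p$-th roots gives $\mu_{\min}^{1/p}\|x\|_p\leq\|x\|_{\mu,p}$, equivalently $\|x\|_p^2\leq\mu_{\min}^{-2/p}\|x\|_{\mu,p}^2$, where $\mu$ and $\mu_{\min}$ are exactly the weight vector and its minimal entry furnished by Theorem \ref{thm:more_general}.

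Next I would apply this inequality with $x=Q_k-Q^{\pi,\rho}$ and take expectations, yielding $\mathbb{E}[\|Q_k-Q^{\pi,\rho}\|_p^2]\leq\mu_{\min}^{-2/p}\,\mathbb{E}[\|Q_k-Q^{\pi,\rho}\|_{\mu,p}^2]$. Substituting the weighted-norm bound of Theorem \ref{thm:more_general}, which holds for all $k\geq\tau_{\alpha,n}$ under the same $\theta\in(0,1)$ and the same stepsize condition, gives
\begin{align*}
    \mathbb{E}[\|Q_k-Q^{\pi,\rho}\|_p^2]\leq\mu_{\min}^{-2/p}\Tilde{\zeta}_1(1-\theta\omega\alpha)^{k-\tau_{\alpha,n}}+\mu_{\min}^{-2/p}\Tilde{\zeta}_2\frac{pf(\gamma c_{\max})^2(\gamma\rho_{\max}+1)^2}{\mu_{\min}^{2/p}\omega}\alpha\tau_{\alpha,n}.
\end{align*}
Distributing the prefactor $\mu_{\min}^{-2/p}$ across the two terms reproduces the stated inequality verbatim; in particular the transient term carries $\mu_{\min}^{-2/p}$ and the steady-state term carries $\mu_{\min}^{-4/p}$, matching the two explicit factors of $\mu_{\min}^{-2/p}$ in the corollary.

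Because Theorem \ref{thm:more_general} already performs all the substantive work — the Lyapunov drift analysis of the Markovian SA recursion built on the contraction property of Theorem \ref{thm:contraction}, together with the lower bound $\mu_{\min}\geq\frac{\omega(1-\theta)}{(1-\theta\omega)|\mathcal{S}||\mathcal{A}|}$ on the weights — I do not expect a genuine obstacle here; the deduction is a single norm-comparison step. The only point requiring care is bookkeeping consistency: the weight vector $\mu$, the parameter $\theta$, the contraction factor $1-\theta\omega\alpha$, and the constants $\Tilde{\zeta}_1,\Tilde{\zeta}_2$ must all be the very ones produced by Theorem \ref{thm:more_general}, so that $\mu_{\min}^{-2/p}$ is well defined and positive. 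No re-optimization over $p$ or $\theta$ is carried out at this stage; that tuning is reserved for the proof of Theorem \ref{thm:main}, where the uniform contraction factor across all $p\in[1,\infty)$ is exploited to convert the weighted bound into the stated $\ell_\infty$-guarantee.
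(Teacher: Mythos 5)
Your proposal is correct and matches the paper's own proof, which likewise deduces the corollary in one step from Theorem \ref{thm:more_general} via the comparison $\mu_{\min}^{1/p}\|\cdot\|_p\leq\|\cdot\|_{\mu,p}$. Your bookkeeping of the two factors of $\mu_{\min}^{-2/p}$ (one on each term, yielding $\mu_{\min}^{-4/p}$ on the steady-state term) is exactly right.
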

To proceed and prove Theorem \ref{thm:main}, observe that for any $p\geq 1$ we have
\begin{align*}
    \mathbb{E}[\|Q_k-Q^{\pi,\rho}\|_\infty^2]&\leq  \mathbb{E}[\|Q_k-Q^{\pi,\rho}\|_p^2]\\
    &\leq \frac{\Tilde{\zeta}_1}{\mu_{\min}^{2/p}}(1-\theta \omega\alpha)^{k-\tau_{\alpha,n}}+\frac{\Tilde{\zeta}_2pf(\gamma c_{\max})^2(\gamma\rho_{\max}+1)^2}{\mu_{\min}^{4/p}\omega}\alpha \tau_{\alpha,n}.
\end{align*}
Let $\theta=1/2$ and $p=4\log(1/\mu_{\min})$. Then we have
\begin{align*}
    \frac{1}{\mu_{\min}^{2/p}}&=\mu_{\min}^{-\frac{1}{2\log(1/\mu_{\min})}}=\mu_{\min}^{\frac{1}{2\log(\mu_{\min})}}=\sqrt{e}\leq 2,\quad \text{and}\\
    \frac{p}{\mu_{\min}^{4/p}}&\leq 4e\log(1/\mu_{\min})\leq 4e\log\left(\frac{2|\mathcal{S}||\mathcal{A}|}{\omega}\right)\tag{Using the lower bound on $\mu_{\min}$}.
\end{align*}
It follows that when $\alpha\tau_{\alpha,n}\leq \frac{\omega}{32832\log(2|\mathcal{S}||\mathcal{A}|/\omega)f(\gamma c_{\max})^2(\gamma \rho_{\max}+1)^2}$, we have for all $k\geq \tau_{\alpha,n}$:
\begin{align*}
    \mathbb{E}[\|Q_k-Q^{\pi,\rho}\|_\infty^2]
    &\leq 2\Tilde{\zeta}_1\left(1-\frac{\omega\alpha}{2}\right)^{k-\tau_{\alpha,n}}+4e\Tilde{\zeta}_2\frac{f(\gamma c_{\max})^2(\gamma\rho_{\max}+1)^2\log(2|\mathcal{S}||\mathcal{A}|/\omega)}{\omega}\alpha \tau_{\alpha,n}\\
    &=\zeta_1\left(1-\frac{\omega\alpha}{2}\right)^{k-\tau_{\alpha,n}}+\zeta_2\frac{f(\gamma c_{\max})^2(\gamma\rho_{\max}+1)^2\log(2|\mathcal{S}||\mathcal{A}|/\omega)}{\omega}\alpha \tau_{\alpha,n},
\end{align*}
where in the last line we used $2\Tilde{\zeta}_1\leq \zeta_1=2(\|Q_0-Q^{\pi,\rho}\|_\infty+\|Q_0\|_\infty+1)^2$, and $4e\Tilde{\zeta}_2\leq \zeta_2=912e(3\|Q^{\pi,\rho}\|_\infty+1)^2$. This proves Theorem \ref{thm:main}.

\subsubsection{Proof of Theorem \ref{thm:more_general}}
To prove Theorem \ref{thm:more_general}, we use a Lyapunov drift argument. We next present two approaches for proving Theorem \ref{thm:more_general}. One is by directly using $W(Q)=\frac{1}{2}\|Q\|_{\mu,p}^2$ as the Lyapunov function. Another one is by applying \cite[Theorem 2.1]{chen2021lyapunov}, which studies general stochastic approximation under contraction assumption. 

We begin by rewriting Algorithm \ref{algorithm} using simplified notation. Let $Y_k=(S_k,A_k,\cdots,S_{k+n},A_{k+n})$ for all $k\geq 0$, which is clearly a Markov chain, with finite state-space denoted by $\mathcal{Y}$. Note that under Assumption \ref{as:MC} the Markov chain $\{Y_k\}$ has a unique stationary distribution $\kappa_Y\in\Delta^{|\mathcal{Y}|}$. Define an operator $F:\mathbb{R}^{|\mathcal{S}||\mathcal{A}|}\times \mathcal{Y}\mapsto\mathbb{R}^{|\mathcal{S}||\mathcal{A}|}$ by
\begin{align*}
    &[F(Q,y)](s,a)
    =\;[F(Q,s_0,a_0,...,s_n,a_n)](s,a)\\
    =\;&\mathbb{I}_{\{(s_0,a_0)=(s,a)\}}\sum_{i=0}^{n-1}\gamma^i\prod_{j=1}^ic(s_j,a_j)(\mathcal{R}(s_i,a_i)+\gamma \rho(s_{i+1},a_{i+1})Q(s_{i+1},a_{i+1})-Q(s_i,a_i))+Q(s,a).
\end{align*}
Then the update equation of Algorithm \ref{algorithm} can be equivalently written by $Q_{k+1}=Q_k+\alpha (F(Q_k,Y_k)-Q_k)$.
We next establish in the following proposition the properties of the operators $F(\cdot,\cdot)$ and the Markov chain $\{Y_k\}$, which will be useful in both approaches we present later.

\begin{proposition}\label{prop:properties}
The following statements hold.
\begin{enumerate}[(1)]
\item The operator $F(\cdot)$ satisfies for any $Q_1,Q_2$ and $y$: 
    \begin{enumerate}[(a)]
        \item $\|F(Q_1,y)-F(Q_2,y)\|_{\mu,p}\leq \frac{2}{\mu_{\min}^{1/p}} f(\gamma c_{\max})(\gamma \rho_{\max}+1)\|Q_1-Q_2\|_{\mu,p}$,
        \item $\|F(\bm{0},y)\|_{\mu,p}\leq f(\gamma c_{\max})$.
    \end{enumerate} 
    \item For any $k\geq 0$ and $n\geq 0$, we have $\max_{y\in\mathcal{Y}}\|P^{k+n+1}(y,\cdot)-\kappa_Y(\cdot)\|_{\text{TV}}\leq C\sigma^k$.
    \item For any $Q$, we have $\mathbb{E}_{Y\sim \kappa_Y}[F(Q,Y)]=\Tilde{\mathcal{B}}_{c,\rho}(Q)$.
\end{enumerate}
\end{proposition}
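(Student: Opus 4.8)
The plan is to prove the three statements separately, since they are of rather different character. For the two bounds in part (1), I would first record the decomposition $[F(Q,y)](s,a)=\mathbb{I}_{\{(s_0,a_0)=(s,a)\}}G(Q,y)+Q(s,a)$, where $G(Q,y)=\sum_{i=0}^{n-1}\gamma^i\prod_{j=1}^i c(s_j,a_j)\Delta_i(Q)$ collects the temporal-difference increments $\Delta_i(Q)=\mathcal{R}(s_i,a_i)+\gamma\rho(s_{i+1},a_{i+1})Q(s_{i+1},a_{i+1})-Q(s_i,a_i)$. For (1a), the reward terms cancel in $F(Q_1,y)-F(Q_2,y)$, leaving only terms linear in $\delta:=Q_1-Q_2$; bounding $\gamma^i\prod_{j=1}^i c(s_j,a_j)\le(\gamma c_{\max})^i$ and each increment by $(\gamma\rho_{\max}+1)\|\delta\|_\infty$ yields $|G(Q_1,y)-G(Q_2,y)|\le f(\gamma c_{\max})(\gamma\rho_{\max}+1)\|\delta\|_\infty$. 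I would then pass to the weighted norm using $\|\delta\|_\infty\le \mu_{\min}^{-1/p}\|\delta\|_{\mu,p}$, split off the surviving $+\delta$ term by the triangle inequality (noting the indicator term lives at a single coordinate with weight $\mu(s_0,a_0)^{1/p}\le 1$), and absorb the additive constant $1$ into the desired factor of $2$ using $f(\gamma c_{\max})(\gamma\rho_{\max}+1)\,\mu_{\min}^{-1/p}\ge 1$. Part (1b) is then immediate: at $Q=\bm 0$ only the coordinate $(s_0,a_0)$ survives, its magnitude is at most $\sum_{i=0}^{n-1}(\gamma c_{\max})^i=f(\gamma c_{\max})$ since $\mathcal{R}\in[0,1]$, and $\mu(s_0,a_0)\le 1$ closes the estimate.

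For part (2), the key structural fact is that, conditioned on $Y_0=y=(s_0,a_0,\dots,s_n,a_n)$, the state $S_{k+n+1}$ has law $\nu_y P^k$, where $\nu_y=P_{a_n}(s_n,\cdot)$ depends on $y$ only through its last coordinates, and the remaining coordinates of $Y_{k+n+1}$ are generated from $S_{k+n+1}$ by a fixed window-generating Markov kernel $\Phi$ that depends on neither $y$ nor $k$. Writing $P^{k+n+1}(y,\cdot)=(\nu_y P^k)\Phi$ and $\kappa_Y=\kappa_S\Phi$, the data-processing inequality for total variation gives $\|P^{k+n+1}(y,\cdot)-\kappa_Y(\cdot)\|_{\text{TV}}\le\|\nu_y P^k-\kappa_S\|_{\text{TV}}$. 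Convexity of total variation in the initial distribution then yields $\|\nu_y P^k-\kappa_S\|_{\text{TV}}\le\max_s\|P^k(s,\cdot)-\kappa_S\|_{\text{TV}}\le C\sigma^k$ by the geometric mixing from Assumption \ref{as:MC}, which is exactly the claim.

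Part (3) is the conceptual heart of the proposition and the step I expect to require the most care. Taking the expectation of $[F(Q,Y)](s,a)$ under $Y\sim\kappa_Y$, the indicator contributes the factor $\kappa_{SA}(s,a)$ while the $+Q(s,a)$ term passes through unchanged, so it remains to identify $\mathbb{E}[G(Q,Y)\mid(S_0,A_0)=(s,a)]$ with $[\mathcal{T}_c(\mathcal{H}_\rho(Q)-Q)](s,a)$. The crucial move is a tower-property argument: conditioning each increment $\Delta_i$ on the $\sigma$-algebra generated by $(S_0,A_0,\dots,S_i,A_i)$ and invoking the Markov property gives $\mathbb{E}[\Delta_i\mid S_i,A_i]=[\mathcal{H}_\rho(Q)-Q](S_i,A_i)$ exactly, precisely because the definition of $\mathcal{H}_\rho(\cdot)$ encodes the one-step conditional expectation of $\mathcal{R}+\gamma\rho Q$. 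Since $\prod_{j=1}^i c(S_j,A_j)$ is measurable with respect to this $\sigma$-algebra, it factors out, and summing over $i$ against $\gamma^i$ reproduces the definition of $\mathcal{T}_c(\cdot)$ applied to $\mathcal{H}_\rho(Q)-Q$. Multiplying by $\kappa_{SA}(s,a)$ and adding back $Q$ yields $\mathbb{E}_{Y\sim\kappa_Y}[F(Q,Y)]=\mathcal{K}_{SA}\mathcal{T}_c(\mathcal{H}_\rho(Q)-Q)+Q=\Tilde{\mathcal{B}}_{c,\rho}(Q)$, matching Eq. (\ref{def:ABO}). The main obstacle here is the careful bookkeeping of the nested conditional expectations, so that the importance-sampling product and the correction inside $\Delta_i$ are each handled at the correct level of conditioning; once the tower property is applied in the right order, everything collapses onto the operators $\mathcal{T}_c$ and $\mathcal{H}_\rho$ already computed in the proof of Proposition \ref{prop:compute_A}.
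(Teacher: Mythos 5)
Your proposal is correct and follows essentially the same route as the paper: the same indicator-plus-increment decomposition with the $\|\cdot\|_\infty\leq \mu_{\min}^{-1/p}\|\cdot\|_{\mu,p}$ comparison for part (1), the same marginalization of the length-$(n+1)$ window down to the geometric mixing of the state chain for part (2) (your data-processing/convexity phrasing is just the abstract form of the paper's explicit summation and cancellation), and the tower-property collapse onto $\mathcal{K}_{SA}\mathcal{T}_c(\mathcal{H}_\rho(Q)-Q)+Q$ for part (3), which is exactly the computation the paper delegates to its proof of Proposition \ref{prop:compute_A}. No gaps.
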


We now present our first approach of proving Theorem \ref{thm:more_general}, where we directly use $W(Q)=\frac{1}{2}\|Q\|_{\mu,p}^2$ as the Lyapunov function.\\

\noindent \textbf{First Approach:}
Note that the function $W(Q)=\frac{1}{2}\|Q\|_{\mu,p}^2$ is a $(p-1)$-smooth function with respect to $\|\cdot\|_{\mu,p}$ \cite{beck2017first}, i.e., $W(Q_2)\leq W(Q_1)+\langle \nabla W(Q_1),Q_2-Q_1\rangle+\frac{p-1}{2}\|Q_1-Q_2\|_{\mu,p}^2 $
for any $Q_1,Q_2\in\mathbb{R}^{|\mathcal{S}||\mathcal{A}|}$. Therefore, using the update equation of Algorithm \ref{algorithm}, we have for any $k\geq 0$:
\begin{align*}
	\mathbb{E}[W(Q_{k+1}-Q^{\pi,\rho})]
	\leq \;&\mathbb{E}[W(Q_k-Q^{\pi,\rho})]+\alpha_k\mathbb{E}[\langle \nabla W(Q_k-Q^{\pi,\rho}),Q_{k+1}-Q_k\rangle]\\
	&+\frac{(p-1)\alpha_k^2}{2}\mathbb{E}[\|Q_{k+1}-Q_k\|_p^2].
\end{align*}
The rest of the proof is identical to that of \cite[Theorem 2.1]{chen2021lyapunov} (where Proposition \ref{prop:properties} plays an important role), and is omitted.
Here, we can directly use  $W(Q)$ as a Lyapunov function because it is smooth. In contrast, 
\cite{chen2020finite,chen2021lyapunov} study the more general settings when it is not smooth. In that case, the Lyapunov function is obtained by using a smoothing technique involving generalized Moreau envelop and infimal convolution to obtain a smooth approximation of  $W(Q)$. 
One can of course, directly apply the result in \cite{chen2020finite,chen2021lyapunov}, which we present as a second approach.\\

\noindent\textbf{Second Approach:} We next present how to apply \cite[Theorem 2.1]{chen2021lyapunov} to obtain the results. We begin by restating Theorem 2.1 of \citep{chen2021lyapunov} in the case of weighted $\ell_p$-norm contraction with weights $\{\mu_i\}_{1\leq i\leq d}$. Using the notation of \cite{chen2021lyapunov}, we choose the smoothing norm $\|\cdot\|_s$ to be the same norm as the contraction norm: $\|\cdot\|_{\mu,p}$.
\begin{theorem}[Theorem 2.1 in \citep{chen2021lyapunov}]\label{thm:chen}
Consider the SA algorithm
\begin{align}\label{eq:15}
    x_{k+1}=x_k+\alpha (F(x_k,Y_k)-x_k).
\end{align}
Suppose that
\begin{enumerate}[(1)]
    \item The random process $\{Y_k\}$ is a Markov chain (denoted by $\mathcal{MC}_Y$) with finite state-space $\mathcal{Y}$. In addition, $\{Y_k\}$ has a unique stationary distribution $\kappa_Y$, and there exist $C_1>0$ and $\sigma_1\in (0,1)$ such that $\max_{y\in\mathcal{Y}}\|P^k(y,\cdot)-\kappa_Y(\cdot)\|_{\text{TV}}\leq C_1\sigma_1^k$ for all $k\geq 0$.
    \item The operator $F:\mathbb{R}^d \times\mathcal{Y}\mapsto\mathbb{R}^d$ satisfies for any $x_1,x_2\in\mathbb{R}^d$ and $y\in\mathcal{Y}$
    \begin{enumerate}[(a)]
        \item $\|F(x_1,y)-F(x_2,y)\|_{\mu,p}\leq a_1\|x_1-x_2\|_{\mu,p}$, where $a_1>0$ is a constant,
        \item $\|F(\bm{0},y)\|_{\mu,p}\leq b_1$, where $b_1>0$ is a constant.
    \end{enumerate}
    \item The expected operator $\bar{F}:\mathbb{R}^d\mapsto\mathbb{R}^d$ defined by $\bar{F}(x)=\mathbb{E}_{Y\sim \kappa_Y}[F(x,Y)]$ satisfies $\bar{F}(x^*)=x^*$, and is a contraction mapping with respect to $\|\cdot\|_{\mu,p}$, with contraction factor $\gamma_c\in (0,1)$.
    \item The constant stepsize $\alpha$ is chosen such that $\alpha t_\alpha(\mathcal{MC}_Y)\leq\frac{1-\gamma_c}{228p(a_1+1)^2}$.
\end{enumerate}
Then we have for all $k\geq t_\alpha(\mathcal{MC}_Y)$ that
\begin{align*}
    \mathbb{E}[\|x_k-x^*\|_{\mu,p}^2]\leq \Tilde{c}_1(1-(1-\gamma_c)\alpha)^{k-t_\alpha(\mathcal{MC}_Y)}+\frac{228p\Tilde{c}_2}{(1-\gamma_c)}\alpha t_\alpha(\mathcal{MC}_Y),
\end{align*}
where $\Tilde{c}_1=(\|x_0-x^*\|_{\mu,p}+\|x_0\|_{\mu,p}+b_1/(a_1+1))^2$ and $\Tilde{c}_2=((a_1+1)\|x^*\|_{\mu,p}+b_1)^2$.
\end{theorem}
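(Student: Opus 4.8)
The plan is to prove this finite-sample bound by a Lyapunov drift argument tailored to Markovian stochastic approximation. First I would fix a smooth Lyapunov function $M(\cdot)$ approximating the squared contraction norm $\frac{1}{2}\|\cdot\|_{\mu,p}^2$. When $p$ is large the squared weighted $\ell_p$-norm is itself $(p-1)$-smooth with respect to $\|\cdot\|_{\mu,p}$, so one may take $M(x)=\frac{1}{2}\|x\|_{\mu,p}^2$ directly; for general $p$ I would instead pass to the generalized Moreau envelope of $\frac{1}{2}\|\cdot\|_{\mu,p}^2$ to obtain an $L$-smooth surrogate sandwiched between constant multiples of $\|\cdot\|_{\mu,p}^2$. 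The smoothness inequality $M(u)\le M(v)+\langle \nabla M(v),u-v\rangle+\frac{L}{2}\|u-v\|_{\mu,p}^2$ is what turns the recursion $x_{k+1}-x_k=\alpha(F(x_k,Y_k)-x_k)$ into a one-step drift on $\mathbb{E}[M(x_k-x^*)]$.

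Next I would expand the drift and split the increment into mean and fluctuation, writing $F(x_k,Y_k)-x_k=(\bar F(x_k)-x_k)+(F(x_k,Y_k)-\bar F(x_k))$. The deterministic part drives the contraction: since $\bar F(x^*)=x^*$ and Assumption (3) gives $\|\bar F(x_k)-x^*\|_{\mu,p}\le \gamma_c\|x_k-x^*\|_{\mu,p}$, while the $-(x_k-x^*)$ piece contributes the full $-\|x_k-x^*\|_{\mu,p}^2$, the inner product $\langle \nabla M(x_k-x^*),\bar F(x_k)-x_k\rangle$ is bounded above by $-(1-\gamma_c)\|x_k-x^*\|_{\mu,p}^2$ up to the smoothing constants. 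The second-order term $\frac{L\alpha^2}{2}\mathbb{E}[\|F(x_k,Y_k)-x_k\|_{\mu,p}^2]$ is handled with Assumption (2): by Lipschitzness and the bound at $\bm{0}$, $\|F(x_k,Y_k)-x_k\|_{\mu,p}\le (a_1+1)\|x_k-x^*\|_{\mu,p}+((a_1+1)\|x^*\|_{\mu,p}+b_1)$, so this term yields an $O(\alpha^2 p(a_1+1)^2)$ multiple of $M(x_k-x^*)$ plus an additive constant proportional to $\Tilde{c}_2$.

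The crux is the fluctuation term $\mathbb{E}[\langle \nabla M(x_k-x^*),F(x_k,Y_k)-\bar F(x_k)\rangle]$, which is nonzero precisely because $Y_k$ is not distributed as $\kappa_Y$ and is correlated with $x_k$. Here I would invoke the geometric mixing in Assumption (1): conditioning on $\mathcal{F}_{k-t_\alpha(\mathcal{MC}_Y)}$, the law of $Y_k$ is within $\alpha$ in total variation of $\kappa_Y$, so replacing $Y_k$ by a $\kappa_Y$-sample costs only $O(\alpha)$. The remaining error comes from replacing $x_k$ and $\nabla M(x_k-x^*)$ by their values $t_\alpha(\mathcal{MC}_Y)$ steps earlier; using the recursion and Lipschitzness one shows $\|x_k-x_{k-t_\alpha(\mathcal{MC}_Y)}\|_{\mu,p}=O(\alpha t_\alpha(\mathcal{MC}_Y))(\|x_k-x^*\|_{\mu,p}+\text{const})$, so the fluctuation term is bounded by $O(\alpha t_\alpha(\mathcal{MC}_Y))$ times $M(x_k-x^*)$ plus an $O(\alpha t_\alpha(\mathcal{MC}_Y))$ additive constant. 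I expect this mixing-time bookkeeping to be the \emph{main obstacle}, since it requires carefully tracking the drift of the iterate over $t_\alpha(\mathcal{MC}_Y)$ steps while keeping the accumulated growth factor $(1+O(\alpha a_1))^{t_\alpha}$ bounded.

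Finally I would collect all contributions into a drift inequality of the form $\mathbb{E}[M(x_{k+1}-x^*)]\le (1-(1-\gamma_c)\alpha+C\alpha^2 p(a_1+1)^2+C\alpha t_\alpha(\mathcal{MC}_Y)(\cdots))\,\mathbb{E}[M(x_k-x^*)]+C\alpha t_\alpha(\mathcal{MC}_Y)\Tilde{c}_2$. The stepsize restriction $\alpha t_\alpha(\mathcal{MC}_Y)\le \frac{1-\gamma_c}{228p(a_1+1)^2}$ is exactly what forces the perturbation terms to be absorbed by half of the negative drift, leaving an effective per-step contraction $1-(1-\gamma_c)\alpha$. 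Unrolling this recursion from $k=t_\alpha(\mathcal{MC}_Y)$ produces the transient term $\Tilde{c}_1(1-(1-\gamma_c)\alpha)^{k-t_\alpha(\mathcal{MC}_Y)}$ and, by summing the resulting geometric series, the steady-state term $\frac{228p\Tilde{c}_2}{1-\gamma_c}\alpha t_\alpha(\mathcal{MC}_Y)$; translating $M$ back to $\|\cdot\|_{\mu,p}^2$ through the sandwich bounds and identifying the constants $\Tilde{c}_1,\Tilde{c}_2$ completes the argument.
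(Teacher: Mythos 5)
Your proposal is correct and follows essentially the same route as the paper: the paper imports this theorem from \citep{chen2021lyapunov} and, in its ``First Approach'' to Theorem \ref{thm:more_general}, describes exactly your argument --- a Lyapunov drift analysis using $W(x)=\frac{1}{2}\|x\|_{\mu,p}^2$ directly when it is $(p-1)$-smooth, and the generalized Moreau envelope smoothing of \citep{chen2020finite,chen2021lyapunov} otherwise, with the mean/fluctuation split, the $-(1-\gamma_c)$ drift from the contraction, and the mixing-time conditioning handling the Markovian noise. Your identification of the mixing-time bookkeeping as the main technical burden and of the stepsize condition as the mechanism absorbing the perturbation terms matches the cited proof.
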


Proposition \ref{prop:properties} in conjunction with Theorem \ref{thm:contraction} imply that the requirements for applying Theorem \ref{thm:chen} are satisfied. For any $\theta \in (0,1)$, when the constant stepsize $\alpha$ is chosen such that $\alpha \tau_{\alpha,n}\leq \frac{\theta \mu_{\min}^{2/p} \omega}{2052pf(\gamma c_{\max})^2(\gamma \rho_{\max}+1)^2}$,
we have for any $k\geq \tau_{\alpha,n}$:
\begin{align*}
    \mathbb{E}[\|Q_k-Q^{\pi,\rho}\|_{\mu,p}^2]\leq \Tilde{\zeta}_1(1-\theta \omega\alpha)^{k-\tau_{\alpha,n}}+\Tilde{\zeta}_2\frac{pf(\gamma c_{\max})^2(\gamma\rho_{\max}+1)^2}{\mu_{\min}^{2/p}\omega}\alpha \tau_{\alpha,n},
\end{align*}
where $\Tilde{\zeta}_1=(\|Q_0-Q^{\pi,\rho}\|_{\mu,p}+\|Q_0\|_{\mu,p}+1)^2$, and $\Tilde{\zeta}_2=228(3\|Q^{\pi,\rho}\|_{\mu,p}+1)^2$.

\subsubsection{Proof of Proposition \ref{prop:properties}}

\begin{enumerate}[(1)]
    \item For any $Q_1,Q_2\in\mathbb{R}^{|\mathcal{S}||\mathcal{A}|}$ and $y=(s_0,a_0,\cdots,s_n,a_n)\in\mathcal{Y}$, we have
    \begin{align*}
    &\|F(Q_1,s_0,a_0,...,s_n,a_n)-F(Q_2,s_0,a_0,...,s_n,a_n)\|_{\mu,p}\\
    \leq \;&\left[\sum_{s,a}\mu(s,a)\left(\mathbb{I}_{\{(s,a)=(s_0,a_0)\}}\sum_{i=0}^{n-1}(\gamma c_{\max})^i(\gamma\rho_{\max}+1)\|Q_1-Q_2\|_\infty\right)^p\right]^{1/p}\\
    &+\|Q_1-Q_2\|_{\mu,p}\tag{Triangle inequality}\\
    =\;&f(\gamma c_{\max})(\gamma \rho_{\max}+1)\|Q_1-Q_2\|_\infty+\|Q_1-Q_2\|_{\mu,p}.\\
    \leq \;&\frac{2}{\mu_{\min}^{1/p}} f(\gamma c_{\max})(\gamma \rho_{\max}+1)\|Q_1-Q_2\|_{\mu,p}.
\end{align*}
Similarly, for any $y=(s_0,a_0,\cdots,s_n,a_n)\in\mathcal{Y}$, we have
\begin{align*}
    \|F(\bm{0},s_0,a_0,...,s_n,a_n)\|_{\mu,p}
    \leq \left[\sum_{s,a}\mu(s,a)\mathbb{I}_{\{(s,a)=(s_0,a_0)\}}\left(\sum_{i=0}^{n-1}(\gamma c_{\max})^i\right)^p\right]^{1/p}\leq f(\gamma c_{\max}).
\end{align*}
\item Under Assumption \ref{as:MC}, it is clear that $\{Y_k\}$ has a unique stationary distribution, which we have denoted by $\kappa_Y$, and is given by
\begin{align*}
    \kappa_Y(s_0,a_0,...,s_n,a_n)=\kappa_S(s_0)\left(\prod_{i=0}^{n-1}\pi(a_i|s_i)P_{a_i}(s_i,s_{i+1})\right)\pi(a_n|s_n).
\end{align*}
Now use the definition of total variation distance, and we have for any $y=(s_0,a_0,...,s_n,a_n)$ and $k\geq 0$:
\begin{align*}
    &\|P^{k+n+1}((s_0,a_0,...,s_n,a_n),\cdot)-\kappa_Y(\cdot)\|_{\text{TV}}\\
    =\;&\frac{1}{2}\sum_{s_0',a_0',...,s_n',a_n'}\left|\sum_{s}P_{a_n}(s_n,s)P_{\pi_b}^{k}(s,s_0')-\kappa_S(s_0')\right|\left(\prod_{i=0}^{n-1}\pi(a_i'|s_i')P_{a_i'}(s_i',s_{i+1}')\right)\pi(a_n'|s_n')\\
    =\;&\frac{1}{2}\sum_{s_0'}\left|\sum_{s}P_{a_n}(s_n,s)P_{\pi_b}^{k}(s,s_0')-\kappa_S(s_0')\right|\\
    \leq \;&\frac{1}{2}\sum_{s_0'}\sum_{s}P_{a_n}(s_n,s)\left|P_{\pi_b}^{k}(s,s_0')-\kappa_S(s_0')\right|\\
    =\;&\frac{1}{2}\sum_{s}P_{a_n}(s_n,s)\sum_{s_0'}\left|P_{\pi_b}^{k}(s,s_0')-\kappa_S(s_0')\right|\\
    \leq \;&\frac{1}{2}\sum_{s}P_{a_n}(s_n,s)\max_{s'}\sum_{s_0'}\left|P_{\pi_b}^{k}(s',s_0')-\kappa_S(s_0')\right|\\
    = \;&\max_{s\in\mathcal{S}}\|P_{\pi_b}^k(s,\cdot)-\kappa_S(\cdot)\|_{\text{TV}}\\
    \leq \;& C\sigma^k.
\end{align*}
\item It is clear that $\mathbb{E}_{Y\sim\mathcal{K}_Y}[F(Q,Y)]=\mathcal{K}_{SA}\mathcal{T}_c(\mathcal{H}_\rho(Q)-Q)+Q$, which by definition is equal to $\Tilde{\mathcal{B}}_{c,\rho}(Q)$.
\end{enumerate}

\section{Connection to Linear SA Involving a Hurwitz Matrix}
In view of Proposition \ref{prop:compute_A}, Algorithm \ref{algorithm} can be alternatively interpreted as a linear SA algorithm for solving the equation $(A-I)Q+b=0$. In the case where the matrix $A$ is substochastic, our results imply finite-sample bounds for such linear SA algorithm, which is stated in the following. 

Let $\{Y_k\}$ be a Markov chain with finite state-space $\mathcal{Y}$ and unique stationary distribution $\kappa_Y$. Let $\Tilde{A}:\mathcal{Y}\mapsto\mathbb{R}^{d\times d}$ be a matrix valued function and let $\Tilde{b}:\mathcal{Y}\mapsto\mathbb{R}^d$ be a vector valued function. Let $\bar{A}=\mathbb{E}_{Y\sim \kappa_Y}[\Tilde{A}(Y)]$ and $\bar{b}=\mathbb{E}_{Y\sim \kappa_Y}[\Tilde{b}(Y)]$. Consider the following linear SA algorithm:
\begin{align}\label{algo:linear_sa}
    x_{k+1}=x_k+\alpha((\Tilde{A}(Y_k)-I)x_k+\tilde{b}(Y_k)),
\end{align}
where $\alpha$ is the constant stepsize. Then, we have the following result. 

\begin{theorem}\label{thm:linear_SA}
Consider $\{x_k\}$ generated by Algorithm \eqref{algo:linear_sa}.
Suppose that 
\begin{enumerate}[(1)]
    \item The Markov chain $\{Y_k\}$ has a unique stationary distribution $\kappa_Y$, and $\max_{y\in\mathcal{Y}}\|P^k(y,\cdot)-\kappa_Y(\cdot)\|_{\text{TV}}\leq C'\sigma'^k$ for all $k\geq 0$, where $C'>0$ and $\sigma'\in (0,1)$ are constants.
    \item There exist $A_{\max},b_{\max}>0$ such that $\|\Tilde{A}(y)\|_\infty\leq A_{\max}$ and $\|\Tilde{b}(y)\|_\infty\leq b_{\max}$ for all $y\in\mathcal{Y}$.
    \item The matrix $\bar{A}$ is a sub-stochastic matrix with modulus $\omega'\in (0,1)$.
\end{enumerate}
Then, for any $\theta\in (0,1)$, when $\alpha$ is chosen such that $\alpha t_\alpha(\mathcal{MC}_Y)\leq \frac{\theta \omega'\mu_{\min}^{2/p}}{228 p(A_{\max}+1)^2}$, there exists a weight vector $\mu\in\Delta^d$ satisfying $\mu_{\min}\geq \frac{\omega'(1-\theta)}{(1-\theta \omega')d}$ such that we have for all $k\geq t_\alpha(\mathcal{MC}_Y)$:
\begin{align*}
    \mathbb{E}[\|x_k-x^*\|_{\mu,p}^2]\leq \Tilde{c}_1(1-\theta\omega'\alpha)^{k-t_\alpha(\mathcal{MC}_Y)}+\frac{228p \Tilde{c}_2( A_{\max}+1)^2}{\mu_{\min}^{2/p} \theta \omega'}\alpha t_\alpha(\mathcal{MC}_Y),
\end{align*}
where $\Tilde{c}_1=(\|x_0-x^*\|_{\mu,p}+\|x_0\|_{\mu,p}+\frac{b_{\max}\mu_{\min}^{1/p}}{A_{\max}+1})^2$ and $\Tilde{c}_2=(\|x^*\|_{\mu,p}+\frac{b_{\max}\mu_{\min}^{1/p}}{A_{\max}+1})^2$.
\end{theorem}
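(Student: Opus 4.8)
The plan is to recognize Eq.~\eqref{algo:linear_sa} as a special case of the generic Markovian SA recursion \eqref{eq:15} and then invoke Theorem~\ref{thm:chen} (i.e., Theorem 2.1 of \citep{chen2021lyapunov}) after verifying its four hypotheses. Concretely, I would set $F(x,y)=\Tilde{A}(y)x+\Tilde{b}(y)$, so that $F(x_k,Y_k)-x_k=(\Tilde{A}(Y_k)-I)x_k+\Tilde{b}(Y_k)$ and the update \eqref{algo:linear_sa} takes exactly the form \eqref{eq:15}, with expected operator equal to the affine map $\bar{F}(x)=\bar{A}x+\bar{b}$.

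First I would establish the contraction of $\bar{F}$. Since $\bar{A}$ is substochastic with modulus $\omega'>0$ by Assumption~(3), it satisfies $\|\bar{A}\|_\infty\leq 1-\omega'<1$, so $I-\bar{A}$ is invertible and the fixed point $x^*=(I-\bar{A})^{-1}\bar{b}$ exists and is unique. Applying Proposition~\ref{prop:matrix_contraction} to the substochastic matrix $\bar{A}$ then yields, for any $\theta\in(0,1)$, a weight vector $\mu\in\Delta^d$ with $\mu_{\min}\geq\frac{\omega'(1-\theta)}{(1-\theta\omega')d}$ such that $\|\bar{A}\|_{\mu,p}\leq(1-\omega')^{1-1/p}(1-\theta\omega')^{1/p}\leq 1-\theta\omega'$. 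Because $\bar{F}(x_1)-\bar{F}(x_2)=\bar{A}(x_1-x_2)$, this shows $\bar{F}$ is a $\|\cdot\|_{\mu,p}$-contraction with factor $\gamma_c\leq 1-\theta\omega'$, hence $1-\gamma_c\geq\theta\omega'$, which reproduces the geometric decay rate $(1-\theta\omega'\alpha)^{k-t_\alpha(\mathcal{MC}_Y)}$ in the conclusion.

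Next I would check the Lipschitz and boundedness hypotheses of Theorem~\ref{thm:chen} in the norm $\|\cdot\|_{\mu,p}$. Using the elementary equivalences $\|w\|_{\mu,p}\leq\|w\|_\infty$ (since $\sum_i\mu_i=1$) and $\|z\|_\infty\leq\mu_{\min}^{-1/p}\|z\|_{\mu,p}$, together with Assumption~(2), I obtain $\|F(x_1,y)-F(x_2,y)\|_{\mu,p}=\|\Tilde{A}(y)(x_1-x_2)\|_{\mu,p}\leq A_{\max}\|x_1-x_2\|_\infty\leq\frac{A_{\max}}{\mu_{\min}^{1/p}}\|x_1-x_2\|_{\mu,p}$, giving $a_1=A_{\max}/\mu_{\min}^{1/p}$, and similarly $\|F(\bm{0},y)\|_{\mu,p}=\|\Tilde{b}(y)\|_{\mu,p}\leq b_{\max}$, giving $b_1=b_{\max}$. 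The mixing hypothesis~(1) of Theorem~\ref{thm:chen} is precisely Assumption~(1) here. With all four hypotheses verified, Theorem~\ref{thm:chen} applies.

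The remaining step is bookkeeping: substitute $a_1,b_1,\gamma_c$ into the stepsize threshold and the constants $\Tilde{c}_1,\Tilde{c}_2$ of Theorem~\ref{thm:chen} and simplify using $\mu_{\min}^{1/p}\leq 1$. In particular $a_1+1=A_{\max}/\mu_{\min}^{1/p}+1\leq(A_{\max}+1)/\mu_{\min}^{1/p}$ gives $(a_1+1)^2\leq(A_{\max}+1)^2/\mu_{\min}^{2/p}$, which converts the threshold $\alpha t_\alpha(\mathcal{MC}_Y)\leq\frac{1-\gamma_c}{228p(a_1+1)^2}$ into the stated $\frac{\theta\omega'\mu_{\min}^{2/p}}{228p(A_{\max}+1)^2}$ and produces the factor $(A_{\max}+1)^2/\mu_{\min}^{2/p}$ in the variance term. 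I do not anticipate a genuine conceptual obstacle, since the two hard ingredients---the substochastic-to-contraction reduction (Proposition~\ref{prop:matrix_contraction}) and the general SA bound (Theorem~\ref{thm:chen})---are already available; the only delicate point is carefully tracking the $\mu_{\min}^{1/p}$ factors through the norm conversions so that the final constants depend only on $A_{\max},b_{\max},\omega',\theta,p$, and $\mu_{\min}$, with the dimension $d$ entering solely through the lower bound on $\mu_{\min}$.
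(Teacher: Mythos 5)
Your proposal is correct and takes essentially the same route as the paper: the paper obtains Theorem~\ref{thm:linear_SA} precisely by applying Proposition~\ref{prop:matrix_contraction} to the substochastic matrix $\bar{A}$ (yielding the weight vector $\mu$ with $\mu_{\min}\geq \frac{\omega'(1-\theta)}{(1-\theta\omega')d}$ and contraction factor at most $1-\theta\omega'$ for $\bar{F}(x)=\bar{A}x+\bar{b}$) and then invoking Theorem~\ref{thm:chen}, exactly as you do. Your verification of the Lipschitz and boundedness hypotheses with $a_1=A_{\max}/\mu_{\min}^{1/p}$, $b_1=b_{\max}$, and the bound $a_1+1\leq (A_{\max}+1)/\mu_{\min}^{1/p}$ correctly reproduces the stated stepsize condition, the decay rate $(1-\theta\omega'\alpha)^{k-t_\alpha(\mathcal{MC}_Y)}$, and the constants $\tilde{c}_1,\tilde{c}_2$.
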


Similarly, Theorem \ref{thm:linear_SA} has the following two corollaries, where we provide finite-sample bounds on $\mathbb{E}[\|x_k-x^*\|_p^2]$ and $\mathbb{E}[\|x_k-x^*\|_\infty^2]$.
\begin{corollary}
Under the same assumptions as Theorem \ref{thm:linear_SA}, we have for all $k\geq t_\alpha(\mathcal{MC}_Y)$:
\begin{align*}
    \mathbb{E}[\|x_k-x^*\|_p^2]\leq \frac{\Tilde{c}_1}{\mu_{\min}^{2/p}}(1-\theta \omega'\alpha)^{k-t_\alpha(\mathcal{MC}_Y)}+\frac{228p \Tilde{c}_2( A_{\max}+1)^2}{\mu_{\min}^{4/p} \theta \omega'}\alpha t_\alpha(\mathcal{MC}_Y).
\end{align*}
\end{corollary}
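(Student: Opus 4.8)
The plan is to recognize Algorithm~\eqref{algo:linear_sa} as the special case of the generic Markovian SA scheme $x_{k+1}=x_k+\alpha(F(x_k,Y_k)-x_k)$ of Theorem~\ref{thm:chen} obtained by taking the affine operator $F(x,y)=\Tilde{A}(y)x+\Tilde{b}(y)$; expanding $F(x_k,Y_k)-x_k=(\Tilde{A}(Y_k)-I)x_k+\Tilde{b}(Y_k)$ recovers the update exactly. The whole proof then reduces to verifying the four hypotheses of Theorem~\ref{thm:chen} and bookkeeping the resulting constants. Hypothesis (1) (geometric mixing of $\{Y_k\}$) is precisely assumption (1) of the present statement.

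For hypothesis (2) I would use the elementary norm comparisons $\mu_{\min}^{1/p}\|z\|_p\le\|z\|_{\mu,p}$, $\|z\|_\infty\le\|z\|_p$, and $\sum_i\mu_i=1$. Since $F(x_1,y)-F(x_2,y)=\Tilde{A}(y)(x_1-x_2)$ and $\|\Tilde{A}(y)\|_\infty\le A_{\max}$, a direct computation gives $\|\Tilde{A}(y)z\|_{\mu,p}\le A_{\max}\|z\|_\infty\le \frac{A_{\max}}{\mu_{\min}^{1/p}}\|z\|_{\mu,p}$, so (2a) holds. To make the reported constants come out cleanly, I would adopt the slightly looser Lipschitz constant $a_1=\frac{A_{\max}+1}{\mu_{\min}^{1/p}}-1$, which dominates $A_{\max}/\mu_{\min}^{1/p}$ because $\mu_{\min}^{1/p}\le 1$ and satisfies $a_1+1=(A_{\max}+1)/\mu_{\min}^{1/p}$ exactly. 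Likewise $F(\bm{0},y)=\Tilde{b}(y)$ yields $\|F(\bm{0},y)\|_{\mu,p}\le b_{\max}=:b_1$, giving (2b).

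Hypothesis (3) is where Proposition~\ref{prop:matrix_contraction} enters. The expected operator is $\bar{F}(x)=\bar{A}x+\bar{b}$; because $\bar{A}$ is substochastic with positive modulus $\omega'$ we have $\|\bar{A}\|_\infty\le 1-\omega'<1$, so $I-\bar{A}$ is invertible (Neumann series) and the fixed point $x^*=(I-\bar{A})^{-1}\bar{b}$ exists and is unique. Applying Proposition~\ref{prop:matrix_contraction} with $M=\bar{A}$ and $\beta=\omega'$ produces, for each $\theta\in(0,1)$, a weight vector $\mu\in\Delta^d$ with $\mu_{\min}\ge\frac{\omega'(1-\theta)}{(1-\theta\omega')d}$ and $\|\bar{A}\|_{\mu,p}\le(1-\omega')^{1-1/p}(1-\theta\omega')^{1/p}\le 1-\theta\omega'$; hence $\bar{F}$ is a $\|\cdot\|_{\mu,p}$-contraction and I may take $\gamma_c=1-\theta\omega'$. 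This is exactly the $\mu$ and the lower bound quoted in the statement.

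It remains to match hypothesis (4) and the constants. Substituting $1-\gamma_c=\theta\omega'$ and $(a_1+1)^2=(A_{\max}+1)^2/\mu_{\min}^{2/p}$ shows that the present stepsize requirement $\alpha t_\alpha(\mathcal{MC}_Y)\le\frac{\theta\omega'\mu_{\min}^{2/p}}{228p(A_{\max}+1)^2}$ coincides with $\alpha t_\alpha(\mathcal{MC}_Y)\le\frac{1-\gamma_c}{228p(a_1+1)^2}$, so hypothesis (4) holds. Theorem~\ref{thm:chen} then yields the bound with rate $1-(1-\gamma_c)\alpha=1-\theta\omega'\alpha$, and substituting $b_1/(a_1+1)=b_{\max}\mu_{\min}^{1/p}/(A_{\max}+1)$ into $\Tilde{c}_1$ and $(a_1+1)\|x^*\|_{\mu,p}+b_1=\frac{A_{\max}+1}{\mu_{\min}^{1/p}}\bigl(\|x^*\|_{\mu,p}+\frac{b_{\max}\mu_{\min}^{1/p}}{A_{\max}+1}\bigr)$ into $\Tilde{c}_2$ reproduces the stated constants (the extra factor $(A_{\max}+1)^2/\mu_{\min}^{2/p}$ coming from $\Tilde{c}_2$ combines with $1/(1-\gamma_c)=1/(\theta\omega')$ to form the variance coefficient). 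The step I expect to require the most care is this bookkeeping of the $\mu_{\min}^{1/p}$ factors when passing between weighted and unweighted norms—in particular choosing the looser $a_1$ so that $\Tilde{c}_1,\Tilde{c}_2$ emerge exactly rather than up to a dimensional constant; the rest is a routine specialization of the general SA theorem.
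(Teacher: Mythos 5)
Your verification of the hypotheses of Theorem \ref{thm:chen} is correct and careful: the choice $a_1=(A_{\max}+1)/\mu_{\min}^{1/p}-1$ and $b_1=b_{\max}$, the application of Proposition \ref{prop:matrix_contraction} to $\bar{A}$ yielding $\gamma_c=1-\theta\omega'$ together with the lower bound on $\mu_{\min}$, and the matching of the stepsize condition are all exactly right, and they mirror how the paper itself obtains Theorem \ref{thm:linear_SA} (its ``Second Approach'' for Theorem \ref{thm:more_general}). But what this argument yields is the bound of Theorem \ref{thm:linear_SA} on $\mathbb{E}[\|x_k-x^*\|_{\mu,p}^2]$, and you stop there. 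The statement you were asked to prove is the corollary, which bounds the \emph{unweighted} error $\mathbb{E}[\|x_k-x^*\|_p^2]$, and its coefficients carry an extra factor of $\mu_{\min}^{-2/p}$ on \emph{both} terms: the bias constant is $\Tilde{c}_1/\mu_{\min}^{2/p}$, not $\Tilde{c}_1$, and the variance denominator is $\mu_{\min}^{4/p}$, not the single $\mu_{\min}^{2/p}$ your bookkeeping produces. So your closing claim that the substitutions ``reproduce the stated constants'' is not true of the corollary as stated; it reproduces the theorem's constants.

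The missing step is, ironically, the paper's entire proof of this corollary (it is the same one-line argument made explicit after Theorem \ref{thm:more_general} to derive Corollary \ref{co:l_p}): since $\|z\|_{\mu,p}^p=\sum_i\mu_i|z_i|^p\geq \mu_{\min}\|z\|_p^p$, one has $\|x_k-x^*\|_p^2\leq \mu_{\min}^{-2/p}\|x_k-x^*\|_{\mu,p}^2$, and multiplying the conclusion of Theorem \ref{thm:linear_SA} through by $\mu_{\min}^{-2/p}$ gives precisely the stated inequality, with $\Tilde{c}_1/\mu_{\min}^{2/p}$ in the bias term and $\mu_{\min}^{4/p}$ in the variance term. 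You even quote the needed comparison $\mu_{\min}^{1/p}\|z\|_p\leq\|z\|_{\mu,p}$ when verifying the Lipschitz hypothesis, so the fix is one line. Note also that since the corollary is stated ``under the same assumptions as Theorem \ref{thm:linear_SA}'', you are entitled to invoke that theorem directly; your re-derivation of it, while correct, is work the corollary's proof does not actually require.
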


\begin{corollary}
Under the same assumptions as Theorem \ref{thm:linear_SA}, we have for all $k\geq t_\alpha(\mathcal{MC}_Y)$:
\begin{align*}
    \mathbb{E}[\|x_k-x^*\|_\infty^2]\leq \Tilde{c}_1\sqrt{e}\left(1-\frac{\omega'\alpha}{2} \right)^{k-t_\alpha(\mathcal{MC}_Y)}+\frac{1824e\log(2d/\omega')\Tilde{c}_2 (A_{\max}+1)^2 }{\omega'}\alpha t_\alpha(\mathcal{MC}_Y).
\end{align*}
\end{corollary}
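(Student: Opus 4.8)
The plan is to deduce this $\ell_\infty$ bound from the immediately preceding $\ell_p$ corollary by the same specialization of the free parameters $\theta$ and $p$ that was used to pass from Theorem \ref{thm:more_general} to Theorem \ref{thm:main}. Since $\|x\|_\infty \le \|x\|_p$ for every $x$ and every $p\ge 1$, I would first write
\[
\mathbb{E}[\|x_k-x^*\|_\infty^2] \le \mathbb{E}[\|x_k-x^*\|_p^2] \le \frac{\Tilde{c}_1}{\mu_{\min}^{2/p}}(1-\theta\omega'\alpha)^{k-t_\alpha(\mathcal{MC}_Y)} + \frac{228p\,\Tilde{c}_2(A_{\max}+1)^2}{\mu_{\min}^{4/p}\,\theta\omega'}\,\alpha\, t_\alpha(\mathcal{MC}_Y),
\]
which is exactly the $\ell_p$ corollary following Theorem \ref{thm:linear_SA}. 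The entire remaining task is to choose $\theta$ and $p$ so that the $\mu_{\min}$-dependent prefactors collapse to the explicit numerical constants appearing in the target statement.

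Next I would set $\theta = 1/2$ and $p = 4\log(1/\mu_{\min})$. With this choice the two offending powers of $\mu_{\min}$ simplify exactly: $\mu_{\min}^{2/p} = \mu_{\min}^{1/(2\log(1/\mu_{\min}))} = e^{-1/2}$, so that $1/\mu_{\min}^{2/p} = \sqrt{e}$, and likewise $\mu_{\min}^{4/p} = e^{-1}$, so that $1/\mu_{\min}^{4/p} = e$ and hence $p/\mu_{\min}^{4/p} = 4e\log(1/\mu_{\min})$. Substituting $\theta = 1/2$ turns $1-\theta\omega'\alpha$ into $1-\omega'\alpha/2$ and replaces $1/(\theta\omega')$ by $2/\omega'$. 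At this stage the first term reads $\Tilde{c}_1\sqrt{e}\,(1-\omega'\alpha/2)^{k-t_\alpha(\mathcal{MC}_Y)}$, already matching the statement, and the second term reads $\frac{456\,\Tilde{c}_2(A_{\max}+1)^2}{\omega'}\cdot 4e\log(1/\mu_{\min})\cdot \alpha\, t_\alpha(\mathcal{MC}_Y)$.

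To remove the residual $\log(1/\mu_{\min})$ I would invoke the lower bound $\mu_{\min}\ge \frac{\omega'(1-\theta)}{(1-\theta\omega')d}$ guaranteed by Theorem \ref{thm:linear_SA}; at $\theta=1/2$ this gives $\mu_{\min}\ge \frac{\omega'}{(2-\omega')d}\ge \frac{\omega'}{2d}$, hence $\log(1/\mu_{\min})\le \log(2d/\omega')$. Replacing $4e\log(1/\mu_{\min})$ by $4e\log(2d/\omega')$ and collecting the numerical factor $456\cdot 4 = 1824$ yields precisely the claimed variance term $\frac{1824e\log(2d/\omega')\,\Tilde{c}_2(A_{\max}+1)^2}{\omega'}\,\alpha\, t_\alpha(\mathcal{MC}_Y)$, which completes the derivation.

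The only genuine point requiring care is the stepsize requirement, which is hidden inside the phrase ``under the same assumptions.'' Theorem \ref{thm:linear_SA} demands $\alpha\, t_\alpha(\mathcal{MC}_Y)\le \frac{\theta\omega'\mu_{\min}^{2/p}}{228p(A_{\max}+1)^2}$; after substituting $\theta=1/2$, $p=4\log(1/\mu_{\min})$, and $\mu_{\min}^{2/p}=e^{-1/2}$, one must again upper-bound $\log(1/\mu_{\min})$ by $\log(2d/\omega')$ to convert this into a clean, $\mu$-free sufficient condition of the form $\alpha\, t_\alpha(\mathcal{MC}_Y)\le \frac{c\,\omega'}{\log(2d/\omega')(A_{\max}+1)^2}$ for a numerical constant $c$, which I would state explicitly. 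I expect this constant-chasing to be the main (though routine) obstacle, together with the trivial admissibility check that $p=4\log(1/\mu_{\min})\ge 1$, which holds automatically since $\mu_{\min}\le 1/d \le e^{-1/4}$ for $d\ge 2$.
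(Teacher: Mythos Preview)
Your proposal is correct and follows exactly the same approach as the paper's proof of Theorem \ref{thm:main} from Theorem \ref{thm:more_general} (Appendix \ref{pf:thm:main}): bound the $\ell_\infty$-norm by the $\ell_p$-norm, invoke the preceding $\ell_p$ corollary, set $\theta=1/2$ and $p=4\log(1/\mu_{\min})$, simplify $\mu_{\min}^{2/p}=e^{-1/2}$ and $p/\mu_{\min}^{4/p}=4e\log(1/\mu_{\min})$, and then use the lower bound on $\mu_{\min}$ to replace $\log(1/\mu_{\min})$ by $\log(2d/\omega')$. Your numerical bookkeeping ($228\cdot 2\cdot 4=1824$) and the handling of the stepsize condition are also in line with what the paper does.
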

An alternative approach of studying linear SA algorithm with Markovian noise was provided in \cite{srikant2019finite}, which established convergence bounds for linear stochastic approximation when the matrix $\bar{A}-I$ is Hurwitz. The bounds in \cite{srikant2019finite} are in terms of solution of the Lyapunov equation:
\begin{align}\label{eq:Lyapunov}
    (\bar{A}-I)^\top\Sigma+\Sigma(\bar{A}-I)+I=0.
\end{align} 
In particular, the finite-sample bounds depend on the ratio between maximum eigenvalue $\lambda_{\max}$ and minimum eigenvalue $\lambda_{\min}$ of the solution $\Sigma$. In general, it is not clear how this ratio can be evaluated and it is unknown. In the context of off-policy TD algorithms, the dependence on the contraction factor, the variance and the sizes of state-action spaces are hidden in this ratio, and so the trade-offs that we presented in Section \ref{sec:RL} are not evident. 

In our approach, we overcome this challenge by finding a $\Sigma$ that solves the  inequality,
\begin{align} \label{eq:Lyapunov_inequality}
    (\bar{A}-I)^\top\Sigma+\Sigma(\bar{A}-I)+\eta\Sigma\preceq \footnote{We write $M_1 \succeq (\preceq)M_2$ to mean that $M_1-M_2$ ($M_2-M_1$) is a positive semi-definite matrix.} \;0 
\end{align}  instead of the Lyapunov equation. Here,  $\eta>0$ is a constant. It turns out that this is sufficient to obtain finite-sample bounds. We find such a $\Sigma$ by essentially establishing the weighted $\ell_p$-norm contraction property, in particular, the weighted $\ell_2$-norm contraction property. To see this, observe that when the matrix $\bar{A}$ is a substochastic matrix with a positive modulus $\omega'\in (0,1)$, Theorem \ref{thm:contraction} implies that $\bar{A}^\top N \bar{A}\preceq (1-\omega') N$, where $N=\text{diag}(\mu)$. Therefore, we have
\begin{center}
    $(1-\omega') N\succeq (\bar{A}-I+I)^\top N(\bar{A}-I+I)
    \succeq (\bar{A}-I)^\top N+N(\bar{A}-I)+N$,
\end{center}
which implies $(\bar{A}-I)^\top N+N(\bar{A}-I)+\omega' N\preceq 0$.
Thus, $\Sigma = N$ satisfies \eqref{eq:Lyapunov_inequality} with  $\eta = \omega'$. When compared to the approach in \cite{srikant2019finite}, we trade-off the unknown eigenvalues of the solution to the Lyapunov equation for the weight vector $\mu$. Since we are able to establish a lower bound on $\mu$, we can obtain a sample complexity bound that doesn't involve any unknowns (except $\mathcal{K}_{SA,\min}$, which is inevitable in both approaches). As a result, we are able to fully characterize the impact of the generalized importance sampling ratios in Corollary \ref{co:sc}, and provide insights about the bias-variance trade-offs in multi-step off-policy TD-learning algorithms.

In this section, we present finite-sample bounds for linear stochastic approximation involving a substochastic matrix, whereas \cite{srikant2019finite} considers Hurwitz matrices. However, these results are equivalent  because of the following lemma. 
\begin{lemma}
\begin{enumerate}[(1)]
    \item Suppose $M\in\mathbb{R}^{d\times d}$ is a substochastic matrix with a positive modulus, then the matrix $M'$ defined by $M'=M-I$ is Hurwitz.
    \item Suppose $M'\in\mathbb{R}^{d\times d}$ is a Hurwitz matrix, then there exists $\phi \in (0,1)$ such that the matrix $M=\phi M'+I$ is a contraction mapping with respect to a norm induced by an inner product.
\end{enumerate}
\end{lemma}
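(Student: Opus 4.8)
The plan is to treat the two implications separately, using an elementary spectral-radius bound for part (1) and the classical Lyapunov characterization of Hurwitz matrices for part (2).

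For part (1), I would first invoke the Remark following the definition of substochastic matrices: a substochastic matrix $M$ with modulus $\beta>0$ is non-negative and satisfies $\|M\|_\infty\leq 1-\beta$. Since the spectral radius is dominated by any induced matrix norm, this yields $\rho(M)\leq\|M\|_\infty\leq 1-\beta<1$. The eigenvalues of $M'=M-I$ are exactly $\{\lambda-1:\lambda\in\mathrm{spec}(M)\}$, so for any eigenvalue $\lambda$ of $M$ we have $\mathrm{Re}(\lambda-1)=\mathrm{Re}(\lambda)-1\leq|\lambda|-1\leq\rho(M)-1\leq-\beta<0$. Hence every eigenvalue of $M'$ has strictly negative real part, i.e., $M'$ is Hurwitz. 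This direction is immediate once the $\ell_\infty$-norm bound is in hand.

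For part (2), since $M'$ is Hurwitz, the Lyapunov theorem guarantees a unique symmetric positive definite matrix $P$ solving $M'^\top P+PM'=-I$. I would use $P$ to define the inner-product norm $\|x\|_P=(x^\top Px)^{1/2}$ and then show that $M=\phi M'+I$ contracts in this norm for sufficiently small $\phi$. Expanding and applying the Lyapunov equation gives
\[
M^\top P M=(I+\phi M'^\top)P(I+\phi M')=P+\phi(M'^\top P+PM')+\phi^2 M'^\top P M'=P-\phi I+\phi^2 M'^\top P M'.
\]
Writing $S=M'^\top P M'\succeq 0$, this reads $M^\top P M=P-\phi(I-\phi S)$. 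Choosing any $\phi\in(0,\min\{1,1/(\lambda_{\max}(S)+1)\})$ forces $\phi S\prec I$, hence $I-\phi S\succ 0$ and therefore $M^\top P M\prec P$. From this strict inequality I would extract a uniform contraction factor: the largest generalized eigenvalue $\gamma^2$ of the pencil $(M^\top P M,P)$ is strictly below one, so $M^\top P M\preceq\gamma^2 P$, equivalently $\|Mx\|_P\leq\gamma\|x\|_P$ for all $x$ with $\gamma<1$. Thus $M$ is a contraction with respect to the inner-product norm $\|\cdot\|_P$, as claimed.

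Both directions are essentially standard, and the only genuinely nontrivial ingredient is the existence of a positive definite solution to the Lyapunov equation, which is precisely the classical Lyapunov criterion for Hurwitz stability. The one point needing care is upgrading the strict matrix inequality $M^\top P M\prec P$ (which only yields the pointwise strict bound $\|Mx\|_P<\|x\|_P$ for $x\neq 0$) to a \emph{uniform} contraction factor $\gamma<1$; in finite dimensions this is handled by the compactness / generalized-eigenvalue argument indicated above, and I expect this to be the main (though minor) obstacle.
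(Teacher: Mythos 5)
Your proof is correct, and for part (2) it takes a genuinely different route from the paper. The paper's argument is spectral and geometric: since $M'$ is Hurwitz, one can pick $\phi\in(0,1)$ small enough that every eigenvalue of $\phi M'$ lies in the open unit ball centered at $(-1,0)$, so that $M=\phi M'+I$ has spectral radius strictly less than one; the paper then asserts that this ``implies the desired contraction property,'' implicitly invoking the classical fact that a matrix with spectral radius below one is a contraction in some inner-product norm (e.g., via the discrete-time Stein equation $M^\top P M-P=-I$). You instead work with the continuous-time Lyapunov equation $M'^\top P+PM'=-I$, and your computation $M^\top PM=P-\phi\bigl(I-\phi S\bigr)$ with $S=M'^\top PM'\succeq 0$ is exactly right: the choice $\phi<\min\{1,1/(\lambda_{\max}(S)+1)\}$ gives $M^\top PM\prec P$, and your upgrade to a uniform factor $\gamma<1$ via the generalized eigenvalues of the pencil $(M^\top PM,P)$ (equivalently, $\gamma^2=\lambda_{\max}(P^{-1/2}M^\top PMP^{-1/2})<1$ by compactness) is the careful step the paper's sketch leaves unstated. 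What your approach buys is full explicitness: the inner-product norm is constructed ($\|\cdot\|_P$), the admissible range of $\phi$ is quantified in terms of $\lambda_{\max}(M'^\top PM')$, and no unproven black-box lemma remains. What the paper's sketch buys is brevity and a sharper picture of how large $\phi$ may be taken, since the spectral condition $|\phi\lambda+1|<1$ for each eigenvalue $\lambda$ of $M'$ is exactly the necessary and sufficient constraint on the spectrum, whereas the Lyapunov route can yield a more conservative $\phi$. Your part (1), via $\rho(M)\leq\|M\|_\infty\leq 1-\beta$ and the shift of the spectrum by $-1$, is complete and fills in a proof the paper explicitly skips.
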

\begin{proof}
The proof of Part (1) is straight-forward and we skip it. Part (2) is also not challenging, and we present an overview of the argument. When $M'$ is Hurwitz, all its eigenvalues are located on the open left half of the complex plane. Therefore, there exists $\phi\in (0,1)$ such that the eigenvalues of $M'$ are within the unit ball centered at $(-1,0)$ of the complex plane. It follows that $M=\phi M'+I$ has eigenvalues located inside the unit ball centered at the origin of the complex plane. This implies the desired contraction property.
\end{proof}

\section{Technical Details in Section \ref{sec:variants_off_policy}}
\subsection{Proof of Theorem \ref{co:VanillaIS}}
Since Vanilla IS is a special case of Algorithm \ref{algorithm}, one can directly apply Theorem  \ref{thm:main} to obtain the finite-sample bound. However, there is one special property of Vanilla IS we can exploit to obtain a tighter finite-sample bound. In particular, consider Proposition \ref{prop:properties} (1) (a). In the case of Vanilla IS, the corresponding Lispchitz constant is $\frac{2}{\mu_{\min}^{1/p}} f(\gamma r_{\max})(\gamma r_{\max}+1)$. We next show that due to $c(s,a)=\rho(s,a)$ in Vanilla IS, we can use telescoping to improve the Lipschitz constant. Specifically, in Vanilla IS, for any $Q\in\mathbb{R}^{|\mathcal{S}||\mathcal{A}|}$, $y\in\mathcal{Y}$, and $(s,a)$, we have
\begin{align*}
    &[F(Q,y)](s,a)\\
    =\;&\mathbb{I}_{\{(s_0,a_0)=(s,a)\}}\sum_{i=0}^{n-1}\gamma^i\prod_{j=1}^ic(s_j,a_j)(\mathcal{R}(s_i,a_i)+\gamma c(s_{i+1},a_{i+1})Q(s_{i+1},a_{i+1})-Q(s_i,a_i))+Q(s,a)\\
    =\;&\mathbb{I}_{\{(s_0,a_0)=(s,a)\}}\sum_{i=0}^{n-1}\gamma^i\prod_{j=1}^ic(s_j,a_j)\mathcal{R}(s_i,a_i)+\mathbb{I}_{\{(s_0,a_0)=(s,a)\}}\sum_{i=0}^{n-1}\gamma^{i+1}\prod_{j=1}^{i+1}c(s_j,a_j)Q(s_{i+1},a_{i+1})\\
    &-\mathbb{I}_{\{(s_0,a_0)=(s,a)\}}\sum_{i=0}^{n-1}\gamma^i\prod_{j=1}^ic(s_j,a_j)Q(s_i,a_i)+Q(s,a)\\
    =\;&\mathbb{I}_{\{(s_0,a_0)=(s,a)\}}\sum_{i=0}^{n-1}\gamma^i\prod_{j=1}^ic(s_j,a_j)\mathcal{R}(s_i,a_i)+\mathbb{I}_{\{(s_0,a_0)=(s,a)\}}\sum_{i=1}^{n}\gamma^{i}\prod_{j=1}^{i}c(s_j,a_j)Q(s_i,a_i)\\
    &-\mathbb{I}_{\{(s_0,a_0)=(s,a)\}}\sum_{i=0}^{n-1}\gamma^i\prod_{j=1}^ic(s_j,a_j)Q(s_i,a_i)+Q(s,a)\\
    =\;&\mathbb{I}_{\{(s_0,a_0)=(s,a)\}}\sum_{i=0}^{n-1}\gamma^i\prod_{j=1}^ic(s_j,a_j)\mathcal{R}(s_i,a_i)+\mathbb{I}_{\{(s_0,a_0)=(s,a)\}}\gamma^n\prod_{j=1}^{n}c(s_j,a_j)Q(s_n,a_n)\\
    &+(1-\mathbb{I}_{\{(s_0,a_0)=(s,a)\}})Q(s,a).
\end{align*}

Therefore, we have for any $Q_1,Q_2\in\mathbb{R}^{|\mathcal{S}||\mathcal{A}|}$, and $y\in\mathcal{Y}$:
\begin{align*}
    &\|F(Q_1,y)-F(Q_2,y)\|_{\mu,p}\\
    \leq\;& \left[\sum_{s,a}\mu(s,a)\left|\mathbb{I}_{\{(s_0,a_0)=(s,a)\}}\gamma^n\prod_{j=1}^{n}c(s_j,a_j)(Q_1(s_n,a_n)-Q_2(s_n,a_n))\right|^p\right]^{1/p}+\|Q_1-Q_2\|_{\mu,p}\\
    \leq\;& \left[\sum_{s,a}\mu(s,a)\left|(\gamma r_{\max})^n\|Q_1-Q_2\|_\infty\right|^p\right]^{1/p}+\|Q_1-Q_2\|_{\mu,p}\\
    \leq \;&(\gamma r_{\max})^n\|Q_1-Q_2\|_\infty+\|Q_1-Q_2\|_{\mu,p}\\
    \leq \;&\frac{(\gamma r_{\max})^n+1}{\mu_{\min}^{1/p}}\|Q_1-Q_2\|_{\mu,p}.
\end{align*}
Using this improved Lipschitz constant and we obtain Theorem \ref{co:VanillaIS}, where the rest of the proof is identical to that of Theorem \ref{thm:main}. 

\subsection{Comparison to the $n$-Step TD-Learning Results in \cite{chen2021lyapunov}}\label{ap:n-step TD}

The sample complexity of on-policy $n$-step TD-learning provided in \cite[Corollary 3.3.1.]{chen2021lyapunov} is
\begin{align}\label{eq:sc:chen}
		\tilde{\mathcal{O}}\left(\frac{n\log^2(1/\epsilon)}{\epsilon^2\mathcal{K}_{S,\min}^2(1-\gamma)^2(1-\gamma^n)^2}\right)\tilde{\mathcal{O}}(|\mathcal{S}|^{1/2}).
\end{align}
In this work, by setting $\pi_b=\pi$, Theorem \ref{co:VanillaIS} implies a sample complexity of
\begin{align}\label{eq:sc:this}
    \Tilde{\mathcal{O}}\left(\frac{n \log^2(1/\epsilon)}{\epsilon^2\mathcal{K}_{SA,\min}^2(1-\gamma^n )^2(1-\gamma)^2}\right).
\end{align}
These two results have the same dependency on $\epsilon$, $n$, and $1/(1-\gamma)$, but have two differences. First is that in Eq. (\ref{eq:sc:chen}) there is $\mathcal{K}_{S,\min}^{-2}$ while we have $\mathcal{K}_{SA,\min}^{-2}$. This is because we are evaluating the $Q$-function while \cite{chen2021lyapunov} studies policy evaluation for the $V$-function. Another difference is that in Eq. (\ref{eq:sc:chen}) there is an additional factor of $|\mathcal{S}|^{1/2}$. This is because we find the sample complexity to obtain $\mathbb{E}[\|\cdot\|_\infty]\leq \epsilon$ while \cite{chen2021lyapunov} finds the sample complexity to achieve $\mathbb{E}[\|\cdot\|_2]\leq \epsilon$.

\subsection{Proof of Theorems \ref{co:Qpi} to \ref{co:Qtrace}}
The results are obtained by directly applying Theorem \ref{thm:main}. 

\subsection{Computing the Sample Complexity of $Q$-Trace from \citep{chen2021finite}}
To compute the sample complexity of the $Q$-trace algorithm from \citep{chen2021finite}, we will adopt the notation from this paper for consistency. In view of \cite[Theorem 2.1]{chen2021finite}, to obtain $\mathbb{E}[\|Q_k-Q^{\pi,\rho}\|_\infty]\leq 
\epsilon$, we need
\begin{align*}
    \alpha &\sim \mathcal{O}\left(\frac{\epsilon^2}{\log(1/\epsilon)}\right)\Tilde{\mathcal{O}}\left(\frac{\mathcal{K}_{SA,\min}^2f(\gamma D_{c,\min})^2(1-\gamma D_{\rho,\max})^2(1-\gamma)^2}{n f(\gamma \bar{c})^2(\gamma \bar{\rho}+1)^2}\right),
\end{align*}
which implies
\begin{align*}
    k&\sim \mathcal{O}\left(\frac{\log(1/\epsilon)^2}{\epsilon^2}\right)\Tilde{\mathcal{O}}\left(\frac{n f(\gamma \bar{c})^2(\gamma \bar{\rho}+1)^2}{\mathcal{K}_{SA,\min}^3f(\gamma D_{c,\min})^3(1-\gamma D_{\rho,\max})^3(1-\gamma)^2}\right)\\
    &=\Tilde{\mathcal{O}}\left(\frac{\log^2(1/\epsilon)n f(\gamma \bar{c})^2(\gamma \bar{\rho}+1)^2}{\epsilon^2\mathcal{K}_{SA,\min}^3f(\gamma D_{c,\min})^3(1-\gamma D_{\rho,\max})^3(1-\gamma)^2}\right).
\end{align*}
\end{document}